\newtheorem{theorem}{Theorem}
\newtheorem{assumption}{Assumption}
\newtheorem{lemma}{Lemma}
\newtheorem{remark}{Remark}
\newtheorem{definition}{Definition}
\def\Var{\operatorname{Var}}
\def\diag{\operatorname{diag}}
\DeclareMathOperator*{\argmax}{argmax}
\DeclareMathOperator*{\argmin}{argmin}
\makeatletter \@addtoreset{equation}{section} \makeatother
\author{Jingfu Peng}
\author{Yuhong Yang}
\affil{Yau Mathematical Sciences Center, Tsinghua University}
\date{\today}  
\title{Adversarial learning for nonparametric regression: Minimax rate and adaptive estimation}
\begin{document}

\maketitle

\begin{abstract}
Despite tremendous advancements of machine learning models and algorithms in various application domains, they are known to be vulnerable to subtle, natural or intentionally crafted perturbations in future input data, known as adversarial attacks. While numerous adversarial learning methods have been proposed, fundamental questions about their statistical optimality in robust loss remain largely unanswered. In particular, the minimax rate of convergence and the construction of rate-optimal estimators under future $X$-attacks are yet to be worked out.

In this paper, we address this issue in the context of nonparametric regression, under suitable assumptions on the smoothness of the regression function and the geometric structure of the input perturbation set. We first establish the minimax rate of convergence under adversarial $L_q$-risks with $1 \leq q \leq \infty$ and propose a piecewise local polynomial estimator that achieves the minimax optimality. The established minimax rate elucidates how the smoothness level and perturbation magnitude affect the fundamental limit of adversarial learning under future $X$-attacks. Furthermore, we construct a data-driven adaptive estimator that is shown to achieve, within a logarithmic factor, the optimal rate across a broad scale of nonparametric and adversarial classes.
\end{abstract}

\textbf{KEY WORDS: Nonparametric regression; Adversarial robustness; Minimax risk; Local polynomial estimation; Adaptive estimation.
}

\tableofcontents

\addtocontents{toc}{\setcounter{tocdepth}{2}}
\section{Introduction}\label{sec:intro}

Over the past decade, significant breakthroughs in machine/deep learning models have enabled the large-scale deployment of these models across a wide range of domains, including autonomous driving, medical diagnosis, and robotics. Many of these applications are safety-critical, meaning that the outputs made by machine learning models can directly impact human well-being. Currently, one of the most pressing threats to the safety of machine learning models is adversarial attacks on future inputs, where even seemingly small, imperceptible worst-case perturbations to the input data can drastically degrade performance \citep{Biggio2013, Szegedy2014Intriguing, Goodfellow2014Explaining}. For example, in image classification, modifying just a single pixel can cause deep neural networks to misclassify \citep{Su2019}. In traffic sign detection, physical perturbations such as varying distances and angles of the viewing camera, or putting a sticker on the traffic sign can compromise the performance of a deep learning system \citep{Eykholt_2018_CVPR}. These security concerns have led to the emergence of adversarial machine learning, focusing around analyzing attacks against machine learning algorithms and developing defenses against them. Adversarial training algorithms \citep[see, e.g.,][]{Cohen2019, Goodfellow2014Explaining, Papernot2016, Madry2018, Wong2018, Zhang2019}, which typically involve minimizing adversarial empirical losses, have proven effective in enhancing the robustness of machine learning models against a variety of adversarial perturbations.

Besides the remarkable progress in practical applications, recent research has advanced the theoretical understanding of adversarial machine learning. The existence and explicit forms of the ideal adversarial estimators and risks have been explored by \citep{Bhagoji2019, Pydi2020, Awasthi2021, Bungert2023, Frank2024}. The generalization errors of adversarially trained estimators have been studied, with a main focus on the Rademacher complexity of the adversarial loss function class \citep{khim2018adversarial, Yin2019, Tu2019, Awasthi2021, Mustafa2022}. Studies have also examined the tradeoff between robust and standard generalization errors in parametric models \citep{Dobriban2023, Javanmard2020, Javanmard2022, Schmidt2018, raghunathan2019adversarial, Tsipras2019, Zhang2019, Min2021}. More recently, \cite{Hassani2024} investigated the effects of overparameterization on adversarial robustness, showing that under certain conditions, overparameterization can hurt robust generalization. For a Bayesian perspective of adversarial learning, see \cite{Gallego2024} and the references therein.

In the aforementioned work, adversarial learning has been investigated from two primary perspectives: (i) a pointwise risk approach with respect to a given parametric model, which includes the optimal adversarial risk evaluation and the comparison between standard and adversarial risks; and (ii) an analysis of the properties of some specific training procedures under future $X$-attacks, such as adversarial training algorithms. Given the growing importance of adversarial machine learning and the recent surge of works on proposing new defenses, it is crucial to understand adversarial robustness from another perspective: What are the fundamental limits for any estimation method under adversarial attacks, and how to construct estimation procedures to achieve these limits?

In this paper, we aim to answer the aforementioned questions by determining the minimax rate of convergence under adversarial risks. To the best of our knowledge, minimax risk theory has received relatively little attention in adversarial learning, with only a few exceptions. For linear classification, the minimax rates for adversarial regret (i.e., the excess adversarial risk relative to the ideal adversarial risk across all linear classifiers) were derived by \cite{Schmidt2018} and \cite{Dan2020} for Bernoulli and Gaussian mixture models, respectively. Similar results were obtained by \cite{Dohmatob2024} under less restrictive assumptions on the data distribution. In the case of linear regression with Gaussian covariates, \cite{Xing2021} established the minimax rate under the Euclidean-norm attack, showing that the optimal rate can be achieved by a ridge regression estimator, where the ridge parameter depends on the attack's magnitude. In nonparametric regression, \cite{Peng2024} established the minimax rate of convergence for the adversarial sup-norm risk under the specific $X$-attack that perturbs every $X$ by a worst-case element in a same set. However, the results under the adversarial $L_q$-risk ($1\leq q <\infty$) with general perturbation set remain unexplored.

\subsection{Contributions}

In this paper, we focus on nonparametric regression problems where the true regression function belongs to a H\"{o}lder-smooth class, and we derive the minimax rates of convergence under adversarial $L_q$-risk for $1 \leq q \leq \infty$. We consider a general adversarial framework, in which the adversary selects a perturbation for each input from a class of perturbation sets. The geometric requirements on these perturbation sets are quite mild, which encompasses the commonly used bounded $\ell_p$-norm attacks ($0 < p \leq \infty$) as special cases. Importantly, our theory provides a general characterization of adversarial robustness, extending beyond the specific parametric models or attack types studied in the existing literature. By analyzing this general setting, we identify the intrinsic factors that influence adversarial robustness and propose theoretically optimal procedures to defend against the future $X$-attacks.

We show that the minimax rate of convergence under adversarial $L_q$-risk can be expressed as the sum of two key terms. The first is the minimax rate under the standard risk, which depends on characteristics of the true data distribution, such as the smoothness of the regression function and the dimensionality of the input. The second term captures the impact of the adversarial attack, which is influenced by the maximal perturbation magnitude and the smoothness of the target regression function. This two-term expression implies a phase transition in the minimax risk. Specifically, there exists a critical value for the attack magnitude, which approaches zero at a rate determined by the smoothness of the function class. If the attack magnitude is below this critical value, the adversarial minimax risk remains the same order as the standard risk. However, when the attack magnitude exceeds this threshold, it significantly increases the minimax risk and hence triggers a successful attack.

We propose a defense strategy based on a piecewise local polynomial (PP) method with regularization. Our approach extends the classical local polynomial (LP) methods studied in \cite{Stone1982global, Fan1995jrssb, Tsybakov2009book} to the adversarial setting. And the regularization technique we employ is of independent interest, as it guarantees the optimal convergence in expectation within the random design regression setting. We demonstrate that when the attack magnitude is small, the proposed PP estimator with the standard bandwidth choice can achieve the optimal rates under adversarial $X$-attacks. In contrast, as the attack strengthens, the bandwidth needs to be adjusted accordingly to enhance robustness so as to maintain the optimal adversarial rate. Since the PP estimator depends on the knowledge of both the attack magnitude and the smoothness level, we further propose an adaptive estimation method based on Lepski's method \citep[see, e.g.,][]{Lepski1991, Lepski1997Inhomogeneous, Lepski1997pointwise}. It achieves the minimax rate, up to at most an extra logarithmic factor, over a broad scale of function and adversarial classes.

\subsection{Other related work}

We briefly review the literature on other adversarial models along two lines.

One focuses on distribution shifts, where an adversary can alter the data distribution for future observations, making it different from the training data distribution. The trade-off between adversarial risk and standard risk in this context has been examined by \cite{Mehrabi21}. To defend against such attacks, distributionally robust optimization (DRO) have been considered, which optimizes the empirical worst-case risk over a set of possible distributions \citep[see, e.g.,][and references therein]{Ben-Tal:RobustOptimization, Shapiro2017, Sinha2018, Duchi2021, Duchi2023}. Interestingly, adversarial training can be viewed as a special case of DRO, where the set of distributions is defined by a Wasserstein distance metric \citep{staib2017distributionally}. This connection has been leveraged in works such as \cite{Lee2018, Tu2019, Liu2024} to derive upper bounds on the risk of adversarially trained estimators. However, there remains a gap due to the lack of lower bounds necessary to determine the minimax risk.

In the second line of work, adversarial attacks are on the training dataset, known as poisoning attacks \citep{Biggio2012}. In this context, \cite{Lai2020} proposed a measure for adversarial robustness, and developed an optimal estimator to minimize this measure. In a similar setting where a subset of the training samples can be arbitrarily modified by an attacker, \cite{Zhao_Wan_2024} established the minimax rates for nonparametric Lipschitz regression functions under both $L_2$ and $L_{\infty}$ losses. However, poisoning attacks differ from the adversarial $X$-attack considered in the present paper, and their results and proof techniques may not be applied to establish the minimax rate we are concerned with.

\subsection{Organization}

The rest of the paper is organized as follows. In Section~\ref{sec:setup}, we introduce some basic notations and
definitions, such as H\"{o}lder-smooth function class, adversarial $X$-attack, and adversarial $L_q$-risk, and set up the nonparametric regression problem. In Section~\ref{sec:minimax_rate}, we introduce our main result on the minimax rate, describe a construction of the rate-optimal estimator based on a PP estimator
and propose a data-driven adaptive estimator. A roadmap for the proof of our minimax lower bound is provided in Section~\ref{sec:roadmap}. We then conclude the paper with a discussion. Detailed proofs are provided in the Appendix.

\section{Problem setup}\label{sec:setup}

We first introduce notations that will be used throughout the paper and then present the nonparametric regression framework. Next, the adversarial risks are defined and compared with related ones in the literature. After that, we introduce the definition of the H\"{o}lder-smooth function class and provide a general construction of the adversarial perturbation set. We then formally define the minimax risk for the regression estimation problem under adversarial attacks.

\subsection{Notation}\label{sec:setup1}

We use the symbols $C$, $C'$, $C_1$ to represent positive constants that do not depend on the sample size $n$, and which may vary from line to line. For any positive sequences $a_n$ and $b_n$, we write $a_n = O(b_n)$ or $a_n \lesssim b_n$ if there exist constants $C > 0$ and $N > 0$ such that for all $n \geq N$, $a_n \leq C b_n$. If both $a_n = O(b_n)$ and $b_n = O(a_n)$ hold, we denote this as $a_n \asymp b_n$. We write $a_n = o(b_n)$ and $a_n \sim b_n$ for $\lim_{n \to \infty}a_n/b_n=0$ and $\lim_{n \to \infty}a_n/b_n=1$, respectively. For any $\beta \in \mathbb{R}$, we define $\lfloor \beta \rfloor$ as the greatest integer strictly less than $\beta$. The support of a probability measure $\mu$, denoted by $\operatorname{supp}(\mu)$, is defined as the complement of the largest open set on which $\mu$ is zero. Let $\lambda$ denote the Lebesgue measure. We use $1_{S}$ to represent the indicator function of the set $S$, which equals $1$ if $x \in S$ and $0$ otherwise. The notation $\mathrm{Card}(S)$ stands for the cardinality of the set $S$.

For any multi-index $s=\left(s_1, \ldots, s_d\right) \in \mathbb{N}^d$ and any vector $x=\left(x_1, \ldots, x_d\right)^{\top} \in \mathbb{R}^d$, we define $|s|=\sum_{i=1}^d s_i$, $s!=s_{1}!\cdots s_{d}!$, and $x^s=x_1^{s_1} \cdots x_d^{s_d}$. Let $D^s$ denote the differential operator $D^s \triangleq \frac{\partial^{s_1+\cdots+s_d}}{\partial x_1^{s_1} \cdots \partial x_d^{s_d}}$. The notation $\|x \|_p \triangleq (\sum_{i=1}^{d} |x_i|^p)^{1/p}$ represents the $\ell_p$-norm of the vector $x$, with $\|x\|$ reserved for the $\ell_2$-norm (i.e., suppress the subscript when $p=2$). For $0 < p \leq \infty$, we define $B_p(x, r) \triangleq \{z : \|z - x\|_p \leq r\}$ as the $\ell_p$-ball centered at $x$ with radius $r$. When $p=2$, we simplify this notation to $B(x, r) \triangleq B_2(x, r)$. Finally, $I_d$ denotes the identity matrix in $\mathbb{R}^{d \times d}$, $\mathrm{diag}(\alpha) \in \mathbb{R}^{d \times d}$ denotes the diagonal matrix whose diagonal entries equal to those in the vector $\alpha \in \mathbb{R}^{d}$, and $\lambda_{\min}(B)$ represents the smallest eigenvalue of any square matrix $B$.

\subsection{Model and data}\label{sec:setup2}

Let $(X, Y)$ be a random pair in $\mathbb{R}^d \times \mathbb{R}$ with joint distribution $\mathbb{P}_{(X, Y)}$, and assume that we are given a sample $Z_n \triangleq \{ (X_1, Y_1), \ldots , (X_n, Y_n) \}$, where $(X_i, Y_i)$ are independent copies of $(X, Y)$. We denote by $\mathbb{P}_X$ the marginal distribution of $X$, $\mathbb{P}_{\otimes^n} \triangleq \prod_{i=1}^{n}\mathbb{P}_{(X_i,Y_i)}$ the product probability measure of the sample, and $\mathbb{P} \triangleq \mathbb{P}_{(X,Y)} \times \mathbb{P}_{\otimes^n}$. Define $\mathbb{E}[\cdot]$, $\mathbb{E}_X[\cdot]$, and $\mathbb{E}_{\otimes^n}[\cdot]$ the expectations with respect to the probability measures $\mathbb{P}$, $\mathbb{P}_X$, and $\mathbb{P}_{\otimes^n}$, respectively.

Let $\Omega \triangleq \operatorname{supp}(\mathbb{P}_X)$ denote the support of $\mathbb{P}_X$. Suppose that for all $x \in \Omega$, the regression function $f(x) = \mathbb{E}(Y | X=x )$ exists and is finite. Set $\xi \triangleq Y - \mathbb{E}(Y | X )$. Then the above data generating process can be equivalently written as
\begin{equation}\label{eq:model}
  Y_i=f(X_i)+\xi_i,\quad i = 1,\ldots,n,
\end{equation}
where the random error terms satisfy $\mathbb{E}(\xi|X)=0$ almost surely. The goal of a regression procedure is to construct an estimator $\hat{f}$ of $f$ based on the observation $Z_n$, where $\hat{f}(x)=\hat{f}(x,Z_n)$ is a measurable function mapping each $x \in \Omega$ to $\mathbb{R}$.

\subsection{Adversarial $L_q$-risks}\label{subsect:adv-risk}

In this paper, we set up the regression estimation problem in the presence of the adversarial $X$-attacks in prediction, where the adversary presents any estimator with adversarially modified inputs that may damage the performance of the estimator. This $X$-attack in prediction differs from training data attacks that maliciously modify the training data $Z_n$ to hurt the performance of the resulting estimator.
Let $A:\Omega \to \mathcal{B}$ (the set of all measurable subsets of $\Omega$) denote a future $X$-attack, i.e., given any new input $x \in \Omega$, the adversary selects a perturbed input $x' \in A(x)$, where $A(x) \in \mathcal{B}$ contains all the possible adversarial examples for the point $x$. The adversary then presents $x'$ to the estimator $\hat{f}$, which returns $\hat{f}(x')$ as the output. Consequently, the worst-case performance of $\hat{f}$ under the adversarial perturbation is naturally measured by the adversarial $L_q$-risk, defined as:
\begin{equation}\label{eq:adv_risk}
  R_{A,q}(\hat{f}, f) \triangleq
    \left\{\begin{array}{ll}
\mathbb{E}\sup_{X' \in A(X)}\left| \hat{f}(X') - f(X) \right|^q &\quad  1 \leq q < \infty, \\
\mathbb{E}_{\otimes^n}\sup_{x \in \Omega}\sup_{x' \in A(x)}\left| \hat{f}(x') - f(x) \right| &\quad q = \infty.\\
    \end{array}\right.
\end{equation}
A representation example of $A$ is $A(x) =\Omega \cap B_p(x, r)$ with $0< p \leq \infty$. If $A$ is chosen as $A(x) = \{ x \}$, the adversarial risks (\ref{eq:adv_risk}) are returned to the standard $L_q$-risk $R_q(\hat{f}, f) \triangleq \mathbb{E}| \hat{f}(X) - f(X) |^q$ for $1 \leq q < \infty$ and the sup-norm risk $R_{\infty}(\hat{f}, f) \triangleq \mathbb{E}_{\otimes^n}\sup_{x \in \Omega}| \hat{f}(x) - f(x) |$, respectively.

The adversarial $L_q$-risk defined in (\ref{eq:adv_risk}) provides a general metric for assessing adversarial robustness in regression function estimation. In the special case with $q=2$ and $A(x)=\Omega \cap B(x,r)$, (\ref{eq:adv_risk}) has been adopted in the linear regression setting by \cite{Donhauser2021, Xing2021, Scetbon2023, hao2024surprising}. Additionally, it has been considered in \cite{Roth2020, Kumano2023} to examine the adversarial robustness of neural networks. It is worth mentioning that besides (\ref{eq:adv_risk}), there are several alternative measures of adversarial robustness, each emphasizing different aspects of the adversarial learning problem. Section~\ref{sec:discuss_adve_risk} presents a more detailed discussion of these measures, including comparisons and their relationships with (\ref{eq:adv_risk}). In the subsequent analysis, we focus on deriving both minimax upper and lower bounds for the adversarial risks defined in (\ref{eq:adv_risk}).

\subsubsection{Other performance measures}\label{sec:discuss_adve_risk}

As discussed in Section~\ref{subsec:other_1} of the Appendix, (\ref{eq:adv_risk}) implies a sensible assumption that the adversary operates the attack without access to the future $Y$ value. An alternative definition of adversarial risk considers the difference between $\hat{f}(X')$ and $Y$ \citep{Szegedy2014Intriguing, Goodfellow2014Explaining, Carlini2017Towards, Madry2018}, under the assumption that the adversary has knowledge of the true output $Y$. When the $L_1$-loss is adopted, it is readily seen that
\begin{equation}\label{eq:L_1_bound}
  \mathbb{E}_{(X,Y)}\max_{X' \in A(X)}| \hat{f}(X') - Y | \leq R_{A,1}(\hat{f}, f) + \mathbb{E}|\xi|.
\end{equation}
A similar upper bound in order holds for general $L_q$-losses; see Section~\ref{subsec:other_3} of the Appendix for detailed discussion. The inequality (\ref{eq:L_1_bound}) indicates that convergence in $R_{A,1}(\hat{f}, f)$ also guarantees robustness in $\mathbb{E}_{(X,Y)}\max_{X' \in A(X)}| \hat{f}(X') - Y |$, up to an additional term that accounts for the fluctuation of the random error.

Interestingly, our definition of adversarial risk is closely related to the formulation proposed by \cite{Zhang2019} that defines:
\begin{equation}\label{eq:risk_zhang}
  T_{\phi,t}(\hat{f})\triangleq\mathbb{E}[\{\phi( \hat{f}(X), Y ) + \sup_{X' \in A(X)}\phi( \hat{f}(X'), \hat{f}(X) )/t ]
\end{equation}
as the objective for adversarial training, where $\phi(\cdot,\cdot)$ is a loss function and $t$ is a positive constant that balances the regular prediction error and perturbation degradation. Optimizing the empirical counterpart of (\ref{eq:risk_zhang}) leads to the well-known \emph{TRADES} algorithm in \cite{Zhang2019}, which has been one of the most effective defense strategies in practical applications. When $\phi$ is the quadratic loss, $t=1$, and $\{x\} \subseteq A(x)$, we prove in Section~\ref{subsec:other_2} of the Appendix that
\begin{equation*}
[T_{\phi,t}(\hat{f}) - \mathbb{E}|\xi|^2]/5 \leq  R_{A,2}(\hat{f}, f) \leq 2 [T_{\phi,t}(\hat{f}) - \mathbb{E}|\xi|^2],
\end{equation*}
which shows that $R_{A,2}(\hat{f}, f)$ is equivalent to $T_{\phi,t}(\hat{f})- \mathbb{E}|\xi|^2$ up to a constant multiplier.

\subsection{Basic definitions}\label{sec:setup3}

We introduce the following definitions regarding the marginal distributions of $X$, the distributions of $\xi$ conditioned on $X$, the smoothness of the regression function $f$, and the geometric structure of the adversarial perturbation sets.

\begin{definition}[Sub-Gaussianity]\label{def:error_moment}

For a constant $\sigma \geq 0$, the class of sub-Gaussian random errors $\mathcal{E}(\sigma)$ with $\sigma \geq 0$ is defined as all random variables $\xi$ such that for any $t \in \mathbb{R}$ and $x \in \Omega$,
  \begin{equation*}
    \mathbb{E}\left[ \exp(t \xi) | X = x \right] \leq \exp\left(\frac{t^2\sigma^2}{2}\right), \quad \text{a.s.}
  \end{equation*}

\end{definition}

\begin{definition}[Bounded density]\label{def:strong_density}

For some constants $0 < \mu_{\min} < \mu_{\max} < \infty$, the marginal distribution $\mathbb{P}_X$ is said to satisfy the bounded density assumption if its support $\Omega$ is the compact set $[0, 1]^d$, and it admits a density $\mu$ with respect to the Lebesgue measure $\lambda$ on $\Omega$. The density $\mu$ satisfies the following condition:
\begin{equation}\label{eq:density_bounded}
  \mu_{\min}<\mu(x)<\mu_{\max}, \quad \forall x \in \Omega.
\end{equation}

\end{definition}
Definition~\ref{def:strong_density} restricts our analysis to compact supports and regularly sampled $X$ observations, which are central to statistical analyses \citep[see, e.g.,][]{Stone1982global, Mammen1999}. In Definition~\ref{def:strong_density}, we set $\Omega = [0, 1]^d$ for simplicity. This setting has been widely adopted in many nonparametric estimation problems \citep[see, e.g.,][]{Juditsky2009, Lepski2014single, Schmidt-Hieber2020, Cai2021}. We define $\mathcal{S}(\mu_{\min}, \mu_{\max})$ as the set of distributions supported on $[0, 1]^d$ that satisfy the bounded density assumption with parameters $\mu_{\min}$ and $\mu_{\max}$.

\begin{definition}[H\"{o}lder smoothness]\label{def:holder_smooth}

Let $\beta > 0$ and $C_\beta >0$. The $(\beta, C_{\beta})$-H\"{o}lder class of functions, denoted by $\mathcal{F}(\beta,C_{\beta})$, is defined as the set of functions $f: [0,1]^d \to \mathbb{R}$ with continuous partial derivatives of order $\lfloor \beta \rfloor$ that satisfy
\begin{equation}\label{eq:holder_smooth}
\max_{0 \leq |s| \leq \lfloor \beta \rfloor}\sup_{x \in [0,1]^d}\left| D^s f(x) \right| + \max_{|s| = \lfloor \beta \rfloor}\sup_{x,z \in [0,1]^d}
  \frac{\left|D^s f(x)-D^s f(z)\right|}{\|x-z\|^{\beta - \lfloor \beta \rfloor}} \leq C_{\beta}.
\end{equation}

\end{definition}
In this paper, we consider the nonparametric regression problem when the regression function $f$ belongs to a $\left(\beta, C_\beta\right)$-Hölder class, the marginal distribution $\mathbb{P}_{X}$ satisfies the bounded density assumption, and the random error term is sub-Gaussian. Putting Definitions~\ref{def:error_moment}--\ref{def:holder_smooth} together, we define the nonparametric distribution class:
\begin{equation*}
\begin{split}
   &\mathcal{P}\left(\beta, C_\beta, \mu_{\min}, \mu_{\max}, \sigma\right)\triangleq \left\{\mathbb{P}_{(X,Y)}:f \in \mathcal{F}(\beta,C_{\beta}),\mathbb{P}_X \in \mathcal{S}\left(\mu_{\min}, \mu_{\max}\right), \xi \in \mathcal{E}(\sigma)\right\} .
\end{split}
\end{equation*}
We may use the abbreviation $\mathcal{P}(\beta)$ to highlight the role of $\beta$ when the context is clear.

\begin{figure}[!t]
  \centering
  \includegraphics[width=3.5in]{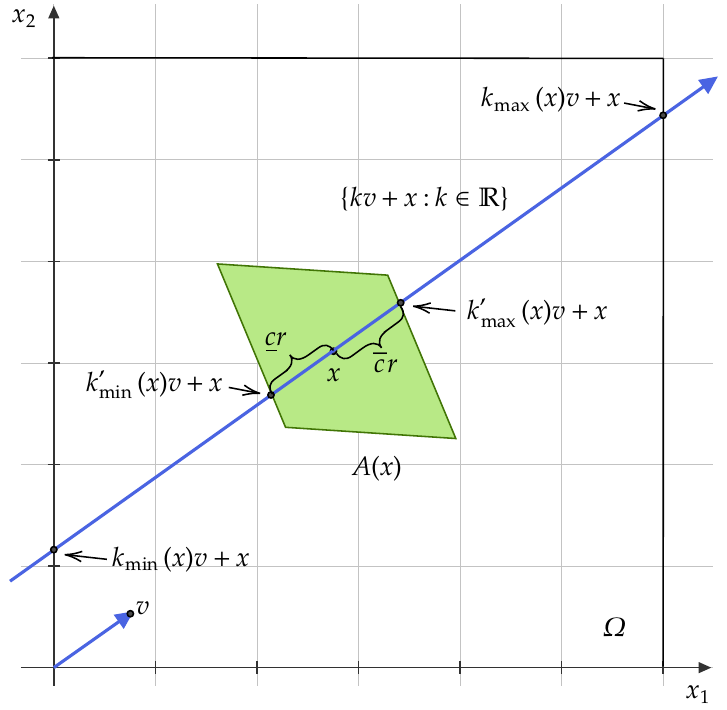}
  \caption{An illustration of the $(v,r)$-SODA satisfying the conditions in Definition~\ref{def:adversarial}. The shaded region $A(x)$ denotes the perturbation set corresponding to the input $x$.}
  \label{fig:adset}
\end{figure}

Next, we assume that for each $x \in \Omega$, the perturbation set satisfies $x \in A(x) \subseteq \Omega$. The magnitude of the adversarial attack is measured by
 \begin{equation*}
   r = r_n \triangleq \sup_{x \in \Omega,x' \in A(x)}\| x' - x \|.
 \end{equation*}
 We introduce the following definition to characterize a required geometric nature of the future $X$-attack $A$.
\begin{definition}[Shift-in-at-least-one-direction attack, SODA]\label{def:adversarial}
Let $v \in \mathbb{R}^d$ be a fixed unit vector. For any $x \in \Omega$, define $k_{\min}(x) \leq 0$ and $k_{\max}(x) \geq 0$ such that $\{k v+ x: k_{\min}(x) \leq k \leq k_{\max}(x)\} = \{k v+ x: k \in \mathbb{R}\} \cap \Omega $. Suppose there exist two constants $0\leq \underline{c},\bar{c} \leq 1$ such that the following conditions hold:
 \begin{description}
   \item[(i)] For each $x \in \Omega$, there exist $k_{\min}'(x),k_{\max}'(x) \in \mathbb{R}$ satisfying $\{k v+ x: k_{\min}'(x) \leq k \leq k_{\max}'(x)\} = \{k v+ x: k_{\min}(x) \leq k \leq k_{\max}(x)\} \cap A(x)$;
   \item[(ii)] Moreover, $k_{\min}'(x) = k_{\min}(x)  \vee (-\underline{c} r)$ and $k_{\max}'(x) = k_{\max}(x) \wedge (\bar{c} r)$.
 \end{description}
Such a future $X$-attack $A$ is called $(v, r)$-SODA, where $r$ measures the  perturbation magnitude and $v$ represents a principal direction of perturbations.
\end{definition}

Let $\mathcal{T}(r)$ denote the set of all the $(v, r)$-SODAs for which such a $v$ exists. We now provide some insight on Definition~\ref{def:adversarial}. The points $k_{\min}(x) v+ x$ and $k_{\max}(x) v+ x$ lie on the boundary of $\Omega$ and are defined as the endpoints of the line segment formed by intersecting the line $\{k v+ x: k \in \mathbb{R}\}$ with $\Omega$. Condition (i) above ensures that the intersection between $\{k v+ x: k_{\min}(x) \leq k \leq k_{\max}(x)\}$ and the perturbation set $A(x)$ is a line segment. Condition (ii) requires that this line segment has length on the order of at least $r$, except when $x$ is near the vertices of $\Omega$. Clearly, any $\ell_p$-ball attack $B_p(x,r) \cap \Omega$ satisfies Definition~\ref{def:adversarial}, with each axis serving as a principal direction. A general example of a $(v, r)$-SODA in two-dimensional domain is illustrated in Figure~\ref{fig:adset}.

In this paper, we study the adversarial learning for nonparametric regression through the minimax paradigm. Given the class $\mathcal{P}(\beta)$ of data distributions $\mathbb{P}_{(X, Y)}$ and an adversarial nature $A$ in the class $\mathcal{T}(r)$, we define the adversarial minimax risk as
\begin{equation*}
  V_{A,q}(\beta,r) \triangleq \inf_{\hat{f}}\sup_{\mathbb{P}_{(X,Y)} \in \mathcal{P}(\beta)}R_{A,q}(\hat{f}, f).
\end{equation*}
The infimum in the above formula is taken over all measurable estimators. An estimator $\tilde{f}$ is then said to be a minimax optimal defender over the classes $\mathcal{P}(\beta)$ and $\mathcal{T}(r)$ if
\begin{equation*}
  \sup_{\mathbb{P}_{(X,Y)} \in \mathcal{P}(\beta)}R_{A,q}(\tilde{f}, f) \lesssim V_{A,q}(\beta,r)
\end{equation*}
holds uniformly for all $A \in \mathcal{T}(r)$.

\section{Minimax rate of convergence}\label{sec:minimax_rate}

In this section, we establish the minimax rate of convergence for nonparametric regression estimation under the adversarial $L_q$-risk and propose a minimax optimal procedure based on the LP method. Additionally, we present a data-driven adaptive estimator that does not rely on prior knowledge of the smoothness parameter $\beta$ or the perturbation magnitude $r$.

The following theorem summarizes our main result regarding the minimax rate of convergence.

\begin{theorem}[Minimax rates of adversarial learning]\label{theo:Minimax_rate}
  Given a fixed smoothness level $0 < \beta < \infty$ and a perturbation magnitude $0 \leq r < \infty$, we have
  \begin{equation}\label{eq:minimax_rate}
    \inf_{\hat{f}}\sup_{\mathbb{P}_{(X,Y)} \in \mathcal{P}(\beta)}R_{A,q}(\hat{f}, f) \asymp \left\{\begin{array}{ll}
r^{q(1 \wedge \beta)} + n^{-\frac{q\beta}{2\beta+d}} &\quad  1 \leq q < \infty, \\
r^{1 \wedge \beta} + \left(\frac{n}{\log n}\right)^{-\frac{\beta}{2\beta+d}} &\quad q = \infty,\\
    \end{array}\right.
  \end{equation}
  for all $A \in \mathcal{T}(r)$.
\end{theorem}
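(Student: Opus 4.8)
The plan is to prove the matching upper and lower bounds separately, and for each we treat the two additive terms in turn.

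For the upper bound, I would analyze the regularized piecewise local polynomial (PP) estimator described in Section~\ref{sec:minimax_rate}. Partition $[0,1]^d$ into cubes of side length $h$, fit a local polynomial of degree $\lfloor\beta\rfloor$ on each cell (with the stated regularization to control behavior on cells with few design points), and evaluate $\hat f$ at the attacked point $X'\in A(X)$. The key decomposition is $\hat f(X')-f(X) = [\hat f(X')-f(X')] + [f(X')-f(X)]$. The first bracket is a standard nonparametric estimation error, handled by bias-variance analysis: bias $\lesssim h^\beta$, stochastic term $\lesssim (nh^d)^{-1/2}$ (with an extra $\sqrt{\log n}$ for $q=\infty$), using the sub-Gaussian errors and the bounded-density assumption to control $\lambda_{\min}$ of the design moment matrices on most cells; the regularization must be shown to make the contribution of the bad cells negligible in expectation. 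The second bracket is deterministic: since $\|X'-X\|\le r$ and $f\in\mathcal F(\beta,C_\beta)$, for $\beta\le 1$ we get $|f(X')-f(X)|\lesssim \|X'-X\|^\beta \le r^\beta$, and for $\beta>1$ the mean value theorem and the uniform bound on the gradient give $|f(X')-f(X)|\lesssim r$; together this is $r^{1\wedge\beta}$. Raising to the $q$-th power and taking expectation (or sup for $q=\infty$) yields $r^{q(1\wedge\beta)}+ (\text{standard rate})^q$. Choosing $h\asymp n^{-1/(2\beta+d)}$ when $r$ is below the critical threshold, and enlarging $h$ when $r$ exceeds it (to shrink variance at no extra cost, since the bias is already dominated by $r^{1\wedge\beta}$), gives the claimed upper bound. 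The main obstacle here is making the regularization argument rigorous so that optimal rates hold \emph{in expectation} in the random design — this is exactly the technical contribution flagged in the introduction.

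For the lower bound, I would combine two separate constructions, since $V_{A,q}(\beta,r)\gtrsim \max\{r^{q(1\wedge\beta)}, (\text{standard rate})^q\}$ suffices. The standard-rate term is the classical nonparametric lower bound: embed a hypercube of perturbed functions $f_\omega = \sum_\omega \omega_j \varphi_j$ with bump functions $\varphi_j$ of width $h\asymp n^{-1/(2\beta+d)}$ and height $h^\beta$, supported on disjoint cells, and apply Assouad's lemma or Fano's method; the adversarial risk is at least the standard risk (take $A(x)=\{x\}$ as a sub-case, or note $\sup_{X'\in A(X)}|\hat f(X')-f(X)|\ge$ value at $X'=X$), so the usual argument transfers. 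For the genuinely adversarial term $r^{q(1\wedge\beta)}$, I would construct two hypotheses $f_0, f_1\in\mathcal F(\beta,C_\beta)$ that are \emph{close in the design metric} (so indistinguishable from data: $\mathrm{KL}$ between the two sample distributions is $O(1)$) but such that, along the SODA principal direction $v$, there is a point $x$ and an attack target $x'=x\pm cr\,v\in A(x)$ with $|f_1(x')-f_0(x')|$ or the paired differences forcing any estimator to incur error $\gtrsim r^{1\wedge\beta}$ at the attacked location. Concretely, take $f_1-f_0$ to be a single bump of width $\asymp r$ and height $\asymp r^{1\wedge\beta}$ placed so that its steep part is reachable by the attack; because the bump has small support, the data cannot tell $f_0$ from $f_1$, yet the adversary moves $X$ to where $f_0$ and $f_1$ disagree by $\asymp r^{1\wedge\beta}$. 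A two-point (Le Cam) argument then gives the bound; for $q=\infty$ one further needs the extra $(\log n)$ factor, obtained by the usual trick of allowing the bump location to vary over $\asymp n^{\delta}$ positions and using a union/maximum argument over a Fano-type family. The main obstacle on the lower-bound side is the adversarial construction: one must verify that the bump of width $\asymp r$ genuinely lies in $\mathcal F(\beta,C_\beta)$ (it does, with the right constants, precisely because height/width${}^\beta\asymp 1$) and that the SODA condition~(ii) guarantees the attack set $A(x)$ actually contains a point at distance $\asymp r$ along $v$ where the disagreement is realized — this is where Definition~\ref{def:adversarial} is used in an essential way, and care is needed near the boundary of $\Omega$ where $k'_{\min},k'_{\max}$ may be clipped.

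Finally, I would remark that the two lower-bound pieces can be merged into one $\max$, and since $\max\{a,b\}\asymp a+b$ for nonnegative quantities, this matches the additive form in~\eqref{eq:minimax_rate}; the phase transition at the critical $r\asymp n^{-1/(2\beta+d)}$ (for $q<\infty$, comparing $r^{q(1\wedge\beta)}$ with $n^{-q\beta/(2\beta+d)}$) then falls out of the algebra.
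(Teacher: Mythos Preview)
Your upper-bound sketch is essentially the paper's argument: the decomposition $\hat f(X')-f(X)=[\hat f(X')-f(X')]+[f(X')-f(X)]$ is exactly inequality~(\ref{eq:connection1}), the second bracket is bounded by $r^{1\wedge\beta}$ via the H\"older condition (Lemma~\ref{lem:lip}), and the first bracket is the localized uniform risk handled by the PP estimator analysis in Section~\ref{sec:proof_upper}. Likewise your lower bound for the standard-rate term (Fano/Assouad with bumps of width $n^{-1/(2\beta+d)}$) is the classical piece the paper also invokes.

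The gap is in your lower bound for the adversarial term $r^{q(1\wedge\beta)}$. A two-point Le~Cam argument with a single bump of width $\asymp r$ and height $\asymp r^{1\wedge\beta}$ does \emph{not} work in the regime where this term dominates. The KL separation between the two product measures is of order $n\int(f_1-f_0)^2\asymp n\,r^d\,r^{2(1\wedge\beta)}$, which is $O(1)$ only when $r\lesssim n^{-1/(2(1\wedge\beta)+d)}$; for $\beta\le 1$ this is precisely $r\lesssim n^{-1/(2\beta+d)}$, the threshold below which the adversarial term is already dominated by the standard rate. For larger $r$ the two hypotheses are perfectly distinguishable and the argument collapses. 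Shrinking the bump width to restore indistinguishability forces (by the H\"older constraint) the height to shrink as well, and you recover only the standard rate, not $r^{1\wedge\beta}$. There is also a secondary issue for $q<\infty$: a bump supported on a set of measure $r^d$ contributes at most $r^d\cdot r^{q(1\wedge\beta)}$ to an integrated $L_q$ risk, which is too small.

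The paper's route is conceptually different and avoids this obstruction. It does not try to hide the $r$-scale structure from the data. Instead it fixes a base function $f_0\in\mathcal F(\beta,C_\beta/2)$ that oscillates at scale $r$ over a set of \emph{fixed} positive measure (a periodic sawtooth for $\beta\le 1$, a smooth monotone ramp for $\beta>1$; see (\ref{eq:f_00})--(\ref{eq:bump})), and proves a stability lemma (Lemma~\ref{lem:mirror}): any function $g$ with $\|g-f_0\|_q\le C_1 r^{1\wedge\beta}$ necessarily satisfies $G_{A,q}(g)\ge C_2 r^{1\wedge\beta}$, where $G_{A,q}(g)$ is the $L_q$-averaged oscillation of $g$ over the attack sets. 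A localized Fano family is then built \emph{around} $f_0$ at the standard scale $\epsilon_n\asymp n^{-\beta/(2\beta+d)}$ (Lemma~\ref{lem:bump}). On the event that $\hat f$ is $L_q$-close to some $f_j$ (hence to $f_0$), Lemma~\ref{lem:mirror} forces $G_{A,q}(\hat f)\gtrsim r^{1\wedge\beta}$, and the elementary inequality $d_{A,q}(\hat f,f_j)\ge G_{A,q}(\hat f)$ (equation~(\ref{eq:yanxi_1})) converts this into adversarial risk. The point is that the $r^{1\wedge\beta}$ contribution comes not from statistical indistinguishability but from the fact that any estimator tracking $f_0$ must itself inherit oscillations at scale $r$, which the adversary then exploits in $\hat f$, not in $f$. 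Your proposal is missing this mechanism; the SODA geometry enters in verifying $G_{A,q}(f_0)\gtrsim r^{1\wedge\beta}$ and in controlling the pushforward measures in the proof of Lemma~\ref{lem:mirror}, not in a two-point reachability argument.
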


We make three brief comments on the minimax rates derived from Theorem~\ref{theo:Minimax_rate}. First, when $r=0$, our results reduce to the standard minimax rates for the H\"{o}lder-smooth class, which have been established in various contexts by \cite{Stone1982global, Yang1999Information, Tsybakov2009book, Lepski2015, Chhor2024}. Second, our theory demonstrates that the adversary affects the minimax rates solely through the perturbation magnitude $r$. The rates are independent of additional geometric structures beyond the SODA nature. For example, $A$ may not be convex and can be of a lower dimension than $d$. Third, the SODA nature is utilized in lower bounding the minimax rates in (\ref{eq:minimax_rate}); see Section~\ref{sec:roadmap}. We show in Section~\ref{sec:local_poly} that the optimal rates in (\ref{eq:minimax_rate}) are attainable under a more general class of attacks than SODA.

\begin{figure}[!t]
  \centering
  \includegraphics[width=6.2in]{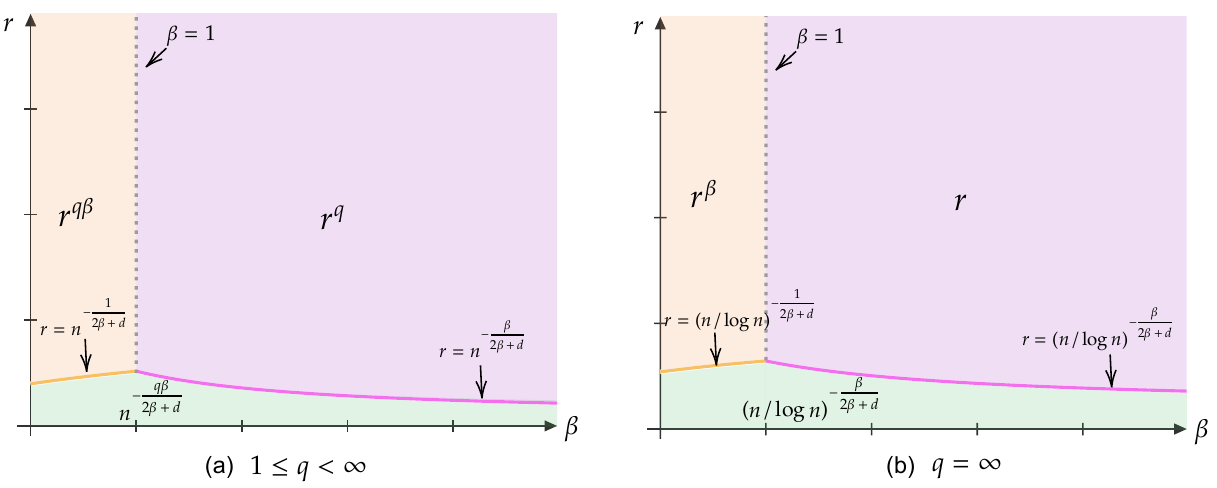}
  \caption{Diagram of the minimax rates for a fixed $d \in \mathbb{N}^+$ under the adversarial $L_q$-risks: panel (a) illustrates the case where $1 \leq q < \infty$, while panel (b) corresponds to $q = \infty$.}
  \label{fig:rate}
\end{figure}

Figure~\ref{fig:rate} depicts the three regimes of convergence behavior identified in Theorem~\ref{theo:Minimax_rate}, determined by which term in the minimax rate (\ref{eq:minimax_rate}) dominates. As shown in the left panel of the diagram, transitions occur at $r = n^{-1/(2\beta+d)}$ and $r = n^{-\beta/(2\beta+d)}$ (in order), depending on whether $0 < \beta \leq 1$ or not. When $r \leq n^{-1/(2\beta+d)}$ for $0 < \beta \leq 1$ and $r \leq n^{-\beta/(2\beta+d)}$ for $\beta > 1$, meaning that the perturbation magnitude is relatively small, the adversarial minimax rate equals the standard minimax rate $n^{-q\beta/(2\beta+d)}$. However, when the attack level exceeds these critical values, the adversarial minimax rate shifts to $r^{q\beta}$ for $0 < \beta \leq 1$ and $r^{q}$ for $\beta > 1$, leading to successful adversarial attacks in nonparametric adversarial learning. Similar arguments apply to the right panel for the minimax sup-norm risk.

Another interesting phenomenon arises in the behavior of the critical value of $r$, which exhibits a two-phase pattern as the smoothness parameter $\beta$ increases. Initially, as $\beta$ increases from $0$ to $1$, the critical value of $r$ increases, but once $\beta > 1$, it generally decreases. This is due to the relative changes between two components of the minimax rate: the maximum deviation of the target function value under adversarial attacks and the standard minimax rate. For $0 < \beta < 1$, the maximum change occurs in functions with $\beta$-H\"{o}lder smoothness, and $r^{q\beta}$ decreases more slowly than the standard minimax rate as $\beta$ increases. In contrast, for $\beta > 1$, the functions in the target class all are Lipschitz functions, and the maximum deviation becomes $r^q$, independent of $\beta$. As $\beta$ increases, the standard minimax rate decreases, making the attacker more easy to achieve the phase transitions, thus leading to a decreasing critical value of $r$.

\begin{remark}

The minimax adversarial rates in (\ref{eq:minimax_rate}) are the same for all $(v,r)$-SODA $A$ with different attack directions $v$. This is because functions in the H\"{o}lder class $\mathcal{F}(\beta, C_\beta)$ satisfy the same smoothness constraints in all directions. In contrast, under an anisotropic smoothness class, where functions may exhibit different smoothness along different axes, the attack direction $v$ may affect the adversarial risk. For example, consider a true regression function $f(x_1, x_2)$ that has H\"{o}lder smoothness $\beta_1$ along the $x_1$-axis and $\beta_2$ along $x_2$, where $0 < \beta_1 < \beta_2 \leq 1$. Then, adversarial perturbations $A$ of the magnitude $r$ in the direction $x_1$ may result in a greater degradation of performance of a well-converging estimator $\hat{f}$. For a more detailed analysis of the adversarial sup-norm risk in anisotropic smoothness classes, see Section~4.2 of \cite{Peng2024}.

\end{remark}

\begin{remark}
  In the case of adversarial sup-norm risk, \cite{Peng2024} established the minimax rate of convergence under the assumption that the perturbation set takes the form $A(x) = \{x + \delta: \delta \in \Delta \}\cap\Omega$, where $\Delta$ is a common set for all $x$. Without imposing additional conditions on $\Delta$, the minimax rate was derived for a general function class $\mathcal{F}$. In contrast, the present paper considers future $X$-attacks without assuming the particular structure of \cite{Peng2024}, and establishes the minimax rates under general $L_q$-risk for $1 \leq q \leq \infty$.
\end{remark}

\subsection{Minimax-optimal estimators via LP method}\label{sec:local_poly}

In this subsection, we describe an estimator constructed via LP method that achieves the optimal minimax rates in Theorem ~\ref{theo:Minimax_rate}. The construction is based on a regularized LP estimation, which will be first discussed in Subsection~\ref{sec:regu_lp}. Building on these regularized estimators, we propose a piecewise LP (PP) estimation method in Subsection~\ref{sec:piece_lp} that achieves theoretically optimal robustness.

\subsubsection{Regularized LP estimators}\label{sec:regu_lp}

To construct specific estimator $\hat{f}$, we introduce a regularized LP estimator in this subsection. The LP method has attracted much attention due to its massive practical success and appealing theoretical properties \citep[see][among many others]{Stone1977Consistent, Cleveland1979Robust, Tsybakov1986robust, Fan1995jrssb, Goldenshluger1997spatially, Spokoiny1998, Xiao2003, Gaiffas:2005:CRP, Audibert2007fast, Chhor2024}.

We first recall some notations related to the classical LP estimators. For a fixed integer $\ell \geq 0$, define $N_{\ell,d}$ as the cardinality of the set of multi-indices $\{ s \in \mathbb{N}^d,0 \leq |s| \leq \ell \}$. For any $x \in \mathbb{R}^d$, define a vector $U(x) \in \mathbb{R}^{N_{\ell,d}}$ as
\begin{equation*}
  U(x) \triangleq \left(\frac{x^s}{s!}\right)_{0 \leq |s| \leq \ell}.
\end{equation*}
The elements of $U(x)$ are ordered (ascending in $|s|$, with any tie-breaker method). Thus, the first component of $U(x)$ is $1$ for any $x$. For any function $f \in \mathcal{F}(\beta,C_{\beta})$, define
\begin{equation}\label{eq:taylor_poly}
  f_u(x) \triangleq \sum_{0 \leq |s| \leq \ell} \frac{\tilde{D}^s f(u)}{s!}(x - u)^s,
\end{equation}
where $\tilde{D}^s f(u) = D^s f(u)$ if $ 0 \leq |s| \leq \lfloor \beta \rfloor $, and $\tilde{D}^s f(u) = 0$ otherwise. Therefore, $f_u$ can be seen as the Taylor polynomial of $f$ of degree $\ell \wedge \lfloor \beta \rfloor$ at the point $u$. We emphasize that both $U(x)$ and $f_u(x)$ depend on the choice of $\ell$. For brevity, we omit $\ell$ from these notations, as well as from those defined below, and will mention this dependence when necessary.

Let $K:\mathbb{R}^d \to \mathbb{R}_+$ be a kernel function and $h = h_n>0$ be a bandwidth depending on $n$. Define the estimated coefficient vector as
\begin{equation}\label{eq:lpe_coeff}
  \hat{\theta}_{uh} = \argmin_{\theta \in \mathbb{R}^{N_{\ell,d}}} \sum_{i=1}^{n} \left[ Y_i - \theta^{\top}U\left(\frac{X_i - u}{h}\right) \right]^2 K\left( \frac{X_i-u}{h} \right).
\end{equation}
A LP estimator of $f(x)$ of degree $\ell$ is given by
\begin{equation}\label{eq:lpe_classical}
  \hat{f}_{uh}(x) =\hat{\theta}_{uh}^{\top}U\left(\frac{x - u}{h}\right).
\end{equation}
When the local point $u$ is chosen as $x$, $\hat{f}_{uh}(x)=\hat{f}_{xh}(x)$ is the first coordinate of the estimated vector $\hat{\theta}_{xh}$, which is reduced to the classical LP estimator considered in \cite{Tsybakov2009book}. When $u \neq x$, $\hat{f}_{uh}$ can be seen as an estimator for $f_u$, and the $s$-th component of $\hat{\theta}_{uh}$ is an estimator for $\tilde{D}^s f(u) h^{|s|}$.

It is more convenient to write the optimization problem (\ref{eq:lpe_coeff}) in matrix notation. Define a matrix $B_{uh} \in \mathbb{R}^{N_{\ell,d} \times N_{\ell,d}}$ as
\begin{equation}\label{eq:B}
  B_{uh} \triangleq \frac{1}{nh^d}\sum_{i=1}^{n} U\left( \frac{X_i - u}{h} \right) U^{\top}\left( \frac{X_i - u}{h} \right)K\left( \frac{X_i-u}{h} \right)
\end{equation}
and a vector $a_{uh} \in \mathbb{R}^{N_{\ell,d}}$ as
\begin{equation}\label{eq:a}
  a_{uh} \triangleq \frac{1}{nh^d} \sum_{i=1}^{n} Y_i U\left( \frac{X_i - u}{h} \right) K\left( \frac{X_i-u}{h} \right).
\end{equation}
A necessary condition for $\hat{\theta}_{uh}$ to be the minimizer of (\ref{eq:lpe_coeff}) is that it satisfies the linear equation $B_{uh} \hat{\theta}_{uh} = a_{uh}$. If the matrix $B_{uh}$ is positive definite, then the solution of this equation is unique and is given by $\hat{\theta}_{uh} = B_{uh}^{-1}a_{uh}$. The LP regression estimator (\ref{eq:lpe_classical}) is then defined as
\begin{equation*}
  \hat{f}_{uh}(x) = U^{\top}\left(\frac{x - u}{h}\right)B_{uh}^{-1}a_{uh}.
\end{equation*}

To address the cases where the matrix $B_{uh}$ is degenerate or its smallest eigenvalue is too small, we propose a modified version of LP estimator. From now on, we assume that the kernel function $K$ satisfies the following assumption.
\begin{assumption}\label{ass:kernel}
  There exists constants $0 <K_{\min} < K_{\max} < \infty$ and $0 < \Delta < 1$ such that
\begin{equation}\label{eq:K_ass_1}
  K_{\min} 1_{\{ \| u \| \leq \Delta \}} \leq K(u) \leq K_{\max} 1_{\{ \| u \| \leq 1 \}}, \quad \forall u \in \mathbb{R}^d,
  \end{equation}

\end{assumption}
Note that the upper bound in (\ref{eq:K_ass_1}) implies that the support of $K$ is contained within the unit ball $B(0,1)$. The lower bound on $K(u)$ for $\| u \| \leq \Delta$ is standard and is commonly imposed in the existing literature \citep[see, e.g., Theorem 1.7 of][]{Tsybakov2009book}. Classical kernels that satisfy Assumption~\ref{ass:kernel} include the rectangular kernel $K(u)=(1/v_d) 1_{\| u \| \leq 1}$ and the Epanechnikov kernel $K(u) = (d+2)/(2 v_d)(1-\|u\|^2)1_{\| u \| \leq 1}$, where $v_d$ is the volume of the unit ball in $\mathbb{R}^d$.

Let us define $n_{uh} \triangleq \mathrm{Card}\{X_i:X_i \in B(u, h) \}$ as the number of observations in the closed Euclidean ball $B(u,h)$. When $n_{uh} > 0$, we introduce a regularized matrix
\begin{equation}\label{eq:B_tilde}
  \tilde{B}_{uh} \triangleq B_{uh} + \tau I_{N_{\ell,d}} 1_{\{ \lambda_{\min}(B_{uh}) < \tau \}},
\end{equation}
where $\tau = \tau_n>0$ is a regularization parameter that tends to $0$. Note that we always have $\lambda_{\min}(\tilde{B}_{uh}) \geq \tau>0$ when $n_{uh}>0$. Then the regularized LP estimator $\mathrm{LP}(\ell,h,\tau)$ of $f(x)$ is defined as
\begin{equation}\label{eq:lpe_modified}
  \tilde{f}_{uh}(x) = \left\{\begin{array}{ll}
U^{\top}\left(\frac{x - u}{h}\right)\tilde{B}_{uh}^{-1}a_{uh} &\quad n_{uh}>0, \\
0 &\quad n_{uh}=0.
\end{array}\right.
\end{equation}

\begin{remark}
For pointwise estimation in a univariate regression setting, a similar regularization technique was adopted by \cite{Gaiffas:2005:CRP} to address matrix degeneration. In the classification problem, \cite{Audibert2007fast} introduced a LP estimator, which is defined as $\hat{f}_{xh}(x)$ when $\lambda_{\min}(B_{xh}) \geq (\log n)^{-1}$, and as $0$ otherwise. However, neither approach has been shown to achieve minimax optimality in expectation for general random design nonparametric regression. More recently, \cite{Chhor2024} proposed estimating the coefficient vector as $B_{xh}^+a_{xh}$, where $B_{xh}^+$ denotes the Moore-Penrose inverse of $B_{xh}$, and then truncated the LP estimator by the maximum value $\max_{1\leq i \leq n}|Y_i|$. Their method is related to (\ref{eq:lpe_modified}) via the limiting relationship $B_{xh}^+a_{xh} = \lim_{\tau \to 0}(B_{xh} + \tau I_{N_{\ell,d}})^{-1}a_{xh}$ \citep[see, e.g., Section~2.2 of][for further discussion on this relation]{Hastie2022}. However, to achieve minimax optimality in expectation, our estimator does not rely on the truncation technique used in \cite{Chhor2024}.
\end{remark}

\subsubsection{PP estimators}\label{sec:piece_lp}

In the conventional nonparametric regression setting, with an appropriate choice of the bandwidth $h$, several versions of LP estimators $\hat{f}_{xh}$ have been shown to achieve minimax rates of convergence for the standard $L_q$-loss/risk \citep[see, e.g.,][]{Stone1982global, Gyorfi2002book, Chhor2024}.

In adversarial setting, one has access to clean (unperturbed) data $Z_n$ generated from $\mathbb{P}_{(X,Y)}$, while the performance of learned estimators is measured by the adversarial risks (\ref{eq:adv_risk}). The main challenge is that the results established under the standard $L_2$ risks cannot imply the adversarial robustness directly, even when the attack magnitude is extremely small. To illustrate this, consider the case where $d=1$, $\Omega = [0,1]$, $\mu(x) = 1_{\{ x\in [0,1] \}}$, $f(x)=0$, $A(x) = [0\vee (x-r), (x+r)\wedge 1]$ with $r = \exp(-n)$. An estimator whose pointwise risks in $x$ do not converge at $\lfloor\exp(n)\rfloor$ equally spaced points on $[0,1]$ can still achieve optimal performance in terms of the $L_2$-risk, as the non-converging points occur on a set of zero measure. In contrast, the adversarial $L_2$-risk of this estimator does not converge at all. This example illustrates that achieving adversarial robustness may require stronger forms of convergence than those implied by the $L_2$-risk.

Therefore, to accommodate the existing LP methods in the adversarial setting and establish theoretically provable upper bounds on the adversarial $L_q$-risks, we propose a piecewise version of LP estimators. For any given point $x \in \Omega$, the estimation strategy for $f(x)$, referred to as the PP estimator, is summarized as follows:
\begin{description}
  \item[Step 1:] For an integer $M = M_n \geq 1$, define a set of discretization points on $\Omega$ as
  \begin{equation}\label{eq:discrete_set}
    \Lambda_{M} \triangleq \left\{ \left( \frac{2k_1+1}{2M},\ldots, \frac{2k_d+1}{2M}\right):k_i \in \{ 0,\ldots,M-1 \},i=1,\ldots,d \right\}.
  \end{equation}
  Let $u_x \in \Lambda_{M}$ be the closest point to $x \in \Omega$ among the points in $\Lambda_{M}$. If there are multiple points in $\Lambda_{M}$ closest to $x$, we define $u_x$ as the one closest to $0$.

  \item[Step 2:] The regression function at the point $x\in \Omega$ is estimated by
  \begin{equation}\label{eq:lpe_piese}
    \hat{f}_{\mathrm{PP}}(x) = \tilde{f}_{u_xh}(x),
  \end{equation}
  where $\tilde{f}_{u_xh}(x)$ is the regularized LP estimator $\mathrm{LP}(\ell,h,\tau)$, as defined in (\ref{eq:lpe_modified}), with $u = u_x$.
\end{description}

The final regression estimator $\hat{f}_{\mathrm{PP}}(x)$ is called the PP estimator $\mathrm{PP}(M,\ell,h,\tau)$. The performance of the PP estimator $\hat{f}_{\mathrm{PP}}$ depends on the choice of $(M,\ell,h,\tau)$. The next theorem gives a set of choices of $(M,\ell,h,\tau)$ and provides upper bounds on the adversarial $L_q$-risks, offering a performance guarantee for the PP estimator under adversarial $X$-attacks.

\begin{theorem}[Minimax upper bound]\label{theo:upper}

Let $\hat{f}_{\mathrm{PP}}$ be the PP estimator $\mathrm{PP}(M,\ell,h,\tau)$ with $\ell \geq \lfloor \beta \rfloor$,
\begin{equation}\label{eq:tuning_hn}
    h \asymp \left\{\begin{array}{ll}
r \vee n^{-\frac{1}{2\beta+d}} &\quad  1 \leq q < \infty, \\
r \vee \left(\frac{n}{\log n}\right)^{-\frac{1}{2\beta+d}} &\quad q = \infty,\\
    \end{array}\right.
\end{equation}
$M \asymp 1/h$, $\tau\to 0$, and $ n^{\gamma} \tau\to \infty$ for some fixed $\gamma>0$. Then
  \begin{equation}\label{eq:upper}
    \sup_{\mathbb{P}_{(X,Y)} \in \mathcal{P}(\beta)}\sup_{A \in \mathcal{T}^{\prime}(r)}R_{A,q}(\hat{f}_{\mathrm{PP}}, f) \lesssim \left\{\begin{array}{ll}
r^{q(1 \wedge \beta)} + n^{-\frac{q\beta}{2\beta+d}} &\quad  1 \leq q < \infty, \\
r^{1 \wedge \beta} + \left(\frac{n}{\log n}\right)^{-\frac{\beta}{2\beta+d}} &\quad q = \infty,\\
    \end{array}\right.
  \end{equation}
  where $\mathcal{T}^{\prime}(r)$ denotes the class of $A$ satisfying $x \in A(x) \subseteq \Omega$ and $\sup_{x \in \Omega,x' \in A(x)}\| x' - x \| \leq r$.
\end{theorem}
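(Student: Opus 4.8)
The plan is to establish the upper bound by a standard bias–variance decomposition of the regularized LP estimator, carried out uniformly over the discretization grid $\Lambda_M$, after isolating a low-probability ``bad design'' event on which the estimator is crudely controlled using the regularization floor $\tau$. Fix $\mathbb{P}_{(X,Y)}\in\mathcal{P}(\beta)$ and $A\in\mathcal{T}^{\prime}(r)$; since $A(x)\subseteq[0,1]^d$ forces $r\le\sqrt d$, only the regime $r\le\text{const}$ (equivalently $h\to 0$) needs work, the rest being $\Theta(1)$ on both sides. Because $h\gtrsim r$ and $M\asymp 1/h$, for every $x\in\Omega$ and $x'\in A(x)$ the nearest grid point $u_{x'}\in\Lambda_M$ satisfies $\|u_{x'}-x'\|\lesssim h$ and hence $\|u_{x'}-x\|\le\|u_{x'}-x'\|+r\lesssim h$; moreover, as $x'$ ranges over $A(x)$ the point $u_{x'}$ takes at most a bounded number of values. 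I would then define the good event $\mathcal{G}$ on which, for all $u\in\Lambda_M$, one has $\lambda_{\min}(B_{uh})\ge c_0/2$ and $c_1 nh^d\le n_{uh}\le C_1 nh^d$. Here $c_0=\inf_{u,h}\lambda_{\min}(\bar B_{uh})>0$ (with $\bar B_{uh}\triangleq\mathbb{E}[B_{uh}]$) is bounded away from zero because a ball of radius $\Delta h$ around any point of the cube retains a fixed fraction of its volume, so the limiting Gram matrix of the polynomial features $U(\cdot)$ is uniformly nondegenerate, using that a nonzero polynomial of degree $\le\ell$ cannot be small on a set of fixed positive measure. A matrix Chernoff bound together with Bernstein's inequality for the $O(1)$-bounded summands, and a union bound over the $M^d\asymp h^{-d}$ grid points, give $\mathbb{P}(\mathcal{G}^c)\lesssim h^{-d}\exp(-cnh^d)$; since $nh^d\gtrsim n^{2\beta/(2\beta+d)}$ when $1\le q<\infty$ (resp.\ $\gtrsim(n/\log n)^{2\beta/(2\beta+d)}$ when $q=\infty$) and $h^{-d}$ is at most polynomial in $n$, $\mathbb{P}(\mathcal{G}^c)$ decays faster than any power of $n$.

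\textbf{The bad event is the main obstacle.} On $\mathcal{G}^c$ I would use only the crude bound $|\hat f_{\mathrm{PP}}(x')|\le (C/(\tau h^d))\max_i|Y_i|$, which follows from $\|\tilde B_{uh}^{-1}\|\le 1/\tau$, boundedness of $U(\cdot)$ on the relevant domain, and $\|a_{uh}\|\le (C/h^d)\max_i|Y_i|$. Cauchy--Schwarz then bounds the contribution of $\mathcal{G}^c$ to $R_{A,q}(\hat f_{\mathrm{PP}},f)$ by $\sqrt{\mathbb{P}(\mathcal{G}^c)}\cdot(\tau h^d)^{-q}\cdot(\mathbb{E}[(\max_i|Y_i|)^{2q}])^{1/2}$; since $\mathbb{E}[(\max_i|Y_i|)^{2q}]\lesssim(\log n)^q$ by sub-Gaussianity of $\xi$ and boundedness of $f$, while $(\tau h^d)^{-q}$ is only polynomial in $n$ (using $n^{\gamma}\tau\to\infty$ and $h^{-1}\lesssim n^{1/(2\beta+d)}$), this term is $o(n^{-q\beta/(2\beta+d)})$ (resp.\ $o((n/\log n)^{-\beta/(2\beta+d)})$), hence negligible. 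This step — not the on-$\mathcal{G}$ analysis — is where the difficulty lies, and it is precisely what the regularization is designed for: without the eigenvalue floor $\tau$, $B_{uh}^{-1}a_{uh}$ could be too large on $\mathcal{G}^c$ to be controlled in expectation, which is the well-known gap in the earlier truncation-free constructions noted after \eqref{eq:lpe_modified}.

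\textbf{Bias--variance on $\mathcal{G}$.} On $\mathcal{G}$, and for $n$ large so that $\tau<c_0/2$, we have $\tilde B_{u_{x'}h}=B_{u_{x'}h}$, so $\hat f_{\mathrm{PP}}(x')=U^{\top}(\tfrac{x'-u}{h})B_{uh}^{-1}a_{uh}$ with $u=u_{x'}$. I would split $\hat f_{\mathrm{PP}}(x')-f(x)=S(x')+\mathrm{Bias}(x',x)$, where $S(x')=U^{\top}(\tfrac{x'-u}{h})B_{uh}^{-1}(a_{uh}-\mathbb{E}[a_{uh}\mid X_1,\dots,X_n])$ and $\mathrm{Bias}(x',x)=U^{\top}(\tfrac{x'-u}{h})B_{uh}^{-1}\mathbb{E}[a_{uh}\mid X_1,\dots,X_n]-f(x)$. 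For the bias, write $f=f_u+(f-f_u)$: the polynomial part reproduces exactly, $U^{\top}(\tfrac{x'-u}{h})B_{uh}^{-1}B_{uh}\theta_u=f_u(x')$ with $(\theta_u)_s=\tilde D^s f(u)h^{|s|}$, while the Taylor remainder satisfies $|f-f_u|\lesssim h^{\beta}$ on $\mathrm{supp}\,K$; combined with $\|B_{uh}^{-1}\|\le 2/c_0$, $n_{uh}\le C_1 nh^d$, and $|f_u(x')-f(x)|\le|f_u(x')-f(x')|+|f(x')-f(x)|\lesssim h^{\beta}+r^{1\wedge\beta}$ (Taylor remainder at distance $\lesssim h$, and Hölder/Lipschitz continuity over distance $r$), this gives $|\mathrm{Bias}(x',x)|\lesssim h^{\beta}+r^{1\wedge\beta}$ deterministically on $\mathcal{G}$. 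For the stochastic part, $a_{uh}-\mathbb{E}[a_{uh}\mid X_{1:n}]=(nh^d)^{-1}\sum_i\xi_i U(\tfrac{X_i-u}{h})K(\tfrac{X_i-u}{h})$ is, conditionally on the design and on $\mathcal{G}$, a vector of $N_{\ell,d}=O(1)$ coordinates that are sub-Gaussian with variance proxy $\lesssim\sigma^2/(nh^d)$; bounding $\sup_{x'\in A(x)}|S(x')|\le(C/c_0)\max_j\|a_{u_jh}-\mathbb{E}[a_{u_jh}\mid X_{1:n}]\|$ over the $O(1)$ reachable grid points yields $\mathbb{E}[1_{\mathcal{G}}\sup_{x'}|S(x')|^q]\lesssim(nh^d)^{-q/2}\lesssim n^{-q\beta/(2\beta+d)}$ when $q<\infty$; when $q=\infty$ the extra supremum over $x\in\Omega$ turns this into a maximum over all $M^d\lesssim h^{-d}$ grid points, costing a $\sqrt{\log n}$ factor and giving $\lesssim\sqrt{\log n/(nh^d)}\asymp(n/\log n)^{-\beta/(2\beta+d)}$. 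Putting the two pieces together, on $\mathcal{G}$ the $q$-th adversarial moment is $\lesssim(nh^d)^{-q/2}+h^{q\beta}+r^{q(1\wedge\beta)}\lesssim n^{-q\beta/(2\beta+d)}+r^{q(1\wedge\beta)}$ (using $h^{\beta}\lesssim r^{1\wedge\beta}+n^{-\beta/(2\beta+d)}$ from $h\asymp r\vee n^{-1/(2\beta+d)}$, and $r^{\beta}\lesssim r^{1\wedge\beta}$ for bounded $r$), and analogously in the $q=\infty$ case; adding the negligible $\mathcal{G}^c$ contribution and observing that all constants depend only on $(\beta,C_\beta,\mu_{\min},\mu_{\max},\sigma,d,\ell,K)$ and not on $\mathbb{P}_{(X,Y)}$ or $A$ completes the proof.
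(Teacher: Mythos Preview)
Your proposal is correct and follows essentially the same route as the paper: a good/bad design event split (the paper's $\mathcal{C}_h$ is your $\mathcal{G}$), concentration of $B_{uh}$ and $n_{uh}$ via Bernstein with a union bound over the $M^d$ grid points, bias--variance analysis on the good event using polynomial reproduction and the Taylor remainder $\lesssim h^\beta$, and crude control on the bad event via the regularization floor $\tau$. The only cosmetic differences are that the paper normalizes throughout by the diagonal matrix $D_{uh}^{-1/2}$ (machinery it reuses for the adaptive Theorem~\ref{theo:adaptive1}, but unnecessary here) and handles the bad event by a more detailed decomposition rather than your single Cauchy--Schwarz step; both yield the same conclusion.
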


Note that the upper bounds in (\ref{eq:upper}) hold for all future $X$-attacks with attack magnitude bounded by $r$. Since $\mathcal{T}(r) \subset \mathcal{T}^{\prime}(r)$, the proposed PP estimator $\hat{f}_{\mathrm{PP}}$ also achieves the minimax rates established in Theorem~\ref{theo:Minimax_rate} under SODA. Theorem~\ref{theo:upper} suggests the critical role of bandwidth order in the adversarial robustness of the PP estimator. When the attack magnitude is small, the PP estimator with the standard bandwidth achieves the optimal adversarial rate. In contrast, when the attack magnitude is dominant, the bandwidth should be chosen proportional to $r$ to enhance robustness.

In the following, we highlight the main ingredient in proving the minimax upper-bound for $1 \leq q < \infty$, which also reveals the factors influencing an estimator's convergence in adversarial risk. By the definition of the adversarial $L_q$-risk in (\ref{eq:adv_risk}) and Jensen's inequality, we can derive the following basic inequality:
\begin{equation}\label{eq:connection1}
  \begin{split}
    & R_{A,q}(\hat{f}_{\mathrm{PP}}, f) = \mathbb{E}\sup_{X' \in A(X)}\left| \hat{f}_{\mathrm{PP}}(X') - f(X) \right|^q \\
     &  \leq 2^{q-1} \mathbb{E}\sup_{X' \in A(X)}\left| \hat{f}_{\mathrm{PP}}(X') - f(X')  \right|^q + 2^{q-1}\mathbb{E}_X\sup_{X' \in A(X)}\left|  f(X') - f(X) \right|^q.
  \end{split}
\end{equation}
Thus, the task of upper bounding the adversarial risk of $\hat{f}_{\mathrm{PP}}$ boils down to controlling both the \emph{localized uniform risk} $\mathbb{E}\sup_{X' \in A(X)}| \hat{f}_{\mathrm{PP}}(X') - f(X') |^q$ induced by the perturbation set $A(x)$ and the maximum deviation of the regression function $f$ within the adversarial perturbation set, denoted by $\mathbb{E}_X\sup_{X' \in A(X)}|  f(X') - f(X) |^q$. Note that the second term is independent of the estimator and can be bounded by analyzing the local Lipschitz properties of the functions in $\mathcal{F}(\beta, C_{\beta})$. To bound the first term, we utilize the approximation properties of polynomial functions and bound the estimation error by analyzing the sub-Gaussian complexity terms induced by the adversarial set. For a detailed proof, refer to Section~\ref{sec:proof_upper} of the Appendix.

The inequality (\ref{eq:connection1}) reveals a connection between adversarial robustness and localized sup-norm convergence, which seems to have not been stated in the existing literature. To the best of our knowledge, defending adversarial attacks from a sup-norm perspective remains largely unexplored in adversarial learning, except for the recent work by \cite{Peng2024}, which focuses specifically on the convergence of adversarial $L_\infty$-risk. Based on (\ref{eq:connection1}), we conjecture that some local nonparametric estimators exhibiting optimal global sup-norm convergence \citep[see, e.g.,][]{Stone1975nearest, Stone1982global, Bertin2004sup, Gaiffas2007Sharp} may also achieve optimality in localized sup-norm convergence, thereby attaining adversarial optimality.

\begin{remark}
  A distinct feature of the proposed estimator (\ref{eq:lpe_piese}) compared to adversarial training strategies \citep{Madry2018} is that the latter relies on the optimization of an adversarial empirical loss, whereas (\ref{eq:lpe_piese}) is a local averaging estimator. The piecewise structure and the adjusted bandwidth are crucial in achieving adversarial robustness. The minimax rates established in Theorem~\ref{theo:Minimax_rate} show that the proposed estimator is indeed optimal and cannot be improved under the general assumptions we consider. Investigating the optimality of adversarial training strategies is of great interest, however, is beyond the scope of this paper.
\end{remark}

\subsection{Data-driven adaptive estimator}\label{sec:adaptive}

In the previous subsections, we established the minimax optimal rates for adversarial learning over the nonparametric class $\mathcal{P}(\beta)$ and the adversarial attack class $\mathcal{T}^{\prime}(r)$. These optimal rates can be achieved by the $\mathrm{PP}(M,\ell,h,\tau)$ estimator with an $r$-truncated bandwidth, as shown in Theorem~\ref{theo:upper}. A major limitation of this estimator is its dependence on the smoothness parameter $\beta$ and the perturbation magnitude $r$, which are typically unknown in practice. An intriguing and practically important question arises: Can we construct an adaptive estimator based on the data that attains the optimal rates of convergence across some wide scales of parameter spaces $\mathcal{P}(\beta)$ and $\mathcal{T}^{\prime}(r)$? Recall that $\mathcal{T}^{\prime}(r)$, as defined in Theorem~\ref{theo:upper}, consists of all $A$ with attack magnitude no greater than $r$.

In this subsection, we address the aforementioned adaptive problem over the scales
\begin{equation}\label{eq:chuzou}
  \mathbf{P} \triangleq \left\{ \mathcal{P}(\beta): 0 < \beta \leq \beta_{\max} \right\}
\end{equation}
and
\begin{equation}\label{eq:jiuju}
  \mathbf{T} \triangleq \left\{ \mathcal{T}^{\prime}(r): 0 \leq r \leq r_{\max} \right\},
\end{equation}
where $\beta_{\max}>0$ and $r_{\max}>0$ are two arbitrary constants. In the conventional setting without adversarial attacks, adaptation over the scales of smooth function classes can be achieved using Lepski's method \citep{Lepski1991, Lepski1997Inhomogeneous, Lepski1997pointwise, Gaiffas2009sinica} and its extension \citep{GOLDENSHLUGER2008}, model selection \citep[see, e.g.,][]{Barron1999, Yang1999sinica}, and model aggregation \citep[see, e.g.,][]{Yang2001ARM, Juditsky2000, Tsybakov2003, Catoni2004}. Recently, \cite{Chhor2024} achieved the minimax adaptation under the standard $L_2$-risk over the scale $\mathbf{P}$, utilizing the cross-validation approach developed in \cite{Wegkamp2003}.

We construct a data-driven procedure to achieve adaptation over the scales $\mathbf{P}$ and $\mathbf{T}$ simultaneously. The main idea is to first consider a collection of candidate $\mathrm{PP}(M,\ell,h,\tau)$ estimators and then select one of them using Lepski's method. The candidate estimators are constructed with the following parameter settings. Assuming $n$ is large enough, we fix $M \asymp n$ and $\ell \geq \lfloor \beta_{\max} \rfloor$ for all candidates. Following the construction of weakly geometrically increasing blocks from \cite{Goldenshluger2001, Chhor2024}, we define grid points for the smoothness levels $\beta_j$ as follows:
\begin{equation}\label{eq:exinnv5}
  \beta_j \triangleq \left( 1 + \frac{1}{\log n} \right)^j, \quad j = - J,\ldots, J_{\max},
\end{equation}
where $J = 2 \lfloor \log n \log\log n  \rfloor$ and $J_{\max} = J \wedge \lfloor \log n \log \beta_{\max}  \rfloor$. Using (\ref{eq:exinnv5}), we define a set of candidate bandwidths:
\begin{equation}\label{eq:H_set}
  H \triangleq \left\{ h_j: h_j = n^{-\frac{1}{2\beta_j+d}}, j =  - J,\ldots, J_{\max} \right\}.
\end{equation}
 The regularization parameter is set as $\tau = \tau(h) \triangleq (nh^d)^{-1}$. From now on, we further assume that the kernel function satisfies the following assumption.
\begin{assumption}\label{ass:kernel_2}
  Suppose the kernel function satisfies Assumption~\ref{ass:kernel}. Moreover, for any $u_1,u_2 \in \mathbb{R}^d$ such that $\| u_1 \| \leq \| u_2 \|$, we assume that $K(u_1) \geq K(u_2)$.
\end{assumption}

Based on the above construction, selecting the candidate $\mathrm{PP}(M,\ell,h,\tau)$ estimators reduces to choosing a bandwidth from the set (\ref{eq:H_set}). To present the bandwidth selection process via Lepski's method, we introduce the following notation. Define $R_{h'h}$ as the diagonal matrix
\begin{equation}\label{eq:jiaquan3}
  R_{h'h} \triangleq \diag\left[\left(\frac{h^{'|s|}}{h^{|s|}}\right)_{0 \leq |s| \leq \ell}\right],
\end{equation}
whose diagonal entries consist of the ratios of the corresponding powers of the bandwidths $h'$ and $h$, indexed by the order $|s|$. For any $x \in \Omega$, we then define
\begin{equation}\label{eq:D_uk}
  D_{u_xh} \triangleq \diag\left[\left(\frac{1}{nh^d}\sum_{i=1}^{n} \frac{1}{(s!)^2}\left(\frac{X_i - u_x}{h} \right)^{2s} K\left( \frac{X_i-u_x}{h} \right)\right)_{0 \leq |s| \leq \ell}\right],
\end{equation}
and $\tilde{D}_{u_xh} \triangleq D_{u_xh} + \tau I_{N_{\ell,d}} 1_{\{ \lambda_{\min}(D_{u_xh}) < \tau \}}$, where $u_x \in \Lambda_M$ is the closest point in $\Lambda_M$ to $x$. The adaptive estimation proceeds in the following three steps:

\begin{description}
  \item[Step 1:] Construct a set of discrete points $\Lambda_M$, as defined in (\ref{eq:discrete_set}).

  \item[Step 2:] Select the data-driven bandwidth by
  \begin{equation}\label{eq:select:band}
  \begin{split}
     \hat{h}_{x}  & \triangleq \argmax_{ h \in H} \left\{h: \forall h' \in H, h' \leq h, \left\| \tilde{D}_{u_xh'}^{-\frac{1}{2}}\left(  a_{u_xh'} - B_{u_xh'}R_{h'h}\tilde{B}_{u_xh}^{-1}a_{u_xh}\right) \right\|_{\infty}\right.\\
     &\left. \qquad\qquad\qquad\qquad\qquad\qquad\quad\, \leq  C\sqrt{\frac{\log n}{nh'^d}} \right\},
  \end{split}
  \end{equation}
  where $C > 0$ is a sufficiently large constant, and $B_{u_xh}$, $\tilde{B}_{u_xh}$, and $a_{u_xh}$ are defined in (\ref{eq:B}), (\ref{eq:B_tilde}), and (\ref{eq:a}), respectively.
  \item[Step 3:] Output the adaptive estimator at $x$ as
  \begin{equation}\label{eq:buxianliang1}
    \hat{f}_{\mathrm{AP}}(x) = \tilde{f}_{u_x\hat{h}_x}(x),
  \end{equation}
  where $\tilde{f}_{uh}(\cdot)$ is defined in (\ref{eq:lpe_modified}).
\end{description}

Notably, the proposed adaptive estimator $\hat{f}_{\mathrm{AP}}$ does not require knowledge of the true smoothness parameter $\beta$ or the perturbation magnitude $r$. The following theorem provides an upper bound on the adversarial $L_q$-risks of $\hat{f}_{\mathrm{AP}}$.

\begin{theorem}[Adaptive upper bound]\label{theo:adaptive1}
  For any $\mathcal{P}(\beta) \in \mathbf{P}$ and $\mathcal{T}^{\prime}(r) \in \mathbf{T}$, we have
  \begin{equation}\label{eq:adaptive}
    \sup_{\mathbb{P}_{(X,Y)} \in \mathcal{P}(\beta)}\sup_{A \in \mathcal{T}^{\prime}(r)}R_{A,q}(\hat{f}_{\mathrm{AP}}, f) \lesssim \left\{\begin{array}{ll}
r^{q(1 \wedge \beta)} + \left(\frac{n}{\log n}\right)^{-\frac{q\beta}{2\beta+d}} &\quad  1 \leq q < \infty, \\
r^{1 \wedge \beta} + \left(\frac{n}{\log n}\right)^{-\frac{\beta}{2\beta+d}} &\quad q = \infty.\\
    \end{array}\right.
  \end{equation}
\end{theorem}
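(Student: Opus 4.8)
The plan is first to peel a deterministic term off $R_{A,q}(\hat f_{\mathrm{AP}},f)$ for every $A$. For $1\le q<\infty$, the basic inequality (\ref{eq:connection1}) applied to $\hat f_{\mathrm{AP}}$, together with $A(x)\subseteq\Omega$, gives
\begin{equation*}
  R_{A,q}(\hat f_{\mathrm{AP}},f)\le 2^{q-1}\,\mathbb{E}_{\otimes^n}\Big[\big(\sup_{x\in\Omega}|\hat f_{\mathrm{AP}}(x)-f(x)|\big)^q\Big]+2^{q-1}\,\mathbb{E}_X\sup_{X'\in A(X)}\left|f(X')-f(X)\right|^q ,
\end{equation*}
and for $q=\infty$ the triangle inequality yields the analogous decomposition with the first term replaced by $\mathbb{E}_{\otimes^n}\sup_\Omega|\hat f_{\mathrm{AP}}-f|$. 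Since $|f(x')-f(x)|\lesssim\|x'-x\|^{1\wedge\beta}$ for $f\in\mathcal{F}(\beta,C_\beta)$ (the $\beta$-H\"older bound if $\beta\le1$, the bound on $\nabla f$ if $\beta>1$) and $\|X'-X\|\le r$ under any $A\in\mathcal{T}'(r)$, the deterministic term is $\lesssim r^{q(1\wedge\beta)}$, resp.\ $r^{1\wedge\beta}$. Because $\hat f_{\mathrm{AP}}$ uses neither $A$ nor $r$, adaptation over $\mathbf{T}$ is automatic, and the theorem reduces to the adaptive sup-norm estimate
\begin{equation*}
  \sup_{\mathbb{P}_{(X,Y)}\in\mathcal{P}(\beta)}\mathbb{E}_{\otimes^n}\Big[\big(\sup_{x\in\Omega}|\hat f_{\mathrm{AP}}(x)-f(x)|\big)^q\Big]\lesssim\Big(\tfrac{n}{\log n}\Big)^{-q\beta/(2\beta+d)},\qquad \mathcal{P}(\beta)\in\mathbf{P},
\end{equation*}
which I would prove by a Lepski-type oracle argument (for $q=\infty$ only its first-moment version is needed).

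\textbf{Oracle bandwidth and good event.}
For fixed $\beta\in(0,\beta_{\max}]$ and $n$ large, the grid $\{\beta_j\}$ in (\ref{eq:exinnv5}) has spacing $\asymp\beta/\log n$ near $\beta$ and covers a range containing $\beta$ with room on both sides, so one can pick $h^\ast:=h_{j^\ast}\in H$ with $\beta_{j^\ast}-\beta\asymp(2\beta+d)\tfrac{\log\log n}{2\log n}$; this is exactly the scaling for which the squared bias $(h^\ast)^{2\beta}$ and the variance $\log n/(n(h^\ast)^d)$ are both $\asymp(n/\log n)^{-2\beta/(2\beta+d)}$, equivalently $n(h^\ast)^{2\beta+d}\asymp\log n$. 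I then work on a good event $\mathcal{G}$ with $\mathbb{P}(\mathcal{G}^c)\le n^{-K}$, with $K$ arbitrarily large by enlarging the constant $C$ in (\ref{eq:select:band}), on which: (i) $B_{uh^\ast}$ and the diagonal matrix $D_{uh^\ast}$ are within a constant factor of their positive-definite population limits, uniformly over $u\in\Lambda_M$ (legitimate since $n(h^\ast)^d\gg\log n$ and $\mathrm{Card}(\Lambda_M)\asymp n^d$ as $M\asymp n$); (ii) writing $e_{uh}:=\tfrac1{nh^d}\sum_i\xi_iU(\tfrac{X_i-u}{h})K(\tfrac{X_i-u}{h})$, one has $\|\tilde D_{uh'}^{-1/2}e_{uh'}\|_\infty\lesssim\sqrt{\log n/(nh'^d)}$ for all $(u,h')\in\Lambda_M\times H$, because conditionally on $X_1,\dots,X_n$ each coordinate of $\tilde D_{uh'}^{-1/2}e_{uh'}$ is sub-Gaussian with proxy $\lesssim(nh'^d)^{-1}$ (here $\tilde D_{uh'}\succeq D_{uh'}$ is used), followed by a union bound over $n^{O(1)}$ triples; and (iii) $\mathrm{Card}\{X_i\in B(u,h')\}\lesssim nh'^d+\log n$ uniformly in $(u,h')$ (Assumption~\ref{ass:kernel_2} being convenient here for cross-bandwidth comparisons of the design quantities).

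\textbf{Lepski oracle inequality.}
On $\mathcal{G}$, I would first show $h^\ast$ is feasible for (\ref{eq:select:band}), hence $\hat h_x\ge h^\ast$. Writing $a_{uh'}=B_{uh'}\theta^\ast_{uh'}+b_{uh'}+e_{uh'}$ with $(\theta^\ast_{uh})_s=\tilde D^sf(u)\,h^{|s|}$ and $b_{uh}$ the Taylor-remainder term, and using the exact identity $R_{h'h}\theta^\ast_{uh}=\theta^\ast_{uh'}$ together with $\tilde B_{uh^\ast}=B_{uh^\ast}$ on $\mathcal{G}$, the $\theta^\ast$-terms cancel inside $a_{uh'}-B_{uh'}R_{h'h^\ast}\tilde B_{uh^\ast}^{-1}a_{uh^\ast}$, leaving $b_{uh'}+e_{uh'}-B_{uh'}R_{h'h^\ast}B_{uh^\ast}^{-1}(b_{uh^\ast}+e_{uh^\ast})$; the $\tilde D_{uh'}^{-1/2}$-weighted $\ell_\infty$-norm of each of the four pieces is $\lesssim\sqrt{\log n/(nh'^d)}$ — the bias pieces because $|(b_{uh})_s|\lesssim h^\beta\sqrt{(D_{uh})_{ss}(D_{uh})_{00}}$ by Cauchy–Schwarz, $h'^\beta\le(h^\ast)^\beta$, and $n(h^\ast)^{2\beta+d}\asymp\log n$; the noise piece by (ii); the transferred pieces because $B_{uh^\ast}^{-1}$ and $D_{uh^\ast}$ are well-conditioned on $\mathcal{G}$. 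Second, since $h^\ast\le\hat h_x$, feasibility of $\hat h_x$ gives (\ref{eq:select:band}) at $(h',h)=(h^\ast,\hat h_x)$; the identity $R_{h^\ast\hat h_x}U(\tfrac{x-u_x}{h^\ast})=U(\tfrac{x-u_x}{\hat h_x})$ lets one write $\tilde f_{u_x\hat h_x}(x)-\tilde f_{u_xh^\ast}(x)=w^\top U(\tfrac{x-u_x}{h^\ast})$ with $w=\tilde B_{u_xh^\ast}^{-1}a_{u_xh^\ast}-R_{h^\ast\hat h_x}\tilde B_{u_x\hat h_x}^{-1}a_{u_x\hat h_x}$, and on $\mathcal{G}$ the vector inside (\ref{eq:select:band}) equals $B_{u_xh^\ast}w$, so well-conditioning of $\tilde D_{u_xh^\ast}^{-1/2}B_{u_xh^\ast}$ forces $\|w\|\lesssim\sqrt{\log n/(n(h^\ast)^d)}$. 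Combined with the standard single-bandwidth bound $|\tilde f_{u_xh^\ast}(x)-f(x)|\lesssim(h^\ast)^\beta+\sqrt{\log n/(n(h^\ast)^d)}$ — the discretization contribution $\|x-u_x\|^{\beta}\lesssim M^{-\beta}$ being negligible as $M\asymp n$ and $h^\ast\gg 1/n$ — this yields $\sup_{x\in\Omega}|\hat f_{\mathrm{AP}}(x)-f(x)|\lesssim(n/\log n)^{-\beta/(2\beta+d)}$ on $\mathcal{G}$. On $\mathcal{G}^c$ I would use the crude bound $\sup_\Omega|\hat f_{\mathrm{AP}}|\lesssim\mathrm{poly}(n)\max_i|Y_i|$, valid because $\tau=(nh^d)^{-1}\ge n^{-O(1)}$ keeps $\|\tilde B_{uh}^{-1}\|\le 1/\tau\le n^{O(1)}$ (the regularization playing here the role of a truncation, as in Theorem~\ref{theo:upper}), so that $\mathbb{E}[\max_i|Y_i|^{2q}]\lesssim(\log n)^q$ and Cauchy–Schwarz make the $\mathcal{G}^c$-contribution to the $q$-th moment $O(n^{-K/2}\mathrm{poly}(n))$, negligible beside the rate.

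\textbf{Main obstacle.}
The delicate step is feasibility of $h^\ast$: (\ref{eq:select:band}) must be checked against \emph{every} $h'\in H$ with $h'\le h^\ast$, including $h'$ so small that $nh'^d=O(1)$ and $B_{uh'},D_{uh'}$ are degenerate, where no concentration is available and one must argue purely through the regularization $\tilde D_{uh'}\succeq\max(D_{uh'},\tau I)$ and the Cauchy–Schwarz comparisons $|(B_{uh'})_{ss'}|\le\sqrt{(D_{uh'})_{ss}(D_{uh'})_{s's'}}$ and $|(b_{uh'})_s|\lesssim h'^\beta\sqrt{(D_{uh'})_{ss}(D_{uh'})_{00}}$; driving all four pieces below $C\sqrt{\log n/(nh'^d)}$ for every such $h'$ is the technical crux. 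A secondary point is the endpoint $\beta=\beta_{\max}$, where the grid (\ref{eq:exinnv5}) should be taken marginally wider than $\beta_{\max}$ — harmless since $\beta_{\max}$ is arbitrary — so that a $\beta_{j^\ast}$ with offset $\asymp(2\beta+d)\log\log n/(2\log n)$ remains available.
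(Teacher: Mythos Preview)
Your proposal is correct and follows the same high-level architecture as the paper: split off the deterministic H\"older term, define an oracle bandwidth $\bar h$ with $n\bar h^{2\beta+d}\asymp\log n$, prove $\bar h$ is feasible for (\ref{eq:select:band}) (hence $\hat h_x\ge\bar h$) with overwhelming probability, compare $\tilde f_{u_x\hat h_x}$ to $\tilde f_{u_x\bar h}$ via the criterion inequality at $(h',h)=(\bar h,\hat h_x)$, and control the bad event by the $\tau$-regularization. The paper also works through the localized sup over $I_x$ rather than passing immediately to a global $\sup_\Omega$, but this is cosmetic here.

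The one genuine methodological difference is in the feasibility step, exactly where you flag the ``main obstacle''. You propose to split $a_{uh'}-B_{uh'}R_{h'\bar h}\tilde B_{u\bar h}^{-1}a_{u\bar h}$ into four pieces and bound each through Cauchy--Schwarz and the regularization $\tilde D_{uh'}$, which can be pushed through but is delicate for very small $h'$. The paper instead exploits an exact algebraic identity: using Assumption~\ref{ass:kernel_2} to define $K_{h'\bar h}=\mathrm{diag}\big(\sqrt{K((X_i-u)/h')/K((X_i-u)/\bar h)}\big)$, one has $Q_{uh'}^{\top}R_{h'\bar h}=\sqrt{\bar h^d/h'^d}\,K_{h'\bar h}Q_{u\bar h}^{\top}$ and $K_{h'\bar h}\eta_{u\bar h}=\eta_{uh'}$, so the criterion quantity becomes
\[
\frac{1}{\sqrt{nh'^d}}\,\tilde D_{uh'}^{-1/2}Q_{uh'}K_{h'\bar h}\big[I-Q_{u\bar h}^{\top}(Q_{u\bar h}Q_{u\bar h}^{\top})^{-1}Q_{u\bar h}\big]\eta_{u\bar h}\;+\;\text{noise terms}.
\]
Because $I-Q_{u\bar h}^{\top}(Q_{u\bar h}Q_{u\bar h}^{\top})^{-1}Q_{u\bar h}$ is an orthogonal projection annihilating $t_{u\bar h}$, and the entries of $K_{h'\bar h}$ are at most $1$, the bias part is bounded by $(nh'^d)^{-1/2}\|\eta_{u\bar h}-t_{u\bar h}\|\lesssim(nh'^d)^{-1/2}\bar h^{\beta}\sqrt{n\bar h^d}\le\sqrt{\log n/(nh'^d)}$, with no quantity at the small bandwidth $h'$ needing to concentrate. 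The noise terms are then handled by a single sub-Gaussian bound with proxy $\lesssim(nh'^d)^{-1}$. This projection trick is what makes the small-$h'$ regime painless and is the main technical idea your proposal is missing; your Cauchy--Schwarz route is a valid alternative but requires the extra count bound $n_{uh'}\lesssim nh'^d+\log n$ that the paper avoids.
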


The proof of Theorem~\ref{theo:adaptive1} is based on a decomposition similar to that in (\ref{eq:connection1}), which bounds the adversarial $L_q$-risk by the localized sup-norm risk and the maximum deviation of the regression function. The proposed criterion (\ref{eq:select:band}) extends the classical Lepski's method \citep{Lepski1991, Lepski1997Inhomogeneous, Lepski1997pointwise} to the nonparametric regression setting with random design, aiming to achieve adaptation over the scales $\mathbf{P}$ and $\mathbf{T}$ under the localized sup-norm risk. The detailed proof of Theorem~\ref{theo:adaptive1} is provided in Section~\ref{sec:proof:buxiang} of the Appendix.

Comparing the convergence rates in (\ref{eq:adaptive}) with the (non-adaptive) minimax optimal rates in (\ref{eq:minimax_rate}), we observe that the data-driven estimator $\hat{f}_{\mathrm{AP}}$ attains the minimax optimal rate over the scales $\mathbf{P}$ and $\mathbf{T}$ under the adversarial sup-norm risk. In the case of adversarial $L_q$-risk with $1 \leq q < \infty$, however, the adaptive upper bound in (\ref{eq:adaptive}) incurs a logarithmic loss.

\begin{remark}

The logarithmic efficiency loss observed in Theorem~\ref{theo:adaptive1} is a common phenomenon in other adaptation problems employing Lepski's method \cite[see, e.g.,][]{Lepski2014single, Cai2021, Tang2023manifold}.  In the standard setting without adversarial perturbations ($r=0$), the non-adaptive minimax $L_2$ rate matches the adaptive minimax rate, which can be achieved through the methods including adaptive regression by mixing \citep{Yang2001ARM} and cross-validation \citep{Chhor2024}. However, when the risk is evaluated locally, such as in a pointwise risk, there is an inherent logarithmic gap between adaptive and non-adaptive optimal rates \citep[see, e.g.,][]{Lepski1991, Brown1996constrained, Lepski1997pointwise, Gaiffas_2007_ESAIM, Cai2005shrinking}. Whether the logarithmic increase is also an unavoidable cost for adaptive adversarial learning remains an open question.

\end{remark}

\section{Proof of main results}\label{sec:roadmap}

To establish the minimax rates presented in Theorem~\ref{theo:Minimax_rate}, it is sufficient to construct the minimax upper and lower bounds and to show that these bounds match to constant factors. The proofs of the minimax upper bounds follow the strategy outlined in Section~\ref{sec:local_poly} and are provided in the Appendix due to space constraints. In this section, we prove the following minimax lower bounds on the adversarial risks.

\begin{theorem}[Minimax lower bound]\label{theo:lower}
  For any $A \in \mathcal{T}(r)$, the minimax adversarial $L_q$-risks are lower bounded by
  \begin{equation}\label{eq:lower_bound}
    \inf_{\hat{f}}\sup_{\mathbb{P}_{(X,Y)} \in \mathcal{P}(\beta)}R_{A,q}(\hat{f}, f) \gtrsim \left\{\begin{array}{ll}
r^{q(1 \wedge \beta)} + n^{-\frac{q\beta}{2\beta+d}} &\quad  1 \leq q < \infty, \\
r^{1 \wedge \beta} + \left(\frac{n}{\log n}\right)^{-\frac{\beta}{2\beta+d}} &\quad q = \infty,\\
    \end{array}\right.
  \end{equation}
  where the infimum is taken over all measurable estimators $\hat{f}$.
\end{theorem}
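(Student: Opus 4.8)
The lower bound is the sum of two terms, and I would establish each separately, then combine them by noting that a maximum over two subfamilies of $\mathcal{P}(\beta)$ is always at least each individual minimax risk. The term $n^{-q\beta/(2\beta+d)}$ (resp. $(n/\log n)^{-\beta/(2\beta+d)}$ for $q=\infty$) is just the classical nonparametric minimax lower bound for the H\"older class under the standard $L_q$- (resp. sup-norm) risk: since $x\in A(x)$, the adversarial risk dominates the standard one (take $X'=X$ in the supremum), so any known lower bound construction — Assouad's lemma for $1\le q<\infty$ with a hypercube of bump perturbations on a grid of spacing $\asymp n^{-1/(2\beta+d)}$, and Fano/Varshamov-Gilbert with the $\log n$ factor for $q=\infty$ — transfers verbatim. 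So the real content is the adversarial term $r^{q(1\wedge\beta)}$.

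For the adversarial term I would fix the attack direction $v$ of the given $A\in\mathcal{T}(r)$ and reduce to a two-point (or few-point) testing problem along that direction. The idea is to pick a base point $x_0$ in the interior of $\Omega$ and construct two regression functions $f_0,f_1\in\mathcal{F}(\beta,C_\beta)$ that are \emph{indistinguishable from the data} — i.e. agree on a neighborhood where all $X_i$ effectively lie, or more precisely are close enough in the relevant statistical distance that no test based on $Z_n$ separates them with probability bounded away from $1/2$ — yet differ at the shifted point: $|f_1(x_0 + c v) - f_0(x_0+c v)|$ or the relevant cross term is of order $r^{1\wedge\beta}$ for some $|c|\asymp r$ guaranteed to lie in $A(x_0)$ by the SODA property (this is exactly where condition (ii) of Definition~\ref{def:adversarial}, $k'_{\max}(x)=k_{\max}(x)\wedge(\bar c r)$, is used — it forces $A(x_0)$ to contain a point at distance $\asymp r$ along $v$). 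Because the adversary evaluates $\hat f(X')$ against $f(X)$, even an estimator that knows $f$ on the support of the design must commit to a single value near $x_0+cv$ and thus incurs error $\gtrsim r^{1\wedge\beta}$ under one of the two hypotheses; taking $q$-th powers and expectations gives $\gtrsim r^{q(1\wedge\beta)}$. For $0<\beta\le 1$ the separating functions are essentially $f_0\equiv 0$ and $f_1(x)= c_\beta\,\rho(\|x-x_0\|/\delta)\,\delta^{\beta}$-type bumps with $\delta\asymp r$, whose sup-norm is $\asymp r^\beta$; for $\beta>1$ one instead uses a smooth function with gradient of unit order in the $v$-direction localized at scale $\asymp 1$ (not shrinking), so that the deviation over a displacement of size $r$ is $\asymp r$, i.e. $r^{1\wedge\beta}=r$.

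The mechanism that makes the two hypotheses statistically indistinguishable while still differing at $x_0+cv$ is the crux and the main obstacle. One clean route: choose the density $\mu$ (allowed to vary within $\mathcal{S}(\mu_{\min},\mu_{\max})$) and the functions so that $f_0$ and $f_1$ \emph{coincide on $\Omega$ except on a tiny ball $B(x_0+cv,\epsilon)$} that the design almost surely misses when $\epsilon$ is small — but $\mathcal{F}(\beta,C_\beta)$ is a H\"older ball, and a function cannot jump by $r^{1\wedge\beta}$ over a ball of radius $\epsilon\ll r$ without violating the H\"older norm, so $\epsilon$ cannot be taken arbitrarily small; it must be $\gtrsim r$. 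Hence the perturbation region has $\mathbb{P}_X$-mass $\gtrsim r^d$, and a single observation falling in it \emph{can} help distinguish the hypotheses. The resolution is a Bayesian/Assouad-type argument: instead of a single bump, place $\asymp (1/\delta)^{d}$ disjoint bumps of width $\delta\asymp r$ over a region of constant volume, randomize their signs, and argue that the posterior over sign patterns stays diffuse because each bump is "seen" by only $O(nr^d)$ observations with noise level $\sigma$, so the per-coordinate testing affinity is bounded below whenever the bump height $r^{1\wedge\beta}$ is small relative to $\sigma/\sqrt{nr^d}$ — but here we do \emph{not} want indistinguishability to fail, so we instead only need \emph{one well-chosen} bump and the observation that the estimator's \emph{adversarial} error at the shifted point is a deterministic function of $Z_n$: by a standard two-point bound (Le Cam), $\inf_{\hat f}\max_{i\in\{0,1\}}\mathbb{E}_i|\hat f(x_0+cv)-f_i(x_0+cv)|^q \ge \tfrac{1}{2}(r^{1\wedge\beta})^q\,(1-\mathrm{TV}(\mathbb{P}^{\otimes n}_{0},\mathbb{P}^{\otimes n}_{1}))$, and one checks $\mathrm{TV}\le 1/2$ by bounding the chi-square or KL between the two product measures by $n\cdot \mathbb{E}_X[(f_1-f_0)^2(X)]/(2\sigma^2)\asymp n r^{d}(r^{1\wedge\beta})^2$, which is $\lesssim 1$ precisely in the regime $r\lesssim$ (critical value) where this term is meant to be non-negligible; in the complementary regime the first (standard) term already dominates, so nothing is lost. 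Thus the plan is: (1) standard lower bound for the $n^{-q\beta/(2\beta+d)}$ term via Assouad/Fano; (2) a Le Cam two-point bound along the SODA direction for the $r^{q(1\wedge\beta)}$ term, with the bump width calibrated to $r$ so that KL $=O(1)$; (3) split on whether $r$ is below or above the critical value to see that in each regime the claimed term is the dominant one, and combine. I expect step (2) — in particular verifying that the SODA conditions genuinely place a point at distance $\asymp r$ in $A(x_0)$ for an \emph{interior} $x_0$, uniformly over all $A\in\mathcal{T}(r)$, and handling the $q=\infty$ case where the extra $\log n$ must come from a Fano argument over $\asymp \mathrm{poly}(n)$ many shifted test points — to be the main technical obstacle.
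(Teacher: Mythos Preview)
Your regime analysis in step (2) is backwards, and this is a genuine gap. You write that the KL bound $n r^d (r^{1\wedge\beta})^2 \lesssim 1$ holds ``precisely in the regime $r\lesssim$ (critical value) where this term is meant to be non-negligible; in the complementary regime the first (standard) term already dominates.'' But the adversarial term $r^{q(1\wedge\beta)}$ dominates the standard term $n^{-q\beta/(2\beta+d)}$ precisely when $r$ is \emph{above} the critical value, not below. And above the critical value, your KL is $\gtrsim 1$ (for $\beta\le 1$ the crossover is exactly at the critical value; for $\beta>1$ it happens a bit later, at $r\asymp n^{-1/(d+2)}$, but still leaves the whole range $n^{-1/(d+2)}\lesssim r\lesssim 1$ uncovered). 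So your two-point Le Cam argument delivers the bound $r^{q(1\wedge\beta)}$ only in the regime where it is already subsumed by the standard rate, and fails in the regime where it is actually needed. The Assouad variant you sketch has the same defect: per-coordinate KL for a bump of width $r$ and height $r^{1\wedge\beta}$ is again $\asymp n r^{d+2(1\wedge\beta)}$.

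The paper's proof does not rely on statistical indistinguishability for the adversarial term at all. Instead it constructs a single base function $f_0\in\mathcal{F}(\beta,C_\beta/2)$ with large \emph{built-in oscillation} along the SODA direction $v$: for $\beta\le 1$, a periodic array of $\asymp 1/r$ flat-topped bumps of height $\asymp r^\beta$; for $\beta>1$, a single smooth monotone ramp with unit-order gradient. The key deterministic fact (their Lemma~1) is that the functional $G_{A,q}(g)=\tfrac12\|\sup_{x'\in A(\cdot)}g(x')-\inf_{x'\in A(\cdot)}g(x')\|_q$ is stable under $L_q$-perturbation: any $g$ with $\|g-f_0\|_q\le C_1 r^{1\wedge\beta}$ satisfies $G_{A,q}(g)\ge C_2 r^{1\wedge\beta}$. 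Since one always has $d_{A,q}(\hat f,f)\ge G_{A,q}(\hat f)$ and $d_{A,q}(\hat f,f)\ge \|\hat f-f\|_q$, this forces a dichotomy: either $\hat f$ is $L_q$-far from $f_0$ (adversarial loss large by the second inequality) or it is close (adversarial loss large by the first inequality via Lemma~1). No hypothesis-testing bound is invoked for this part; the localized Fano packing around $f_0$ is used only to merge the two terms into a single inequality and to recover the standard $\epsilon_n$ rate in the same stroke.

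There is also a secondary issue with your pointwise framing for $1\le q<\infty$: a lower bound on $|\hat f(x_0+cv)-f_i(x_0)|$ at a single $x_0$ says nothing about the integrated $L_q$ risk, since $\{x_0\}$ has measure zero. The paper's construction avoids this by making the oscillation of $f_0$ uniform over a region of constant Lebesgue measure (the full period array for $\beta\le 1$, the ramp's constant-slope region for $\beta>1$), so that $G_{A,q}(f_0)\gtrsim r^{1\wedge\beta}$ as an $L_q$-norm, not merely at a point. The ``flat-top/flat-bottom'' feature of the bumps is there specifically to make the pushforward measures under the extremizing maps $\delta_0^u,\delta_0^l$ have bounded density, which is what allows the $L_q$-stability in Lemma~1 to go through.
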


The lower bound $r^{1 \wedge \beta} + (n/\log n)^{-\beta/(2\beta+d)}$ for $q = \infty$ can be derived using the technique developed in \cite{Peng2024}. A complete proof for this case is provided in the Appendix. In the following, we focus on establishing the lower bound for the cases $1 \leq q < \infty$. The primary strategy for deriving minimax lower bounds involves fixing an $A\in \mathcal{T}(r)$ and constructing a subset of distributions within $\mathcal{P}(\beta)$ that are statistically hard to distinguish. In particular, under the adversarial attack $A$, no estimation procedure can uniformly estimate the regression functions of this subset of distributions with high accuracy.

Specifically, we derive our lower bounds using a localized version of Fano's method \citep{Yang1999Information, Yang2004improve, Wang2014Adaptive}. For standard nonparametric regression problems, the classic Fano's and Le Cam's methods \citep[see, e.g.,][]{LeCam1973, Birge1983, Tsybakov2009book} typically suffice to establish minimax lower bounds by considering a set of distributions that differ locally, without imposing particular requirements on the base distribution around which they are localized. Often, the base distribution is chosen to have simple features, such as a regression function $f(x) = 0$ \citep[see, e.g., Chapter~2.6.1 of][]{Tsybakov2009book}. However, in the adversarial setting, this construction makes the set of regression functions overly flat and therefore incapable of obtaining the slower rate $r^{q(1 \wedge \beta)}$ when the attack magnitude is relatively large. In this section, we construct a set of distributions localized around a carefully chosen base distribution, ensuring that any estimation procedure achieving good regression estimation on these distributions remains significantly vulnerable to the adversarial attack $A$.

We introduce several notations that will be useful. For any functions $f$ and $g$ in the $L_q([0,1]^d)$ space, define $\| f \|_q \triangleq [\int_{[0,1]^d} |f(x)|^q dx]^{1/q}$ as the standard $L_q$-norm of $f$, and $d_q(f,g) \triangleq \| f - g\|_q$ as the standard $L_q$-distance between the functions $f$ and $g$. Given an adversarial strategy $A$, the adversarial $L_q$-distance between $f$ and $g$ is defined as
\begin{equation}\label{eq:gunbal1}
  d_{A,q}(f,g) \triangleq \left[\int_{[0,1]^d} \sup_{x' \in A(x)}\left|f(x') - g(x)\right|^q dx\right]^{\frac{1}{q}}.
\end{equation}
It is worth noting that $d_{A,q}$ is neither a metric nor a semi-metric, as it is not symmetric and does not satisfy the triangle inequality. Additionally, we introduce the functional
\begin{equation}\label{eq:gunbal2}
  G_{A,q}(f) \triangleq  \frac{1}{2}\left\{\int_{[0,1]^d} \left[\sup_{x' \in A(x)}f(x') - \inf_{x' \in A(x)}f(x')\right]^q dx\right\}^{\frac{1}{q}}.
\end{equation}
to measure the maximum deviation of $f$ within the perturbation set $A$. For any real number $a$, let $a_+ \triangleq \max(a,0)$.

\subsection{Construction of base distributions}\label{sec:base}

Without loss of generality, we focus on an adversarial attack $A \in \mathcal{T}(r)$ with the principle direction $v=(1,0,\ldots,0)^{\top}$. Otherwise, we can construct new functions from the functions $f_0$ defined below by rotations of axes. Note that the rotation of a function does not change the smoothness properties of the original function. To construct the base distributions, we consider a setting where the marginal distribution $\mathbb{P}_X$ is a uniform distribution on $[0,1]^d$, and $\xi$ follows a Gaussian distribution $N(0,\sigma^2)$ conditioned on $X$. This setup satisfies the conditions outlined in Definitions~\ref{def:error_moment}--\ref{def:strong_density}. The regression function for the base distribution is constructed as follows.

Suppose $r< 1/8$. For $0 < \beta \leq 1$, let us define a function $\phi_0$ on $[0,1]$ by
\begin{equation}\label{eq:phi_00}
\phi_0(x)\triangleq\left\{\begin{array}{ll}
 0  &\quad 0\leq x<\frac{1}{4}, \\
 \left(x - \frac{1}{4}\right)^{\beta} &\quad \frac{1}{4} \leq  x < \frac{1}{2}, \\
 \frac{1}{4^{\beta}} &\quad \frac{1}{2} \leq  x <  \frac{3}{4},\\
 (1-x)^{\beta} &\quad \frac{3}{4} \leq  x \leq 1.
\end{array}\right.
\end{equation}
Define $a_k \triangleq 8 k r $ for $k=0,\ldots,K$, where $K \triangleq \lfloor  1/(8r) \rfloor >1$. The base regression function $f_0:[0,1]^d \to \mathbb{R}$ is defined by
\begin{equation}\label{eq:f_00}
f_0(x)=\left\{\begin{array}{ll}
 \frac{C_{\beta}}{2}(8r)^{\beta}\phi_0\left( \frac{x_1-a_{k-1}}{8r} \right)  &\quad a_{k-1}\leq x_1<a_k,\text{\quad for\;} k=1,\ldots,K, \\
 0 &\quad a_{K} \leq x_1 \leq 1. \\
\end{array}\right.
\end{equation}
An example of the graph of (\ref{eq:f_00}) along the coordinate $x_1$ is depicted in Figure~\ref{fig:bump1}.
\begin{figure}[!t]
  \centering
  \includegraphics[width=5in]{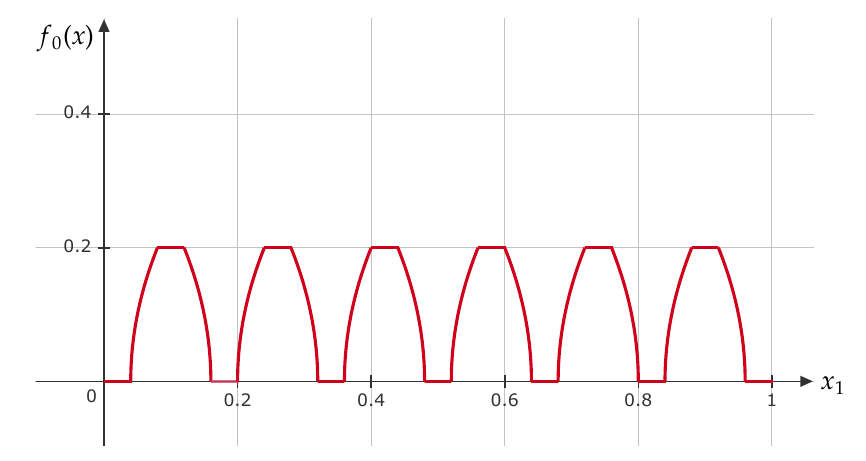}
  \caption{An example of $f_0$ defined in (\ref{eq:f_00}) with $C_{\beta} = 2$, $\beta = 1/2$, and $r=0.02$.}
  \label{fig:bump1}
\end{figure}

For the cases where $\beta>1$, consider the bump function on $\mathbb{R}$ defined by:
\begin{equation}\label{eq:bump000}
  \psi_0(x) \triangleq \exp\left[ -\frac{1}{1 - (x - 1)^2} \right] 1_{\{0 \leq x \leq 1\}}.
\end{equation}
We construct the base function $f_0:[0,1]^d \to \mathbb{R}$ as:
\begin{equation}\label{eq:bump}
f_0(x)=\left\{\begin{array}{ll}
 0  &\quad 0\leq x_1<2r, \\
 B_{\beta}\psi_0\left( \frac{x-2r_n}{1-4r_n} \right) &\quad 2r \leq  x_1 < 1 - 2r, \\
 B_{\beta}\exp(-1) &\quad 1 - 2r \leq  x_1 \leq  1,
\end{array}\right.
\end{equation}
where $B_{\beta} >0$ is a constant. An example of (\ref{eq:bump}) is illustrated in Figure~\ref{fig:bump2}.
\begin{figure}[!t]
  \centering
  \includegraphics[width=4.7in]{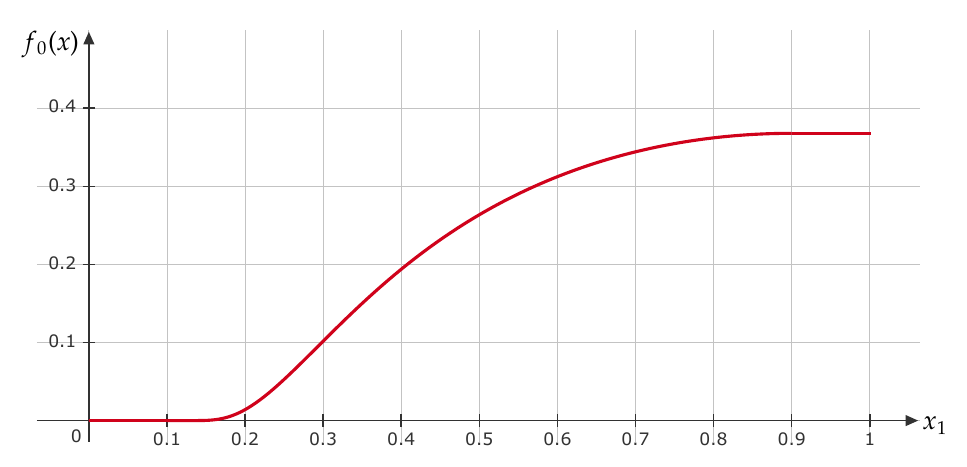}
  \caption{An example of $f_0$ defined in (\ref{eq:bump}) with $B_{\beta} = 1$ and $r=0.05$. }
  \label{fig:bump2}
\end{figure}

The following lemma demonstrates that the base regression functions (\ref{eq:f_00}) and (\ref{eq:bump}) possess two crucial properties. First, there exists a constant $C>0$ such that $G_{A,q}(f_0) \geq Cr^{1\wedge \beta}$. In addition, the flat-top and flat-bottom constructions ensure that any local maximum or minimum of $f_0$ occurs on a set of non-zero measure. As will be shown in the proof of Lemma~\ref{lem:mirror}, this property guarantees that any function $g$ close to $f_0$ in $L_q$-distance also exhibits a large deviation as $G_{A,q}(f_0)$.

\begin{lemma}\label{lem:mirror}

Consider the adversarial attack $A$ in $\mathcal{T}(r)$ with $0 \leq r < 1/8$. Suppose $A$ has a principal direction $v=(1,0,\ldots,0)^{\top}$. Then, we have $f_0 \in \mathcal{F}(\beta, C_{\beta}/2)$ if $B_{\beta}$ is chosen to be sufficiently small. Moreover, for any function $g$ satisfying $\| g - f_0\|_q \leq C_1 r^{1\wedge \beta}$, we have
  \begin{equation*}
    G_{A,q}(g) \geq C_2r^{1\wedge \beta},
  \end{equation*}
  where $1 \leq q < \infty$, and $C_1$ and $C_2$ are two positive constants with $0<C_2 < 2C_1$.
\end{lemma}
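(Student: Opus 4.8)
The plan is to verify the two claims separately. For the smoothness claim $f_0 \in \mathcal{F}(\beta, C_\beta/2)$: in the case $0 < \beta \leq 1$, $f_0$ is obtained by gluing together scaled copies of $\phi_0$ on the blocks $[a_{k-1}, a_k)$, plus a flat zero piece. Since $\phi_0$ is Lipschitz of order $\beta$ with an explicit constant on $[0,1]$ (the pieces $(x-1/4)^\beta$ and $(1-x)^\beta$ control the H\"older seminorm, and the flat pieces match continuously at the junctions $x = 1/2$, $x = 3/4$, and the value is $0$ at $x = 1/4$ and at $x=1$), the rescaling $x_1 \mapsto (C_\beta/2)(8r)^\beta \phi_0((x_1 - a_{k-1})/(8r))$ is designed precisely so that the H\"older-$\beta$ seminorm of each block equals $C_\beta/2$ times the seminorm of $\phi_0$, and the sup-norm is $(C_\beta/2)(8r)^\beta \cdot 4^{-\beta} \le C_\beta/2$ for $r < 1/8$; across blocks, the function vanishes at both endpoints of each block so the global H\"older constant is no larger than the per-block one (here I would use the standard fact that for $0<\beta\le 1$ a continuous piecewise-$\beta$-H\"older function that vanishes at all the breakpoints has global seminorm equal to the max of the per-block seminorms). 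Choosing $C_\beta$ (equivalently, verifying the normalizing factor $C_\beta/2$) then gives membership in $\mathcal{F}(\beta, C_\beta/2)$. For $\beta > 1$, $f_0$ is a single rescaled bump $\psi_0$ (which is $C^\infty$ with compact support) glued to constant pieces where all derivatives vanish; the chain rule introduces a factor $(1-4r)^{-|s|}$ on the $s$-th derivative, which is bounded since $r < 1/8$, so multiplying by a sufficiently small constant $B_\beta$ forces all the derivative bounds and the H\"older bound on the top-order derivatives below $C_\beta/2$.

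The deviation claim is the substantive part. The key structural feature of $f_0$, emphasized in the text, is that its global maximum is attained on a set of positive Lebesgue measure — for $0 < \beta \le 1$, on each block $f_0$ equals its block-maximum $(C_\beta/2)(8r)^\beta 4^{-\beta}$ on an interval of length $2r$ in the $x_1$-coordinate (the piece $[1/2, 3/4]$ rescaled), and likewise attains the value $0$ on an interval of length $2r$; for $\beta>1$ there is a flat region near $x_1 = 1-2r$ and a flat zero region near $x_1 = 0$. I first establish $G_{A,q}(f_0) \ge C r^{1\wedge\beta}$ directly: since $A$ has principal direction $v = e_1$ and magnitude $r$, the SODA conditions (Definition~\ref{def:adversarial}(i)--(ii)) guarantee that for $x$ in the interior, $A(x)$ contains a segment $\{x + k e_1 : -\underline{c}r \le k \le \bar c r\}$ (truncated only near $\partial\Omega$); because the ascending ramp of $f_0$ in each block has width $8r$ and the flat top has width $2r$, one can find a set of $x$'s of measure $\gtrsim 1$ (a constant fraction of each block, uniformly in $k$) for which $\sup_{x'\in A(x)} f_0(x') - \inf_{x'\in A(x)} f_0(x') \gtrsim (C_\beta/2)(8r)^\beta \cdot (\underline{c}+\bar c)^\beta \asymp r^\beta$ when $\beta \le 1$ (the ramp has slope-like increment $\asymp r^{\beta-1}$ over a window of width $\asymp r$, giving increment $\asymp r^\beta$), and $\asymp r$ when $\beta > 1$ (the bump is Lipschitz with a fixed nonzero slope somewhere, so a window of width $r$ produces increment $\asymp r$). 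Integrating the $q$-th power over that set of $x$'s of measure $\gtrsim 1$ yields $G_{A,q}(f_0) \gtrsim r^{1\wedge\beta}$.

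Next I upgrade this from $f_0$ to any $g$ with $\|g - f_0\|_q \le C_1 r^{1\wedge\beta}$. The idea is a "stability of large oscillation under small $L_q$-perturbation" argument exploiting the positive-measure flat regions. For each block, let $S^+$ be the $x$-set (measure $\asymp 1$ per unit, so $\asymp 1$ overall up to the number of blocks... more precisely, on $[0,1]^d$ the flat-max set has $x_1$ in a union of intervals of total length $\asymp 1$) where $f_0$ equals its max $m_+ \asymp r^\beta$ (for $\beta\le1$; $\asymp 1$... — no: $m_+ = (C_\beta/2)(8r)^\beta 4^{-\beta}\asymp r^\beta$), and $S^-$ where $f_0 = 0$; note $S^+$ and $S^-$ can be chosen so that for a set of $x$'s of measure $\gtrsim 1$, $A(x)$ meets both a neighborhood of $S^+$ and of $S^-$, OR — the cleaner route — for $x$ ranging over the flat-top interval, $\inf_{x'\in A(x)}g(x')$ cannot be much larger than $\inf$ of $g$ on a fixed $r$-window, and similarly for the flat-bottom. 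Concretely: if $\|g-f_0\|_q \le C_1 r^{1\wedge\beta}$ with $C_1$ small relative to the oscillation constant, then on a positive-measure subset of the flat-top region $g \ge m_+ - \varepsilon$ and on a positive-measure subset of the flat-bottom region $g \le \varepsilon$, for suitable small $\varepsilon = \varepsilon(C_1)$; by choosing the SODA window I can, for $x$ in a positive-measure set, guarantee $A(x)$ reaches into the first subset and also (since the flat regions are adjacent within distance $O(r)$ along $e_1$) — here is where I must be careful about exactly which $x$ — find that $\sup_{x'\in A(x)} g(x') - \inf_{x'\in A(x)} g(x') \ge m_+ - 2\varepsilon \asymp r^{1\wedge\beta}$. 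Raising to the $q$-th power and integrating gives $G_{A,q}(g) \ge C_2 r^{1\wedge\beta}$ with $0 < C_2 < 2C_1$; tracking constants shows the inequality $C_2 < 2C_1$ is exactly what the Markov-type step ($\lambda\{|g-f_0| > t\} \le (C_1 r^{1\wedge\beta}/t)^q$) affords.

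The main obstacle I anticipate is the geometric bookkeeping in the last paragraph: making precise, for a general $(e_1, r)$-SODA $A$ (which need not be an $\ell_p$-ball and may be lower-dimensional), the claim that a positive-measure set of centers $x$ have $A(x)$ simultaneously "seeing" both a high-value and a low-value flat region of $g$. The SODA conditions give a guaranteed segment along $e_1$ of length between $\underline{c}r$ and $\bar c r$ on each side (up to boundary effects near the vertices of $[0,1]^d$), and the block geometry ($a_k - a_{k-1} = 8r$, ramp width $8r$ with flat top of width $2r$ and flat bottom of width $2r$ within each period) is tuned so that such a segment, for $x$ in a fixed fraction of each block, straddles enough of the ramp to produce increment $\gtrsim r^{1\wedge\beta}$; I will need to handle the case $\underline{c} = \bar c = 0$ is excluded since then $r$ would not be the magnitude — actually the definition allows $\underline c, \bar c \in [0,1]$, but $r = \sup\|x'-x\|$ forces at least one of them bounded below in a way that... — this edge case (one-sided attacks, $\underline c = 0$) needs the ramp-straddling argument to work with a one-sided window, which it does because the ramp itself has width $8r \gg \bar c r$. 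Getting the multiplicative constants to line up so that the final bound genuinely has $C_2 < 2C_1$ (rather than merely $C_2 \le C\cdot C_1$ for some $C$ that could exceed $2$) will require choosing $C_1$ small at the outset — which is legitimate since $C_1$ is a free hypothesis constant — and then defining $C_2$ in terms of it.
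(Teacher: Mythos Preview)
Your smoothness verification and the direct computation of $G_{A,q}(f_0) \gtrsim r^{1\wedge\beta}$ are fine and match the paper. The gap is in the ``upgrade to $g$'' step. Your plan is to use a Markov-type argument to locate positive-measure subsets of the flat-top and flat-bottom regions where $g$ is pointwise close to $f_0$, and then to find a positive-measure set of centers $x$ for which the guaranteed $e_1$-segment in $A(x)$ hits both subsets. This cannot work as stated: in each block the flat-top occupies $[a_{k-1}+4r,a_{k-1}+6r]$ and the flat-bottom $[a_{k-1},a_{k-1}+2r]$, so they are at distance at least $2r$, while the $e_1$-segment $[x_1-\underline{c}r,\,x_1+\bar c r]$ has length $(\underline{c}+\bar c)r \le 2r$; except in the degenerate case $\underline{c}=\bar c=1$ the window simply cannot span both regions. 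More fundamentally, even if you retreat to the ramp, in dimension $d\ge 2$ the segment has Lebesgue measure zero, so an $L_q$ bound on $g-f_0$ gives no control of $g$ along it; the ``good'' subsets produced by Markov need not intersect any particular segment. This is not a bookkeeping issue---it is the wrong mechanism.

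The paper's argument uses the flat regions for a completely different purpose. It proves the functional inequality
\[
G_{A,q}(f_0) \;\le\; 3\,G_{A,q}(g) \;+\; \|(f_0-g)\circ\delta_0^u\|_q \;+\; \|(f_0-g)\circ\delta_0^l\|_q
\]
by straightforward triangle-inequality manipulations, where $\delta_0^u(x)\in\arg\max_{x'\in A(x)}f_0(x')$ and $\delta_0^l$ is the analogous argmin. The point of the flat plateaus is that on them \emph{any} choice of argmax/argmin is valid, so $\delta_0^u$ and $\delta_0^l$ can be taken to be piecewise translations in $x_1$ (e.g.\ $x\mapsto x+\bar c r$ on the increasing part, $x\mapsto x$ on the plateau, $x\mapsto x-\bar c r$ on the decreasing part). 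These maps then have pushforward density with respect to Lebesgue bounded by $2$, which yields $\|(f_0-g)\circ\delta_0^u\|_q \le 2\|f_0-g\|_q$ by change of variables. Without the plateaus the argmax would collapse a neighbourhood of each peak to a single point, producing a Dirac mass in the pushforward and destroying the bound. Rearranging gives $G_{A,q}(g)\ge \tfrac{1}{3}G_{A,q}(f_0) - \tfrac{4}{3}\|g-f_0\|_q$, and combining with $G_{A,q}(f_0)\ge \kappa_1 r^{1\wedge\beta}$ and the hypothesis $\|g-f_0\|_q\le C_1 r^{1\wedge\beta}$ (with $C_1=\kappa_1/8$, $C_2=\kappa_1/6$) finishes the proof. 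You should replace your Markov/pointwise scheme with this pushforward argument.
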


\begin{proof}[Proof of Lemma~\ref{lem:mirror}]

Define $\delta_0^u(x)\triangleq\argmax_{x' \in A(x)}f_0(x') $, $\delta_0^l(x)\triangleq\argmin_{x' \in A(x)}f_0(x') $, ${\delta}^u(x)\triangleq\argmax_{x' \in A(x)}g(x')$, and ${\delta}^l(x)\triangleq\argmin_{x' \in A(x)}g(x') $. Let $u_{0}(x)\triangleq\sup_{x'\in A(x)}f_{0}(x') = (f_0 \circ\delta_0^u)(x)$, $l_{0}(x)\triangleq\inf_{x'\in A(x)}f_{0}(x')=(f_0 \circ\delta_0^l)(x)$, $u(x)\triangleq\sup_{x'\in A(x)}g(x') = (g \circ\delta^u)(x)$, and $l(x)\triangleq\inf_{x'\in A(x)}g(x') = (g \circ\delta^l)(x)$, where $\circ$ is the function composition operator.

Based on the definitions in (\ref{eq:gunbal2}), $G_{A,q}(f_{0})$ can be upper bounded by
\begin{equation}\label{eq:youtou2}
\begin{split}
     G_{A,q}(f_{0})&= \frac{1}{2}\left\| u_0- l_0 \right\|_q=\frac{1}{2}\left\| u_0- l_0 - u + l +u -l \right\|_q \\
     & \leq \frac{1}{2}\left\| u_0- u\right\|_q+ \frac{1}{2}\left\| l_0 - l \right\|_q  + \frac{1}{2}\left\| u -l \right\|_q\\
     & = \frac{1}{2}\left\| u_0- u\right\|_q+ \frac{1}{2}\left\| l_0 - l \right\|_q  + G_{A,q}(g),
\end{split}
\end{equation}
where the inequality follows from the fact that $\| \cdot\|_q$ satisfies the triangle inequality. The first term in the last line of (\ref{eq:youtou2}) can be further upper bounded by
\begin{equation}\label{eq:hahah1}
\begin{split}
     \left\| u_0- u\right\|_q &= \left\| f_0 \circ\delta_0^u - g \circ\delta^u\right\|_q \\
     & = \left\| f_0 \circ\delta_0^u - g \circ \delta_0^u +g \circ \delta_0^u - g\circ \delta^u\right\|_q \\
     & \leq \left\| f_0 \circ\delta_0^u - g \circ \delta_0^u \right\|_q + \left\| g \circ \delta_0^u - g\circ \delta^u\right\|_q\\
     & \leq \left\| f_0 \circ\delta_0^u - g \circ \delta_0^u \right\|_q + 2G_{A,q}(g),
\end{split}
\end{equation}
where the first inequality follows from the triangle inequality, and the second inequality follows from the definition of $G_{A,q}(g)$. Similarly, the second term in the last line of (\ref{eq:youtou2}) can be upper bounded by
\begin{equation}\label{eq:hahah2}
\begin{split}
     \left\| l_0- l\right\|_q & \leq \left\| f_0 \circ \delta_0^l - g \circ  \delta_0^l \right\|_q + 2G_{A,q}(g).
\end{split}
\end{equation}
Note that the mappings $\delta_0^l,\delta_0^u$ may not be unique, and they will be uniquely defined later.

Inserting the bounds from (\ref{eq:hahah1})--(\ref{eq:hahah2}) into the right-hand side of (\ref{eq:youtou2}), we obtain
\begin{equation}\label{eq:cat3}
  G_{A,q}(f_{0}) \leq 3G_{A,q}(g) + \left\| f_0 \circ\delta_0^u - g \circ \delta_0^u \right\|_q  + \left\| f_0 \circ \delta_0^l - g \circ  \delta_0^l \right\|_q.
\end{equation}
The above inequality (\ref{eq:cat3}) is equivalent to
\begin{equation}\label{eq:cat3miao}
\begin{split}
G_{A,q}(f_{0})& \leq 3G_{A,q}(g) + \left\{\int_{[0,1]^d}\left|f_0(x) - g(x)\right|^q\delta_{0}^u\sharp\mu(x)dx\right\}^{\frac{1}{q}}\\
     &\qquad\qquad\quad\: + \left\{\int_{[0,1]^d}\left|f_0(x) - g(x)\right|^q\delta_{0}^l\sharp\mu(x)dx\right\}^{\frac{1}{q}},
\end{split}
\end{equation}
where $\delta_{0}^u\sharp\mu$ and $\delta_{0}^l\sharp\mu$ are the pushforward measures of the uniform measure $\mu$ on $[0,1]^d$ induced by the mappings $\delta_{0}^u$ and $\delta_{0}^l$, respectively. The task is now to construct lower bounds for $G_{A,q}(g)$ from (\ref{eq:cat3miao}). The proof will be divided into two scenarios.

\textsc{The case where $0 < \beta \leq 1$.} The verification of $f_0 \in \mathcal{F}(\beta, C_{\beta}/2)$ will be given in the Appendix. We prove the inequality
\begin{equation}\label{eq:tongxun}
  G_{A,q}(f_{0}) \leq 3G_{A,q}(g) + 4 \left\| f_0  - g \right\|_q
\end{equation}
for any $1 \leq q < \infty$. In view of (\ref{eq:cat3miao}), it suffices to demonstrate that
\begin{equation}\label{eq:tongxun1}
  \delta_{0}^u\sharp\mu(x) \leq 2 \text{\quad and \quad } \delta_{0}^l\sharp\mu(x) \leq 2
\end{equation}
for any $x \in [0,1]^d$. Recall that the adversary $A$, as defined in Definition~\ref{def:adversarial}, presents the attack along a principal direction $v=(1,0,\ldots,0)^{\top}$. Hence
\begin{equation}\label{eq:fuzhan1}
    \min\left\{ x_1':x' \in A(x) \right\} = (x_1 - \underline{c}r)\vee 0
 \end{equation}
 and
 \begin{equation}\label{eq:fuzhan2}
    \max\left\{ x_1':x' \in A(x) \right\} = (x_1 + \bar{c}r) \wedge 1,
 \end{equation}
 where $x_1'$ and $x_1$ are the first coordinates of $x'$ and $x_1$, respectively, and $0\leq \underline{c},\bar{c} \leq 1$ are constants as defined in Definition~\ref{def:adversarial} (ii).

Note that the value of $f_0$ varies only along the first coordinate. Given the monotonicity of $f_0$ with respect to $x_1$, we define $\delta_0^l$ and $\delta_0^u$ as follows:
\begin{equation}\label{eq:lilajia}
  \delta_{0}^l(x) \triangleq \left\{\begin{array}{ll}
 x  &\quad a_{k-1}\leq x_1<a_{k-1}+2r \\
 x - (\underline{c}r,0,\ldots,0)^{\top} &\quad a_{k-1}+2r \leq  x_1 <a_{k-1}+ 5r  \text{\qquad for\quad} k=1,\ldots,K,\\
 x + (\underline{c}r,0,\ldots,0)^{\top} &\quad a_{k-1}+5r \leq  x_1 < a_{k-1}+ 8r\\
 x &\quad a_{K} \leq  x_1 \leq 1,\\
\end{array}\right.
\end{equation}
and
\begin{equation}\label{eq:lilajia2}
  \delta_{0}^u(x) \triangleq \left\{\begin{array}{ll}
 x + (\bar{c}r,0,\ldots,0)^{\top} &\quad a_{k-1}\leq x_1<a_{k-1}+4r \\
 x &\quad a_{k-1}+4r \leq  x_1 < a_{k-1}+6r \text{\qquad for\quad} k=1,\ldots,K,\\
 x - (\bar{c}r,0,\ldots,0)^{\top} &\quad a_{k-1}+6r \leq  x_1 < a_{k-1}+ 8r\\
 x - (\bar{c}r,0,\ldots,0)^{\top} &\quad a_{K} \leq x_1 \leq 1.\\
\end{array}\right.
\end{equation}
Consequently, (\ref{eq:tongxun1}) holds for any $x \in [0,1]^d$. Therefore, from (\ref{eq:tongxun}), we obtain
\begin{equation}\label{eq:diaodai1}
  G_{A,q}(g) \geq \frac{G_{A,q}(f_{0})}{3} - \frac{4}{3} \left\|g- f_0 \right\|_q.
\end{equation}

The task is now to lower bound $G_{A,q}(f_{0})$. For $1 \leq q < \infty$, we get
\begin{equation}\label{eq:diaodai2}
  \begin{split}
     G_{A,q}(f_{0}) & =  \frac{1}{2}\left\{\int_{[0,1]^d} \left[\sup_{x' \in A(x)}f_0(x') - \inf_{x' \in A(x)}f_0(x')\right]^q dx\right\}^{\frac{1}{q}} \\
       & \geq \frac{1}{2}\left\{2  K \int_{0}^{(\bar{c}+\underline{c})r} x_1^{\beta q} dx_1\right\}^{\frac{1}{q}}\\
       & \geq \kappa_1 r^{\beta},
  \end{split}
\end{equation}
where the first inequality follows from (\ref{eq:fuzhan1})--(\ref{eq:fuzhan2}) and the periodicity of $f_0$, the second inequality uses $K = \lfloor 1/(8r) \rfloor$, and $\kappa_1$ is a constant depending on $\beta$, $q$, $\bar{c}$, and $\underline{c}$. Thus, by defining $C_1 = \kappa_1/8$ and $C_2 =\kappa_1/6$, we conclude that for any $1 \leq q < \infty$,
\begin{equation*}
  \begin{split}
     G_{A,q}(g) & \geq \frac{\kappa_1 r^{\beta}}{3}- \frac{4}{3} \left\|g- f_0 \right\|_q \geq \frac{\kappa_1 r^{\beta}}{6} = C_2 r^{\beta},
  \end{split}
\end{equation*}
where the first inequality follows from (\ref{eq:diaodai1}), and the second inequality holds under the condition $\| g - f_0\|_q \leq C_1 r^{1\wedge \beta}$.

\textsc{The case where $\beta>1$.} Due to the condition $1/2 < 1-4r \leq 1$ and the fact that $\psi_0 \in \mathbb{C}^{\infty}$, we can select an appropriate $B_{\beta}$ such that $f_0 \in \mathcal{F}(\beta,C_{\beta}/2)$ for any $\beta>1$ and $C_{\beta}>0$, as noted on page 93 of \cite{Tsybakov2009book}. In this case, the mappings $\delta_{0}^l$ and $\delta_{0}^u$ can be defined as
\begin{equation*}\label{eq:lilajia55}
  \delta_{0}^l(x) \triangleq \left\{\begin{array}{ll}
 x  &\quad 0\leq x_1 < 2r, \\
 x - (\underline{c}r,0,\ldots,0)^{\top} &\quad 2r \leq  x_1 <1,  \\
\end{array}\right.
\end{equation*}
and
\begin{equation*}\label{eq:lilajia225}
  \delta_{0}^u(x) \triangleq \left\{\begin{array}{ll}
 x + (\bar{c}r,0,\ldots,0)^{\top}&\quad 0\leq x_1 < 1-2r, \\
 x &\quad 1-2r \leq  x_1 <1.  \\
\end{array}\right.
\end{equation*}
It is evident that for any $x \in [0,1]^d$, the pushforward measures satisfy the bounds $\delta_{0}^u\sharp\mu(x) \leq 2$ and $\delta_{0}^l\sharp\mu(x) \leq 2$. Therefore, the inequality in (\ref{eq:diaodai1}) holds in this case as well.

Moreover, there exist constants $1/4 < c_1 < c_2  < 3/4$ and $c_3>0$ such that, for any $x$ with $c_1 \leq x_1 \leq c_2$, the partial derivative satisfies $\partial f_0(x)/\partial x_1 >c_3$. Consequently, for $1 \leq q < \infty$, we have
\begin{equation*}
  \begin{split}
     G_{A,q}(f_{0}) & = \frac{1}{2}\left\{\int_{[0,1]^d} \left[\sup_{x' \in A(x)}f_0(x') - \inf_{x' \in A(x)}f_0(x')\right]^q dx\right\}^{\frac{1}{q}} \\
       & \geq \frac{1}{2}\left\{ \int_{c_1}^{c_2}\left[c_3  \left(\underline{c} \wedge \bar{c} \right) r \right]^q dx_1 \right\}^{\frac{1}{q}} \triangleq \kappa_3 r.
  \end{split}
\end{equation*}
By applying the similar argument used in the case where $0<\beta \leq 1$ again, with $C_1$ and $C_2$ redefined, we can establish the results for the case where $\beta > 1$.

\end{proof}

\subsection{Localized Fano's method}

Given the base distribution constructed in Section~\ref{sec:base}, we can construct a set of distributions in $\mathcal{P}(\beta)$ where $\mathbb{P}_X$ and $\xi$ are identical to those of the base distribution, and the regression functions are localized around the base regression function $f_0$.

\begin{lemma}\label{lem:bump}

For any $1 \leq q < \infty$, there exists a subset of functions $\mathcal{S} \triangleq \{f_1,\ldots,f_{J_{n}} \} \subseteq \mathcal{F}(\beta, C_{\beta})$ and two constants $c_1,c_2>0$ such that for any $1 \leq j \neq k \leq J_{n}$,
\begin{description}
  \item[(i)] $\| f_j - f_k \|_q >  c_1\epsilon_{n}$,
  \item[(ii)] $\| f_j - f_0 \|_q \leq c_2\epsilon_{n}$,
  \item[(iii)] $\| f_j - f_0 \|_2 \leq \epsilon_{n}$,
  \item[(iv)] $\log J_{n} \geq n \epsilon_{n}^2 + 2\log 2$,
\end{description}
where $\epsilon_{n} \triangleq C(n^{-\frac{\beta}{2\beta+d}} \wedge 1)$ with some $C>0$.

\end{lemma}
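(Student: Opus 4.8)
The plan is to construct the perturbation functions $f_1,\ldots,f_{J_n}$ by adding localized ``bumps'' to the base function $f_0$, following the classical Fano/Varshamov--Gilbert construction for nonparametric lower bounds, but with bumps supported in a region where $f_0$ is constant so that the Hölder constraint is preserved. First I would partition (a subcube of) $[0,1]^d$ into $m^d$ congruent subcubes of side length $1/m$, where $m \asymp \epsilon_n^{-1/\beta} \asymp n^{1/(2\beta+d)}$. On each subcube I place a scaled translate $\eta\, \varphi\big((x-x_\nu)m\big)$ of a fixed $C^\infty$ bump $\varphi$ supported in $(0,1)^d$, with amplitude $\eta \asymp m^{-\beta} \asymp \epsilon_n$. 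Crucially, I would restrict attention to the subcubes that lie inside one of the ``flat'' regions of $f_0$ --- e.g.\ where $f_0$ equals $\tfrac{C_\beta}{2}(8r)^\beta 4^{-\beta}$ on $[\tfrac12,\tfrac34]$-type slabs (or the flat top/bottom in the $\beta>1$ case) --- so there remains a fixed positive fraction $\rho>0$ of the $m^d$ subcubes available; call this index set $\mathcal{I}$ with $|\mathcal{I}| \asymp m^d$. For each binary string $\omega \in \{0,1\}^{\mathcal{I}}$ set $f_\omega = f_0 + \sum_{\nu\in\mathcal{I}} \omega_\nu \eta\, \varphi((x-x_\nu)m)$.

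Next I would verify the four properties. Property (ii)/(iii): since the bumps have disjoint supports, $\|f_\omega - f_0\|_q^q \le |\mathcal{I}|\, \eta^q m^{-d}\|\varphi\|_q^q \asymp m^{-\beta q} \asymp \epsilon_n^q$, so both (ii) (for a suitable $c_2$) and (iii) (shrinking $C$ in $\epsilon_n$ if needed, using $q=2$) hold; also $f_\omega \in \mathcal{F}(\beta,C_\beta)$ because $f_0\in\mathcal{F}(\beta,C_\beta/2)$ by Lemma~\ref{lem:mirror}, the added bump lies in $\mathcal{F}(\beta,C_\beta/2)$ for an appropriate choice of the amplitude constant, and the bump is supported where $f_0$ is locally constant so no cross terms in the Hölder seminorm appear. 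Property (iv): by the Varshamov--Gilbert bound there is a subset $\Omega_{\mathrm{VG}}\subseteq\{0,1\}^{\mathcal{I}}$ with $\log|\Omega_{\mathrm{VG}}| \ge c\,|\mathcal{I}| \asymp m^d$ and pairwise Hamming distance $\ge |\mathcal{I}|/8$; since $m^d \asymp n^{d/(2\beta+d)} = n\epsilon_n^2/C^2$, choosing $C$ small makes $\log J_n \ge n\epsilon_n^2 + 2\log 2$. Property (i): for $\omega\neq\omega'$ in $\Omega_{\mathrm{VG}}$, disjoint supports give $\|f_\omega - f_{\omega'}\|_q^q = \|\omega-\omega'\|_H \cdot \eta^q m^{-d}\|\varphi\|_q^q \gtrsim |\mathcal{I}| \eta^q m^{-d} \asymp \epsilon_n^q$, hence (i) with $c_1$ the $q$-th root of that constant. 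Relabel the resulting functions as $\mathcal{S}=\{f_1,\ldots,f_{J_n}\}$.

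The main obstacle --- and the place where adversarial considerations actually enter --- is making sure the bumps are placed so that they do not destroy the lower bound $G_{A,q}(f_\omega)\gtrsim r^{1\wedge\beta}$ that Lemma~\ref{lem:mirror} will later be invoked for. This requires two things: first, that the bump amplitude $\eta\asymp\epsilon_n$ be small compared to $r^{1\wedge\beta}$ in the regime where $r$ dominates (so that adding bumps cannot flatten the large oscillations of $f_0$), which is automatic since in that regime $\epsilon_n\ll r^{1\wedge\beta}$; and second, that the bumps be supported strictly inside the flat slabs of $f_0$, away from the increasing/decreasing pieces that create the oscillation, so that $\sup_{x'\in A(x)}f_\omega(x') - \inf_{x'\in A(x)}f_\omega(x')$ is unaffected on the slab where the oscillation was counted. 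I would therefore choose the subcube side $1/m$ small enough (true for large $n$) that there is room for the bump supports to be $r$-separated from the slab boundaries; alternatively, one can simply note that Lemma~\ref{lem:mirror} is stated for \emph{any} $g$ with $\|g-f_0\|_q\le C_1 r^{1\wedge\beta}$, and since each $f_\omega$ satisfies $\|f_\omega - f_0\|_q \lesssim \epsilon_n \le C_1 r^{1\wedge\beta}$ precisely in the $r$-dominated regime, the needed deviation bound for $f_\omega$ follows directly, so the construction here only needs to deliver properties (i)--(iv). Combining this lemma with Lemma~\ref{lem:mirror} and a localized Fano inequality (bounding $\mathrm{KL}$ between the induced sample distributions by $n\|f_j-f_0\|_2^2/(2\sigma^2)\le n\epsilon_n^2/(2\sigma^2)$, using the Gaussian errors) then yields the $n^{-q\beta/(2\beta+d)}$ part of the lower bound, which added to the $r^{q(1\wedge\beta)}$ part from Lemma~\ref{lem:mirror} gives Theorem~\ref{theo:lower}.
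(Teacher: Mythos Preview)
Your overall strategy---bump functions on a grid of $m^d\asymp n^{d/(2\beta+d)}$ subcubes, amplitude $\eta\asymp m^{-\beta}$, then Varshamov--Gilbert---is exactly the paper's, and the verification of (i)--(iv) is the same computation. The one substantive deviation is your restriction of the bumps to subcubes lying inside the \emph{flat} slabs of $f_0$, motivated by worry about ``cross terms in the H\"older seminorm.'' This restriction is unnecessary and, as stated, does not always deliver what you claim.

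It is unnecessary because the quantity in Definition~\ref{def:holder_smooth} is subadditive: for any $f_0,\phi_w$ one has $|D^s(f_0+\phi_w)(x)-D^s(f_0+\phi_w)(z)|\le |D^sf_0(x)-D^sf_0(z)|+|D^s\phi_w(x)-D^s\phi_w(z)|$, and similarly for the sup of derivatives. Since $f_0\in\mathcal{F}(\beta,C_\beta/2)$ by Lemma~\ref{lem:mirror} and $\phi_w\in\mathcal{F}(\beta,C_\beta/2)$ by the standard disjoint-support argument (the paper cites pages 40--41 of Gy\"orfi et al.), the sum lies in $\mathcal{F}(\beta,C_\beta)$ \emph{regardless} of where the bumps sit relative to $f_0$. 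The paper therefore places bumps on all $L_n^d$ subcubes of $[0,1]^d$, with $\pm1$ signs. Your restriction is also problematic as written: for $\beta>1$ the flat portion of $f_0$ along $x_1$ has measure $4r$, so when $r$ is small there is \emph{not} a fixed positive fraction $\rho$ of subcubes inside flat slabs; for $0<\beta\le 1$ the flat slabs have total measure $\asymp 1/2$ but each slab has width $2r$, so a subcube of side $1/m\asymp n^{-1/(2\beta+d)}$ fails to fit inside a single slab once $r\ll n^{-1/(2\beta+d)}$. You correctly observe at the end that Lemma~\ref{lem:mirror} only needs $\|f_j-f_0\|_q\le C_1 r^{1\wedge\beta}$, so the adversarial argument never required the flat-slab placement; the H\"older argument does not require it either. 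Simply drop the restriction and your proof coincides with the paper's.
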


The specific examples of the set $\mathcal{S}$ in Lemma~\ref{lem:bump} are provided in Section~\ref{sec:proof_lower} of the Appendix. These examples are constructed by perturbing the base regression function $f_0$ with a collection of bump functions. Using the local metric method from Section 7 of \cite{Yang1999Information} and Section C.3 of \cite{Wang2014Adaptive} (also see Theorem 2.5 of \cite{Tsybakov2009book}), we observe that the conditions (i), (iii), and (iv) in Lemma~\ref{lem:bump} lead to the following minimax argument.

\begin{lemma}\label{eq:dayuechen1}
For any $1 \leq q < \infty$, we have
\begin{equation*}
  \inf_{\hat{f}} \max_{f_j \in \mathcal{S}}\mathbb{P}_{\otimes^n} \left( \|\hat{f}- f_{j} \|_q \geq \frac{c_1 \epsilon_{n}}{2}  \right) \geq \frac{1}{2},
\end{equation*}
where the infimum is over all estimators, and $c_1$ and $\epsilon_{n}$ are defined in Lemma~\ref{lem:bump}.
\end{lemma}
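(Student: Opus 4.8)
The plan is to derive Lemma~\ref{eq:dayuechen1} from a standard Fano-type argument applied to the finite family $\mathcal{S} = \{f_1,\ldots,f_{J_n}\}$ constructed in Lemma~\ref{lem:bump}, using the $L_q$-separation from (i) to define a testing problem and the $L_2$-closeness from (iii) together with the cardinality bound (iv) to control the Kullback--Leibler divergences. Concretely, I would set up the reduction to a multiple hypothesis test: for an estimator $\hat f$, define $\psi(\hat f) \triangleq \argmin_{1\le j\le J_n}\|\hat f - f_j\|_q$ (breaking ties arbitrarily). By the triangle inequality for $\|\cdot\|_q$ and the separation (i), on the event $\{\|\hat f - f_j\|_q < c_1\epsilon_n/2\}$ one has $\psi(\hat f) = j$; hence $\mathbb{P}_{\otimes^n}(\|\hat f - f_j\|_q \ge c_1\epsilon_n/2) \ge \mathbb{P}_{\otimes^n}(\psi(\hat f)\neq j)$, and it suffices to lower bound $\max_j \mathbb{P}_{\otimes^n,f_j}(\psi\neq j)$ over all tests $\psi$ by $1/2$.

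**Next**, I would invoke Fano's inequality in the form (e.g. Theorem~2.5 of Tsybakov or Corollary in Section~7 of \cite{Yang1999Information}): if $\frac{1}{J_n}\sum_{j=1}^{J_n} \mathrm{KL}(\mathbb{P}_{\otimes^n,f_j}\,\|\,\mathbb{P}_{\otimes^n,f_0}) \le \alpha \log J_n$ for some $\alpha$ bounded away from $1$, then $\inf_\psi\max_j \mathbb{P}_{\otimes^n,f_j}(\psi\neq j) \ge 1 - \alpha - \frac{\log 2}{\log J_n}$. Here the base measure $\mathbb{P}_{\otimes^n,f_0}$ plays the role of the auxiliary distribution — this is exactly where the ``localized'' nature enters, since all $f_j$ are within $L_2$-distance $\epsilon_n$ of $f_0$ rather than of the zero function. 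Because the design is uniform on $[0,1]^d$ and $\xi\mid X \sim N(0,\sigma^2)$, the per-observation KL divergence between the models indexed by $f_j$ and $f_0$ is $\frac{1}{2\sigma^2}\|f_j - f_0\|_2^2 \le \frac{\epsilon_n^2}{2\sigma^2}$ by (iii), so the $n$-sample average KL is at most $\frac{n\epsilon_n^2}{2\sigma^2}$. Combining with (iv), namely $\log J_n \ge n\epsilon_n^2 + 2\log 2$, gives $\frac{\text{avg KL}}{\log J_n} \le \frac{n\epsilon_n^2/(2\sigma^2)}{n\epsilon_n^2 + 2\log 2}$, which (after possibly rescaling the constant $C$ in $\epsilon_n$, or absorbing $\sigma$) is bounded by some $\alpha < 1/4$; then $1 - \alpha - \frac{\log 2}{\log J_n} \ge 1 - \alpha - \frac14 \ge \frac12$. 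One minor bookkeeping point: the constant $C$ defining $\epsilon_n$ in Lemma~\ref{lem:bump} must be chosen small enough that this KL-to-entropy ratio is controlled, which is consistent with the phrasing ``with some $C>0$'' there; alternatively the factor $2\log 2$ in (iv) can be used to force $\log J_n \ge 8\log 2$ so the residual term is at most $1/8$.

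**The main obstacle** is not conceptual but a matter of getting the constants to line up: one needs the averaged KL divergence to be a strictly-less-than-one fraction of $\log J_n$ \emph{and} simultaneously $\log J_n$ large enough that $\log 2/\log J_n$ is small, all while the separation constant $c_1$ from (i) survives the triangle-inequality reduction. This is handled by the explicit form $\epsilon_n = C(n^{-\beta/(2\beta+d)}\wedge 1)$ and the slack built into (iv); since Lemma~\ref{lem:bump} is quoted as already established, I would simply cite it and thread these inequalities through. A secondary point worth stating carefully is that the infimum over estimators $\hat f$ in the conclusion is genuinely over all measurable estimators, whereas Fano's inequality is stated for tests $\psi$ valued in $\{1,\ldots,J_n\}$; the reduction via $\psi(\hat f) = \argmin_j\|\hat f - f_j\|_q$ bridges this, and measurability of $\psi(\hat f)$ follows since it is a minimizer of finitely many measurable functions. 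I would also remark that conditions (ii) of Lemma~\ref{lem:bump}, while not needed for Lemma~\ref{eq:dayuechen1} itself, is what will later let us pass from $L_q$-estimation error back to the adversarial deviation functional $G_{A,q}$ via Lemma~\ref{lem:mirror}.
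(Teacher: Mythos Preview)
Your proposal is correct and follows essentially the same route as the paper: reduce to a $J_n$-ary test via $\psi(\hat f)=\argmin_j\|\hat f-f_j\|_q$, apply Fano's inequality with the mutual information bounded by the average KL to the base distribution $\mathbb{P}_{\otimes^n,f_0}$, and use (iii)--(iv) of Lemma~\ref{lem:bump} to control the ratio. The only cosmetic difference is that the paper fixes $\sigma=1$ in the Gaussian noise from the outset (as stated in the proof of Theorem~\ref{theo:lower}), so the KL computation gives exactly $n\epsilon_n^2/2$ and (iv) yields $1-\tfrac{I+\log 2}{\log J_n}\ge 1-\tfrac12$ on the nose, avoiding the constant-rescaling discussion you flag.
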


Our proof of the minimax lower bounds for the adversarial risks combines Lemmas~\ref{lem:mirror}--\ref{eq:dayuechen1}, as will be explained in detail below.

\begin{proof}[Proof of Theorem~\ref{theo:lower}]
  To construct the lower bound, it suffices to focus on the set of distributions where $\mathbb{P}_{X}$ is the uniform distribution on $[0,1]^d$, $\xi$ is a Gaussian random variable with mean zero and unit variance conditioned on $X$, and the adversarial attack is $(v,r)$-SODA with the principal direction $v = (1,0,\ldots,0)^{\top}$. Under this reduction scheme, the adversarial minimax risk is lower bounded by
\begin{equation}\label{eq:tou1}
\begin{split}
     \inf_{\hat{f}}\sup_{\mathbb{P}_{(X,Y)} \in \mathcal{P}(\beta)}R_{A,q}(\hat{f}, f)& \geq \inf_{\hat{f}}\sup_{f \in \mathcal{F}(\beta,C_{\beta})} \mathbb{E}_{\otimes^n}\left[ d_{A,q}^q(\hat{f},f) \right]\\
     & \geq \varpi_n^q  \inf_{\hat{f}} \max_{f_j \in \mathcal{S}}\mathbb{P}_{\otimes^n}\left[ d_{A,q}(\hat{f},f_j) \geq \varpi_n \right],\\
\end{split}
\end{equation}
where the first inequality follows from the definition in (\ref{eq:gunbal1}), the second inequality follows from Markov's inequality and the fact that $\mathcal{S} \subseteq \mathcal{F}(\beta, C_{\beta})$ with $\mathcal{S}$ constructed as per Lemma~\ref{lem:bump}, and $\varpi_n$ is a parameter to be specified later.

Define $\hat{u}(x)\triangleq\sup_{x'\in A(x)}\hat{f}(x')$ and $\hat{l}(x)\triangleq\inf_{x'\in A(x)}\hat{f}(x')$. Note that
\begin{equation}\label{eq:yanxi_1}
  \begin{split}
     d_{A,q}^q(\hat{f},f_j) & = \int_{[0,1]^d}\left[\sup_{x' \in A(x)}\left| \hat{f}(x') - f_{j}(x) \right|\right]^qdx \\
       & =  \int_{[0,1]^d}\left[\left| f_{j}(x) - \frac{\hat{u}(x)+\hat{l}(x)}{2} \right|+ \frac{\hat{u}(x)-\hat{l}(x)}{2}\right]^qdx\\
       & \geq \int_{[0,1]^d}\left[ \frac{\hat{u}(x)-\hat{l}(x)}{2}\right]^qdx = G_{A,q}^q(\hat{f}),
  \end{split}
\end{equation}
where the last equality follows from the definition in (\ref{eq:gunbal2}).
Based on the inequality (\ref{eq:yanxi_1}) and the condition $\{ x \} \subseteq A(x)$, we obtain
\begin{equation}\label{eq:dhaizai2}
  d_{A,q}(\hat{f},f_j) - d_{q}(\hat{f},f_j) \geq \left[G_{A,q}(\hat{f}) - \| \hat{f} - f_j \|_q \right]_+.
\end{equation}
Combining (\ref{eq:tou1}) with (\ref{eq:dhaizai2}), we have
\begin{equation}\label{eq:diaodaihai1}
  \begin{split}
    & \inf_{\hat{f}}\sup_{\mathbb{P}_{(X,Y)} \in \mathcal{P}(\beta)}R_{A,q}(\hat{f}, f)  \geq \varpi_n^q  \inf_{\hat{f}} \max_{f_j \in \mathcal{S}}\mathbb{P}_{\otimes^n}\left[ d_{A,q}(\hat{f},f_j) \geq \varpi_n \right] \\
       & \geq \varpi_n^q  \inf_{\hat{f}} \max_{f_j \in \mathcal{S}}\mathbb{P}_{\otimes^n}\left\{ \| \hat{f} - f_j \|_q +\left[G_{A,q}(\hat{f}) - \| \hat{f} - f_j \|_q \right]_+ \geq \varpi_n  \right\}.
  \end{split}
\end{equation}
We now divide our proof into three cases.

\textsc{The case where $ r^{1 \wedge \beta} \lesssim n^{-\frac{\beta}{2\beta+d}}$.} In this scenario, the perturbation magnitude is relatively small compared to the standard rate $n^{-\frac{\beta}{2\beta+d}}$. Given that $R_{A,q}(\hat{f},f) \geq R_{q}(\hat{f},f)$, the adversarial minimax risk is lower bounded by
\begin{equation*}
  \begin{split}
       \inf_{\hat{f}}\sup_{\mathbb{P}_{(X,Y)} \in \mathcal{P}(\beta)}R_{A,q}(\hat{f}, f) &\geq  \inf_{\hat{f}}\sup_{\mathbb{P}_{(X,Y)} \in \mathcal{P}(\beta)}R_{q}(\hat{f}, f)  \gtrsim n^{-\frac{q\beta}{2\beta+d}},
  \end{split}
\end{equation*}
where the lower bound for the standard minimax rate can be found in the literature such as \cite{Stone1982global, Gyorfi2002book, Tsybakov2009book}.

\textsc{The case where $ n^{-\frac{\beta}{2\beta+d}} =  o(r^{1 \wedge \beta})$ and $r < 1/8$.} We now proceed to lower bound the last term in equation (\ref{eq:diaodaihai1}). For this case, we set $\varpi_n$ as follows:
\begin{equation}\label{eq:yanxi3}
  \varpi_n = \frac{C_2r^{1 \wedge \beta}}{2} + \frac{c_1 \epsilon_{n}}{4},
\end{equation}
where $C_2$ is the constant from Lemma~\ref{lem:mirror}, and $c_1$ and $\epsilon_{n}$ are defined in Lemma~\ref{lem:bump}.

If the event $\{ \| \hat{f} - f_j \|_q +[G_{A,q}(\hat{f}) - \| \hat{f} - f_j \|_q ]_+ < \varpi_n \}$ occurs, it immediately follows that
\begin{equation}\label{eq:dhaizai3}
  \| \hat{f} - f_j \|_q < \frac{C_2r^{1 \wedge \beta}}{2} + \frac{c_1 \epsilon_{n}}{4}.
\end{equation}
Additionally, when $n$ is sufficiently large, we have
\begin{equation}\label{eq:yanxi2}
\begin{split}
   \|\hat{f}- f_{0} \|_q & \leq \|\hat{f}- f_{j} \|_q + \|f_{j}- f_{0} \|_q \leq \frac{C_2r^{1 \wedge \beta}}{2} + \frac{c_1 \epsilon_{n}}{4} + c_2\epsilon_{n} \\
     & = \frac{C_2 r^{1 \wedge \beta}}{2} + \frac{(4c_2+c_1) \epsilon_{n}}{4} < C_1r^{1 \wedge \beta},
\end{split}
\end{equation}
where the first inequality follows from the triangle inequality, the second from equation (\ref{eq:dhaizai3}) and Lemma~\ref{lem:bump}, and the last from the condition $C_2/2 < C_1$ in Lemma~\ref{lem:mirror}, together with $\epsilon_{n} \asymp n^{-\frac{\beta}{2\beta+d}} =  o(r^{1 \wedge \beta})$. By Lemma~\ref{lem:mirror}, equation (\ref{eq:yanxi2}) implies that $G_{A,q}(\hat{f}) \geq C_2r^{1 \wedge \beta}$. Therefore, under the event $\{ \| \hat{f} - f_j \|_q +[G_{A,q}(\hat{f}) - \| \hat{f} - f_j \|_q ]_+ < \varpi_n \}$, we have
\begin{equation*}
  \begin{split}
  \varpi_n= \frac{C_2r^{1 \wedge \beta}}{2} + \frac{c_1 \epsilon_{n}}{4} & > \|\hat{f}- f_{j} \|_q + \left[G_{A,q}(\hat{f}) - \| \hat{f} - f_j \|_q \right]_+\\
        & \geq \|\hat{f}- f_{j} \|_q + C_2r^{1 \wedge \beta} - \frac{C_2r^{1 \wedge \beta}}{2} - \frac{c_1 \epsilon_{n}}{4}\\
       & = \|\hat{f}- f_{j} \|_q +  \frac{C_2r^{1 \wedge \beta}}{2} - \frac{c_1 \epsilon_{n}}{4} ,
  \end{split}
\end{equation*}
which implies that $\|\hat{f}- f_{j} \|_q < c_1 \epsilon_{n}/2$. Consequently, from (\ref{eq:diaodaihai1}), we obtain
\begin{equation}\label{eq:xintai1}
  \begin{split}
     &\inf_{\hat{f}}\sup_{\mathbb{P}_{(X,Y)} \in \mathcal{P}(\beta)}R_{A,q}(\hat{f}, f) \\
      & \geq \varpi_n^q  \inf_{\hat{f}} \max_{f_j \in \mathcal{S}}\mathbb{P}_{\otimes^n}\left\{ \| \hat{f} - f_j \|_q +\left[G_{A,q}(\hat{f}) - \| \hat{f} - f_j \|_q \right]_+ \geq \varpi_n  \right\} \\
       & \geq \varpi_n^q \inf_{\hat{f}} \max_{f_j \in \mathcal{S}}\mathbb{P}_{\otimes^n} \left( \|\hat{f}- f_{j} \|_q \geq \frac{c_1 \epsilon_{n}}{2}  \right) \gtrsim r^{q(1 \wedge \beta)} ,
  \end{split}
\end{equation}
where the last step follows from Lemma~\ref{eq:dayuechen1} and the definition of $\varpi_n$ in (\ref{eq:yanxi3}).

\textsc{The case where $1/8 \leq r \leq 1 $.} When $r \geq 1/8$, for any $A(x)$, we only need to construct an $A'(x)$ such that $A'(x) \subseteq A(x)$ and $A'$ satisfies Definition~\ref{def:adversarial} with $r= 1/16$. It follows directly from Definition~\ref{def:adversarial} that such an $A'$ must exist. By combining the relation $R_{A,q}(\hat{f}, f) \geq R_{A',q}(\hat{f}, f) $ with (\ref{eq:xintai1}), we obtain the following bound:
\begin{equation*}
  \inf_{\hat{f}}\sup_{\mathbb{P}_{(X,Y)} \in \mathcal{P}(\beta)}R_{A,q}(\hat{f}, f) \gtrsim 1.
\end{equation*}
\end{proof}

\section{Discussion}\label{sec:discuss}

In this paper, we investigate the minimax rate for nonparametric regression estimation under adversarial risks, focusing on cases where the true regression function has H\"{o}lder smoothness and the adversarial attack meets a mild geometric requirement. Our findings have important practical and theoretical implications. Theoretically, we identify two key factors that collectively determine the adversarial minimax risk: the attack's magnitude (measured by the diameter of the perturbation set) and the smoothness level of the target regression function class. For an adversary, our results identify the critical magnitude needed for a successful attack. From a defender's perspective, the derived minimax rates offer a benchmark to evaluate any estimator in presence of the adversarial attacks. We propose several nonparametric estimators and establish their optimality and adaptivity under the future $X$-attacks.

The results and techniques developed in this paper may serve as a starting point for theoretical analysis of other general adversarial learning problems. For example, beyond the regression setting, it is valuable to assess the intrinsic difficulty of adversarial learning for tasks such as density estimation, classification, and quantile regression. Additionally, evaluating whether other popular defense strategies, like adversarial training \citep{Madry2018}, achieve the minimax optimality or not remains an important open question. Furthermore, understanding other intriguing phenomena in artificial intelligence from a minimax perspective, such as the trade-off between overparameterization and adversarial robustness \citep{Hassani2024}, or the issue of hallucination \citep{zhang2023siren} in generative artificial intelligence applications, would be highly valuable.

\newpage
\appendix

\section*{Appendix}\label{appendix}
\addcontentsline{toc}{section}{Appendix}
\renewcommand{\thesection}{\Alph{section}}
\numberwithin{equation}{section}

In this Appendix, we provide a detailed discussion of the adversarial risks defined in (\ref{eq:adv_risk}) and present complete proofs of the minimax upper bounds stated in Theorems~\ref{theo:upper}--\ref{theo:adaptive1} of the paper ``Adversarial learning for nonparametric regression: Minimax rate and adaptive estimation''. In addition, we establish the auxiliary technical lemmas and complete the proofs of the minimax lower bounds given in Theorem~\ref{theo:lower}.

\section{Adversarial risk}\label{sec:other_measure}

In this section, we provide further insight into the underlying assumption of the adversarial risk defined in (\ref{eq:adv_risk}). We then establish the connections between the adversarial risk (\ref{eq:adv_risk}) and the TRADES risk (\ref{eq:risk_zhang}) introduced in \cite{Zhang2019}. Finally, we review other commonly used adversarial measures and discuss their relationships and differences with (\ref{eq:adv_risk}).

\subsection{Further discussion on the adversarial risk (\ref{eq:adv_risk})}\label{subsec:other_1}

It is important to point out that (\ref{eq:adv_risk}) implies a sensible assumption about the adversary's behavior in adversarial learning. To see this, consider $q=2$. In this case, (\ref{eq:adv_risk}) takes the form
\begin{equation}\label{eq:offer11}
  R_{A,2}(\hat{f}, f) = \mathbb{E}\sup_{X' \in A(X)}\mathbb{E}_{Y|X}|\hat{f}(X') - Y|^2 - \mathbb{E}|f(X) - Y|^2,
\end{equation}
where $\mathbb{E}|f(X) - Y|^2 = \inf_{g}\mathbb{E}|g(X) - Y|^2 $ serves as a benchmark that only depends on $\mathbb{P}_{(X,Y)}$. From (\ref{eq:offer11}), we see that $R_{A,2}(\hat{f}, f)$ measures the adversarial prediction performance by perturbing $X'$ to maximize $\mathbb{E}_{Y|X}|\hat{f}(X') - Y|^2$. This implies that the adversary operates without prior knowledge of the future $Y$ value, and the worst-case perturbation corresponds to the one that maximizes the conditional expected prediction error given $X$.

Therefore, our formulation of adversarial learning addresses a practically prevalent scenario where neither the adversary nor the defender has access to the future $Y$ value. From a practical perspective, constructing worst possible attacks in our setting requires the adversary to have access to both $\hat{f}$ and $f$ for choosing the most damaging perturbation $x' \in A(x)$. If $f$ is agnostic to the adversary but additional information is available to train a model for the attack, the worst-case perturbation may still be estimated using an appropriate estimator of $f$. Otherwise, (\ref{eq:offer11}) serves more like a robustness measure and describes the possibly worst prediction degradation due to the future $X$-attacks.

\subsection{Connections between (\ref{eq:adv_risk}) and (\ref{eq:risk_zhang})}\label{subsec:other_2}

As stated in Section~\ref{sec:discuss_adve_risk} of the main text, when $\{x\} \subseteq A(x)$, $\phi(x,y) = (x-y)^2$, and $t = 1$, we have
\begin{equation}\label{eq:yueye1}
  \frac{\big[T_{\phi,t}(\hat{f}) - \mathbb{E}|f(X) - Y|^2\big]}{5} \leq  R_{A,2}(\hat{f}, f) \leq 2 \big[T_{\phi,t}(\hat{f}) - \mathbb{E}|f(X) - Y|^2\big].
\end{equation}
We begin by proving the upper bound in (\ref{eq:yueye1}). We have
\begin{equation*}
  \begin{split}
    R_{A,2}(\hat{f}, f)& = \mathbb{E}\sup_{X' \in A(X)}\left| \hat{f}(X') - f(X) \right|^2 =  \mathbb{E}\sup_{X' \in A(X)}\left| \hat{f}(X') - \hat{f}(X) + \hat{f}(X) - f(X) \right|^2\\
       & \leq 2\mathbb{E}\sup_{X' \in A(X)}\left| \hat{f}(X') - \hat{f}(X)\right|^2 + 2\mathbb{E}\left|\hat{f}(X) - f(X) \right|^2\\
       & = 2\mathbb{E}\sup_{X' \in A(X)}\left| \hat{f}(X') - \hat{f}(X)\right|^2 + 2\mathbb{E}\left|\hat{f}(X) - Y \right|^2 - 2\mathbb{E}\left| Y -  f(X) \right|^2\\
       & = 2 T_{\phi,t}(\hat{f}) - 2\mathbb{E}|f(X) - Y|^2.
  \end{split}
\end{equation*}
For the lower bound, observe:
\begin{equation}\label{eq:yueye2}
  \begin{split}
     R_{A,2}(\hat{f}, f) & = \mathbb{E}\sup_{X' \in A(X)}\left| \hat{f}(X') - f(X) \right|^2 \geq \mathbb{E}\left| \hat{f}(X) - f(X) \right|^2 \\
       & = \mathbb{E}\left|\hat{f}(X) - Y \right|^2 - \mathbb{E}\left|   f(X) -Y\right|^2,
  \end{split}
\end{equation}
where the inequality follows from $X \in A(X)$. Additionally, we have
\begin{equation}\label{eq:yueye3}
  \begin{split}
     R_{A,2}(\hat{f}, f) & = \mathbb{E}\sup_{X' \in A(X)}\left| \hat{f}(X') - f(X) \right|^2 \\
       & = \mathbb{E}\left[\left| \frac{\sup_{X' \in A(X)}\hat{f}(X')+\inf_{X' \in A(X)}\hat{f}(X')}{2} -f(X) \right|\right.\\
      & \left. \quad\quad+ \frac{\sup_{X' \in A(X)}\hat{f}(X') - \inf_{X' \in A(X)}\hat{f}(X')}{2} \right]^2\\
      & \geq \frac{1}{4}\mathbb{E} \left[ \sup_{X' \in A(X)}\hat{f}(X') - \inf_{X' \in A(X)}\hat{f}(X') \right]^2 \geq \frac{1}{4}\mathbb{E}\sup_{X' \in A(X)}\left| \hat{f}(X') - \hat{f}(X)\right|^2.
  \end{split}
\end{equation}
Combining (\ref{eq:yueye2}) and (\ref{eq:yueye3}), we obtain
\begin{equation*}
  \begin{split}
     R_{A,2}(\hat{f}, f) & \geq \frac{1}{5}\mathbb{E}\left|\hat{f}(X) - Y \right|^2 - \frac{1}{5}\mathbb{E}\left|   f(X) -Y\right|^2 + \frac{4}{5}\times \frac{1}{4}\times\mathbb{E}\sup_{X' \in A(X)}\left| \hat{f}(X') - \hat{f}(X)\right|^2\\
       & = \frac{1}{5} T_{\phi,t}(\hat{f}) - \frac{1}{5}\mathbb{E}|f(X) - Y|^2.
  \end{split}
\end{equation*}
This completes the proof of (\ref{eq:yueye1}).

\subsection{Other performance measures}\label{subsec:other_3}

Before delving into the discussion of other adversarial measures, it is helpful to recall that there are two typical approaches to evaluate the performance of statistical procedures.

\subsubsection{Two standard measures}

The first approach is grounded in the framework of statistical decision theory \citep[see, e.g.,][]{le2012asymptotic, Yang1999Information}, where the goal is to estimate a parameter $\theta$ or a functional $\theta(\mathbb{P})$ of the unknown distribution $\mathbb{P}$. The performance of an estimator $\hat{\theta}$ is measured by $d(\hat{\theta}, \theta)$, where $d$ is a semi-metric (see Chapter~2 of \cite{Tsybakov2009book} and Chapter~15 of \cite{Wainwright2019high} for a complete formulation). For instance, in regression problems, the target $\theta$ is the true regression function $f$, and $d$ is typically chosen as the $L_q$-norm, which leads to the standard $L_q$-risk discussed in Section~\ref{subsect:adv-risk}.

The second approach measures the performance of a statistical procedure from a prediction perspective. In the standard supervised learning setup without an adversary, the performance of an estimator $\hat{f}$ is evaluated using the loss $\ell(\hat{f},(X,Y))$, where $(X,Y)$ denotes a pair of test data independent of $\hat{f}$. This framework has been extensively studied \citep[see, e.g.,][]{vapnik2013nature, geer2000empirical, pollard1990empirical} and has found wide applications in modern statistical learning \citep[see, e.g.,][]{Bartlett2006, Koltchinskii2006, Lecue2009} and machine learning theory \cite[see, e.g.,][]{Goodfellow-et-al-2016, shalev2014understanding}. The advantages of adopting the prediction-based measures were discussed in \cite{geisser1993predictive}.

In most of the literature, both approaches are considered reasonable, and in some special cases, such as regression \citep{Gyorfi2002book} and classification \citep{devroye2013probabilistic, Yang1999classification, Audibert2007fast}, explicit links between the two approaches can be established when specific loss functions are adopted.

\subsubsection{Other adversarial measures}

Now, consider the adversarial setting. Our definition of the adversarial risk in (\ref{eq:adv_risk}) can be seen as an application of the statistical decision framework to adversarial learning, which focuses on the robust estimation of $f$. Given a loss function $\ell$, adversarial robustness have also been measured from the prediction view point.
\begin{itemize}

\item \emph{Output-aware $X$-attacks}: In this scenario, the output remains unchanged ($Y' = Y$), while the input vector is perturbed such that $X' \in B_p(X,r)$ with the observation of the true output $Y$ prior to selecting $X'$ \citep[see, e.g.,][]{Szegedy2014Intriguing, Goodfellow2014Explaining, Carlini2017Towards, Madry2018}. The adversarial loss is defined as
    \begin{equation}\label{eq:rucgun2}
      L_{OA}(\hat{f}) \triangleq \mathbb{E}_{(X,Y)} \left[\sup_{X' \in B_p(X,r)}\ell(\hat{f},(X',Y)) \right].
    \end{equation}
    The corresponding risk is $R_{OA}(\hat{f}) \triangleq \mathbb{E}_{\otimes^n}L_{OA}(\hat{f})$.

\item \emph{Marginal distribution perturbation}: In this situation, an adversary shifts the distribution of $X$, making it different from the original marginal distribution $\mathbb{P}_{X}$ \citep[see, e.g.,][]{staib2017distributionally, Pydi2020, Mehrabi21, Duchi2021, Duchi2023}. Specifically, the perturbed input $X'$ follows a distribution $Q \in \mathcal{P}_A$, where $\mathcal{P}_A$ denotes an adversarial distribution set around the marginal distribution $\mathbb{P}_{X}$. Fix the conditional distribution $\mathcal{P}_{Y'|X'} = \mathcal{P}_{Y|X}$. The distributionally adversarial loss of $\hat{f}$ is then defined as
    \begin{equation}\label{eq:rucgun1}
      L_{DA}(\hat{f}) \triangleq \sup_{Q \in \mathcal{P}_A}\mathbb{E}_{X'\sim Q}\mathbb{E}_{Y'|X'}\left[ \ell(\hat{f},(X',Y')) \right],
    \end{equation}
    where $\mathcal{P}_A$ is typically chosen as a set of distributions close to $\mathbb{P}_{X}$ in terms of the Wasserstein distance. The corresponding distributionally adversarial risk is given by $R_{DA}(\hat{f}) \triangleq \mathbb{E}_{\otimes^n}L_{DA}(\hat{f})$.

\end{itemize}

These two prediction-based adversarial models rely on significant different assumptions about the adversarial behavior. In the output-aware $X$-attack model, the perturbation is chosen after obtaining both the estimator $\hat{f}$ and the test data point $(X, Y)$. In the distribution shift model, the adversary perturbs the distribution with the knowledge of the true data distribution $\mathbb{P}_{(X, Y)}$. Both assumptions are reasonable for adversarial robustness evaluation in different setting and are applicable across various domains. \cite{staib2017distributionally, Pydi2020, Mehrabi21} have proved that for any output-aware $X$-attack with $B_p(X,r)$, one can construct a neighborhood set $\mathcal{P}_A$ based on the Wasserstein distance defined with respect to the $\ell_p$ distance, such that $L_{OA}(\hat{f})$ is upper bounded by $L_{DA}(\hat{f})$.

The assumptions underlying our adversarial measure (\ref{eq:adv_risk}) strike a balance between (\ref{eq:rucgun1}) and (\ref{eq:rucgun2}). Specifically, to construct a worst-case perturbation, the adversary is not required to have access to the future $Y$ value, but it is assumed that the adversary has partial knowledge of the conditional distribution $\mathbb{P}_{Y|X}$ (i.e., the regression function $f$) or additional information to approximate the true regression function. Furthermore, it is worth noting that when $\ell(\hat{f},(X,Y)) = |\hat{f}(X) - Y|^q$ with $1 \leq q < \infty$, the following order relationship holds between (\ref{eq:offer11}) and (\ref{eq:rucgun2}):
\begin{equation}\label{eq:qinghe1}
  R_{OA}(\hat{f}) \lesssim R_{A,q}(\hat{f}, f) + \mathbb{E}|f(X) - Y|^q.
\end{equation}
This inequality implies that convergence in (\ref{eq:offer11}) also guarantees robustness in $R_{OA}(\hat{f})$, up to an additional term that accounts for the random error. A similar bound to (\ref{eq:qinghe1}) has also been established between $R_{OA}(\hat{f})$ and $T_{\phi,t}(\hat{f})$ for the $0$-$1$ loss \citep{Zhang2019}.

\section{Proof of Theorem~\ref{theo:upper}}\label{sec:proof_upper}

\subsection{Notation}\label{sec:proof_notation_1}

We begin by introducing some necessary notation for the proofs. Using the set of discretization points $\Lambda_{M}$, we partition the domain $\Omega = [0, 1]^d$ into $M^d$ disjoint subsets, denoted by $\Omega_1, \ldots, \Omega_{M^d}$. Points $x$ and $y$ are assigned to the same subset if and only if $u_x = u_y$, where $u_x$ refers to the point in $\Lambda_{M}$ closest to $x$ in the $\ell_2$-norm. For each $k = 1, \ldots, M^d$, we define $u_k \triangleq u_x$ for any $x \in \Omega_k$, i.e., the discretization point corresponding to subset $\Omega_k$. For any function $f \in \mathcal{F}(\beta, C_{\beta})$, let $f_u(\cdot)$ denote the Taylor polynomial of degree $\ell \wedge \lfloor \beta \rfloor$ centered at $u$, as defined in (\ref{eq:taylor_poly}). Additionally, let $\lambda_{\min}(A)$ and $\lambda_{\max}(A)$ denote the smallest and largest eigenvalues of a square matrix $A$, respectively. For any event $\mathcal{A}$, its complement is denoted as $\mathcal{A}^c$.

Recall the matrix $B_{uh}$ and the vector $a_{uh}$ from (\ref{eq:B}) and (\ref{eq:a}), respectively. We now define the matrix $Q_{uh} \in \mathbb{R}^{N_{\ell,d}\times n}$ as
\begin{equation}\label{eq:Q_uh}
  Q_{uh} \triangleq \frac{1}{\sqrt{nh^d}}\left[
        U\left(\frac{X_1 - u}{h}\right)\sqrt{K\left( \frac{X_1-u}{h} \right)},\cdots , U\left(\frac{X_n - u}{h}\right)\sqrt{K\left( \frac{X_n-u}{h} \right)}
      \right].
\end{equation}
Consequently, the matrix $B_{uh}$ can be decomposed as $B_{uh} = Q_{uh} Q_{uh}^{\top}$.

Next, we define three vectors $e_{uh}, \eta_{uh}, t_{uh} \in \mathbb{R}^n$ as follows:
\begin{equation}\label{eq:e_uh}
  e_{uh} \triangleq \left[ \xi_1\sqrt{K\left( \frac{X_1-u}{h} \right)},\ldots, \xi_n\sqrt{K\left( \frac{X_n-u}{h} \right)} \right]^{\top},
\end{equation}
\begin{equation}\label{eq:eta_uh}
  \eta_{uh} \triangleq \left[ f(X_1)\sqrt{K\left( \frac{X_1-u}{h} \right)},\ldots, f(X_n)\sqrt{K\left( \frac{X_n-u}{h} \right)}\right]^{\top},
\end{equation}
\begin{equation}\label{eq:t_uh}
  t_{uh} \triangleq \left[ f_u(X_1)\sqrt{K\left( \frac{X_1-u}{h} \right)},\ldots, f_u(X_n)\sqrt{K\left( \frac{X_n-u}{h} \right)}\right]^{\top},
\end{equation}
respectively. It follows that the vector $a_{uh}$, as defined in (\ref{eq:a}), can be expressed as
\begin{equation}\label{eq:japan1}
  a_{uh} = \frac{1}{\sqrt{nh^d}} Q_{uh} \left( \eta_{uh} + e_{uh} \right).
\end{equation}
Recall the Taylor polynomial defined by (\ref{eq:taylor_poly}). Then, the vector $t_{uh}$ can be written as
\begin{equation}\label{eq:ljinv1}
  t_{uh} = \sqrt{nh^d} Q_{uh}^{\top} c_{uh},
\end{equation}
where
\begin{equation}\label{eq:ljinv2}
  c_{uh} \triangleq \left( \tilde{D}^sf(u)h^{|s|} \right)_{0 \leq |s| \leq \ell}.
\end{equation}

Define a diagonal matrix in $\mathbb{R}^{N_{\ell,d}\times N_{\ell,d}}$ as follows:
\begin{equation}\label{eq:D}
  D_{uh} \triangleq \diag\left[\left(\frac{1}{nh^d}\sum_{i=1}^{n} \frac{1}{(s!)^2}\left(\frac{X_i - u}{h} \right)^{2s} K\left( \frac{X_i-u}{h} \right)\right)_{0 \leq |s| \leq \ell}\right].
\end{equation}
Note that the diagonal entries of $D_{uh}$ are identical to those of the matrix $B_{uh}$. Similar to (\ref{eq:B_tilde}), we define
\begin{equation}\label{eq:tilde_D}
  \tilde{D}_{uh} \triangleq D_{uh} + \tau I_{N_{\ell,d}} 1_{\{ \lambda_{\min}(D_{uh}) < \tau \}}.
\end{equation}
Note that the matrixes (\ref{eq:D})--(\ref{eq:tilde_D}) have been defined in Section~\ref{sec:adaptive} with $u = u_x$. Here, we provide two definitions for general $u$.

\subsection{Technical lemma}

Before proving the theorem, we first state several useful lemmas. The first lemma provides an upper bound on the maximum deviation of the functions in $\mathcal{F}(\beta,C_{\beta})$ under the adversarial attack.

\begin{lemma}\label{lem:lip}
  Let $f \in \mathcal{F}(\beta,C_{\beta})$ and $A \in \mathcal{T}^{\prime}(r)$. Then for all $x \in [0,1]^d$, it holds that
  \begin{equation}\label{eq:lip}
    \sup_{x' \in A(x)}\left| f(x') - f(x) \right| \lesssim r^{1 \wedge \beta}.
  \end{equation}
\end{lemma}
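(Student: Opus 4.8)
## Proof proposal for Lemma~\ref{lem:lip}

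The plan is to reduce the claim to the one-dimensional behaviour of $f$ along a line segment connecting $x$ to any admissible perturbation $x'$, and then split into the two regimes $0<\beta\le 1$ and $\beta>1$. By the definition of $\mathcal{T}'(r)$ we have $x'\in A(x)\subseteq\Omega$ with $\|x'-x\|\le r$, so it suffices to bound $|f(x')-f(x)|$ uniformly over all pairs $x,x'\in[0,1]^d$ with $\|x'-x\|\le r$. Write $x' = x + t(x'-x)$, $t\in[0,1]$, and set $g(t)\triangleq f(x+t(x'-x))$, which is well defined since $[0,1]^d$ is convex.

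First I would treat the case $0<\beta\le 1$, where $\lfloor\beta\rfloor = 0$. Then the Hölder condition (\ref{eq:holder_smooth}) gives directly $|f(x')-f(x)|\le C_\beta\|x'-x\|^{\beta}\le C_\beta r^{\beta} = C_\beta r^{1\wedge\beta}$, which is the claimed bound. Next, for $\beta>1$ we have $\lfloor\beta\rfloor\ge 1$, so $f$ is continuously differentiable and (\ref{eq:holder_smooth}) bounds $\sup_x|D^s f(x)|\le C_\beta$ for all $|s|\le\lfloor\beta\rfloor$, in particular for $|s|=1$. By the mean value theorem applied to $g$, or equivalently by integrating the gradient along the segment, $|f(x')-f(x)| = |g(1)-g(0)| \le \sup_{\zeta\in[0,1]^d}\|\nabla f(\zeta)\|\,\|x'-x\| \le \sqrt{d}\,C_\beta\,r$. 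Since $1\wedge\beta = 1$ in this regime, this is again $\lesssim r^{1\wedge\beta}$, and combining the two cases completes the argument, with the implied constant depending only on $d$, $\beta$, and $C_\beta$.

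There is essentially no hard obstacle here; the only points requiring a little care are (a) making sure the line segment from $x$ to $x'$ stays in $\Omega=[0,1]^d$, which is immediate from convexity of the cube, and (b) handling the borderline exponent $1\wedge\beta$ correctly so that the two cases glue into the single bound $r^{1\wedge\beta}$ stated in (\ref{eq:lip}). One small subtlety: the lemma is stated for all $r\in[0,\infty)$ via $\mathcal{T}'(r)$, but for $r$ bounded away from $0$ the bound $r^{1\wedge\beta}$ is trivially satisfied because $f$ itself is bounded by $C_\beta$ on $[0,1]^d$, so the interesting regime is $r$ small, where the above Taylor/Hölder estimates are sharp. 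I would phrase the write-up to cover both without fuss by noting $\sup_{x,x'\in[0,1]^d}|f(x')-f(x)|\le 2C_\beta$ and absorbing it into the constant when $r\gtrsim 1$.
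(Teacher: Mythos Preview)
Your proof is correct and considerably more direct than the paper's. You split cleanly on $\beta\le 1$ versus $\beta>1$: for $\beta\le 1$ the H\"older condition with $\lfloor\beta\rfloor=0$ gives the bound immediately, and for $\beta>1$ you observe that first partial derivatives are uniformly bounded by $C_\beta$ (since $|s|=1\le\lfloor\beta\rfloor$), so the mean value theorem along the segment yields a Lipschitz bound $\sqrt{d}\,C_\beta\,r$. Convexity of $[0,1]^d$ keeps the segment in the domain, and the large-$r$ regime is handled by the global bound on $f$.

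The paper instead introduces the Taylor polynomial $f_u$ at a midpoint $\bar{x}=(x+x')/2$ and writes the three-term decomposition
\[
|f(x')-f(x)|\le |f(x')-f_{\bar{x}}(x')|+|f_{\bar{x}}(x')-f_{\bar{x}}(x)|+|f_{\bar{x}}(x)-f(x)|,
\]
bounding the outer terms by $\lesssim r^\beta$ via the Taylor remainder estimate $|f(x)-f_u(x)|\lesssim\|x-u\|^\beta$ and the middle term by $\lesssim r$ via explicit expansion of the polynomial. The reason for this detour is that the paper needs the auxiliary Taylor remainder bound (their inequalities (\ref{eq:poly})--(\ref{eq:poly2})) repeatedly in later proofs (Lemma~\ref{lem:taylor}, the bias analysis of the local polynomial estimator in Section~\ref{sec:proof:buxiang2}), so it is established here once and reused. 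Your argument is more elementary for this lemma in isolation, but if you adopt it you will still have to prove the Taylor approximation bound separately when it is needed downstream.
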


\begin{proof}[Proof of Lemma~\ref{lem:lip}]
  We first prove that for any $x,u \in [0,1]^d$,
\begin{equation}\label{eq:poly}
\begin{split}
   \left|f(x) - f_{u}(x)\right| & \leq C_{\beta}\sum_{|s|=\lfloor \beta \rfloor} \frac{1}{s !}|x-u|^s \left\| x - u \right\|^{\beta - \lfloor \beta \rfloor}, \\
\end{split}
\end{equation}
where $f_{u}$ is the Taylor polynomial of degree $\lfloor \beta \rfloor$ at $u$.

The proof of the argument (\ref{eq:poly}) with $d=1$ can be found in Lemma~11.1 of \cite{Gyorfi2002book}. Now we complete the proof for general $d$. When $0 < \beta \leq 1$, we have $\lfloor \beta \rfloor = 0$ and $f_{u}(x) = f(u)$. Then (\ref{eq:poly}) follows from the definition (\ref{eq:holder_smooth}). In the case $\beta > 1$, we have
\begin{equation*}
  \begin{split}
       \left|f(x) - f_{u}(x)\right| & = \left|f(x) - \sum_{|s| \leq \lfloor \beta \rfloor-1} \frac{D^s f(u)}{s !}(x - u)^s - \sum_{|s| = \lfloor \beta \rfloor} \frac{D^s f(u)}{s !}(x - u)^s\right|\\
       & = \left| \sum_{|s|=\lfloor \beta \rfloor} \frac{\lfloor \beta \rfloor}{s !}(x-u)^s \int_0^1(1-z)^{\lfloor \beta \rfloor-1}D^s f[u+z(x-u)] d z \right.\\
       &\quad\left.- \sum_{|s| = \lfloor \beta \rfloor} \frac{\lfloor \beta \rfloor }{s !}(x-u)^s\int_0^1(1-z)^{\lfloor \beta \rfloor-1} D^s f(u) d z \right|\\
       & = \left| \sum_{|s|=\lfloor \beta \rfloor} \frac{\lfloor \beta \rfloor}{s !}(x-u)^s \int_0^1(1-z)^{\lfloor \beta \rfloor-1}\left\{D^s f[u+z(x-u)] - D^s f(u) \right\} d z \right|\\
       & \leq C_{\beta}\sum_{|s|=\lfloor \beta \rfloor} \frac{1}{s !}|x-u|^s \left\| x - u \right\|^{\beta - \lfloor \beta \rfloor},\\
  \end{split}
\end{equation*}
where the second equality follows from the integral form of the Taylor series remainder, and the last inequality follows from the definition of $\mathcal{F}(\beta,C_{\beta})$. Thus, (\ref{eq:poly}) is established.

By the multinomial theorem and the relation $\| x \|_1 \leq \sqrt{d}\| x \|_2$ for $x \in \mathbb{R}^d$, we see the right side of (\ref{eq:poly}) is upper bounded by
\begin{equation}\label{eq:poly2}
\begin{split}
   \left|f(x) - f_{u}(x)\right| & \leq \frac{C_{\beta}}{\lfloor \beta \rfloor !}\| x - u \|_1^{\lfloor \beta \rfloor}\left\| x - u \right\|^{\beta - \lfloor \beta \rfloor}\\
    &\leq \frac{C_{\beta}d^{\frac{\lfloor \beta \rfloor}{2}}}{\lfloor \beta \rfloor !}\left\| x - u \right\|^{\beta} \lesssim \left\| x - u \right\|^{\beta},
\end{split}
\end{equation}
where the last inequality follows from the assumption that $d$ is not related to $n$.

Note that
\begin{equation}\label{eq:key_0}
  \sup_{x' \in A(x)}\left| f(x') - f(x) \right| \leq \sup_{\|x' - x\| \leq r} \left| f(x') - f(x) \right|.
\end{equation}
To show (\ref{eq:lip}), it suffices to upper bound $\sup_{\|x' - x\| \leq r} | f(x') - f(x) |$. Define $\bar{x} = (x'+ x)/2$. Then we have
\begin{equation}\label{eq:key_1}
\begin{split}
  \sup_{\|x' - x\| \leq r}\left|  f(x') - f(x) \right| & = \sup_{\|x' - x\| \leq r}\left| f(x') - f_{\bar{x}}(x') + f_{\bar{x}}(x') - f_{\bar{x}}(x) + f_{\bar{x}}(x) -f(x)\right|\\
     & \leq \sup_{\| x' - x \| \leq r}\left| f(x') - f_{\bar{x}}(x') \right| + \sup_{\| x' - x \| \leq r}\left|  f_{\bar{x}}(x') - f_{\bar{x}}(x) \right|\\
     &\quad+ \sup_{\| x' - x \| \leq r}\left|  f_{\bar{x}}(x) -f(x)\right|.
\end{split}
\end{equation}
The first term at the right side of (\ref{eq:key_1}) is upper bounded by
\begin{equation}\label{eq:key_2}
\begin{split}
     \sup_{\| x' - x \| \leq r}\left| f(x') - f_{\bar{x}}(x') \right| & \lesssim \sup_{\| x' - x \| \leq r} \| x' - \bar{x} \|^{\beta} = \frac{1}{2^{\beta}}\sup_{\| x' - x \| \leq r}\| x' - x \|^{\beta} \lesssim r^{\beta}, \\
\end{split}
\end{equation}
where the first inequality follows from (\ref{eq:poly2}), and the equality follows from the definition of $\bar{x}$. Based on the same technique in (\ref{eq:key_2}), we see the third term of (\ref{eq:key_1}) is upper bounded by
\begin{equation}\label{eq:key_4}
  \sup_{\| x' - x \| \leq r}\left|  f_{\bar{x}}(x) -f(x)\right| \lesssim r^{\beta}.
\end{equation}
The task is now to find the upper bound on the second term of (\ref{eq:key_1}). We have
\begin{equation}\label{eq:key_3}
  \begin{split}
       \sup_{\| x' - x \| \leq r}\left|  f_{\bar{x}}(x') - f_{\bar{x}}(x) \right| &= \sup_{\| x' - x \| \leq r}\left|  \sum_{|s| \leq \lfloor \beta \rfloor} \frac{D^s f(\bar{x})}{s !}\left[(x' - \bar{x})^s - (x - \bar{x})^s\right] \right|\\
       & = \sup_{\| x' - x \| \leq r}\left|  \sum_{|s| \leq \lfloor \beta \rfloor} \frac{D^s f(\bar{x})}{2^{|s|} s !}\left[1+(-1)^{|s|+1}\right](x' - x)^s \right|\\
       & \leq C_{\beta} \sup_{\| x' - x \| \leq r} \sum_{1 \leq k \leq \lfloor \beta \rfloor} \sum_{|s| = k} \frac{2}{s !}|x' - x|^s\\
       & \leq  2C_{\beta} \sup_{\| x' - x \| \leq r} \sum_{1 \leq k \leq \lfloor \beta \rfloor} \frac{d^{\frac{k}{2}}}{k!}\| x' - x \|^{k}\\
       & \lesssim r,
  \end{split}
\end{equation}
where the first equality follows from (\ref{eq:taylor_poly}), the second equality is due to the definition of $\bar{x}$, the first inequality follows from $|D^s f(\bar{x})| \leq C_{\beta}$ and $1/2^{|s|} \leq 1$ for $0 \leq |s| \leq \lfloor \beta \rfloor$, and the second inequality follows the similar reasoning as in (\ref{eq:poly2}).
Combining (\ref{eq:key_0}) with (\ref{eq:key_1})--(\ref{eq:key_3}), we obtain (\ref{eq:lip}).

\end{proof}

The next lemma shows that within each subset $\Omega_k$, the functions in $\mathcal{F}(\beta,C_{\beta})$ can be approximated by its Taylor polynomial expanded at the corresponding discretization point $u_k$.

\begin{lemma}\label{lem:taylor}
  For any $f \in \mathcal{F}(\beta,C_{\beta})$ and $\Omega_k, k=1,\ldots, M^d$, we have
  \begin{equation}\label{eq:discre_error}
    \sup_{x' \in \Omega_k} \left| f_{u_k}(x') - f(x') \right| \lesssim \frac{1}{M^{\beta}}.
  \end{equation}
\end{lemma}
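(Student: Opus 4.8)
The plan is to combine a purely geometric bound on the diameter of each cell $\Omega_k$ with the polynomial approximation estimate already derived inside the proof of Lemma~\ref{lem:lip}.

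\textbf{Step 1 (diameter of $\Omega_k$).} First I would record that $\Lambda_M$ consists exactly of the centers of the $M^d$ axis-aligned cubes of side length $1/M$ tiling $[0,1]^d$, and that $x$ is assigned to $\Omega_k$ precisely when $u_k$ is the point of $\Lambda_M$ nearest to $x$ (ties broken toward $0$). Since the Voronoi cell of a grid point under the $\ell_2$-norm is that very cube, $\Omega_k$ is contained in the closed cube of side $1/M$ centered at $u_k$. Hence $\|x' - u_k\|_\infty \le 1/(2M)$, and therefore
\[
  \sup_{x' \in \Omega_k}\| x' - u_k \| \le \frac{\sqrt{d}}{2M}
\]
for every $k = 1,\ldots,M^d$, uniformly including points on the boundary of $\Omega_k$.

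\textbf{Step 2 (apply the Taylor bound).} Next I would invoke inequality (\ref{eq:poly2}) established in the proof of Lemma~\ref{lem:lip}: for all $x', u \in [0,1]^d$ one has $\bigl| f(x') - f_u(x') \bigr| \lesssim \| x' - u \|^{\beta}$, where $f_u$ is the Taylor polynomial of degree $\lfloor \beta \rfloor$ at $u$; under the standing assumption $\ell \ge \lfloor\beta\rfloor$ this polynomial coincides with the degree $\ell \wedge \lfloor\beta\rfloor$ polynomial in (\ref{eq:taylor_poly}). Taking $u = u_k$ and combining with Step~1,
\[
  \sup_{x' \in \Omega_k}\bigl| f_{u_k}(x') - f(x') \bigr| \;\lesssim\; \sup_{x' \in \Omega_k}\| x' - u_k \|^{\beta} \;\le\; \left(\frac{\sqrt{d}}{2M}\right)^{\beta} \;\lesssim\; \frac{1}{M^{\beta}},
\]
where the final step uses that $d$ is fixed and does not grow with $n$. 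This is exactly (\ref{eq:discre_error}).

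The argument is essentially bookkeeping rather than a genuine obstacle: no probabilistic input is needed, and all the analytic content is imported from (\ref{eq:poly2}). The only point deserving a little care is Step~1, namely checking that the Voronoi cell $\Omega_k$ of the lattice $\Lambda_M$ sits inside the side-$1/M$ cube about $u_k$, so that the deterministic diameter bound holds uniformly over $\Omega_k$ (including its boundary, over which the supremum in (\ref{eq:discre_error}) is taken).
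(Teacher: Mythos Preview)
Your proposal is correct and matches the paper's own proof essentially step for step: the paper simply notes that $\sup_{x'\in\Omega_k}\|x'-u_k\|=\sqrt{d}/(2M)$ from the construction (\ref{eq:discrete_set}) and then applies (\ref{eq:poly2}) to conclude. Your write-up just fills in a bit more detail on the Voronoi-cell geometry and the identification of $f_{u_k}$ with the degree-$\lfloor\beta\rfloor$ Taylor polynomial, but the argument is the same.
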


\begin{proof}[Proof of Lemma~\ref{lem:taylor}]

Based on the construction of the discretization points in (\ref{eq:discrete_set}), it is clear that $\sup_{x' \in \Omega_k}\|x' - u_k \| = \sqrt{d}/(2M) $. We conclude from (\ref{eq:poly2}) that
\begin{equation*}
  \sup_{x' \in \Omega_k} \left| f_{u_k}(x') - f(x') \right| \lesssim \sup_{x' \in \Omega_k} \| x' - u_k \|^{\beta} \lesssim \frac{1}{M^{\beta}},
\end{equation*}
and hence (\ref{eq:discre_error}).

\end{proof}

Let us denote by $\mathfrak{X}_n \triangleq \sigma(X_1,\ldots,X_n)$ the $\sigma$-algebra generated by the design. In the following lemma, we define several events on $\mathfrak{X}_n$ and analyze the probability that they occur. For $k=1,\ldots,M^d$, recall that the diagonal matrix $D_{u_kh}$ is defined in (\ref{eq:D}) with $u = u_k$.

\begin{lemma}\label{lem:event}
  For $k=1,\ldots,M^d$, define $\mathcal{A}_{u_kh} \triangleq \{ 0 <n_{u_kh} \leq c_1 n h^d\}$, $\mathcal{B}_{u_kh} \triangleq \{\lambda_{\min}(B_{u_kh}) \geq c_2 \}$, and $\mathcal{D}_{u_kh} \triangleq \{ c_3 \leq  \lambda_{\min}(D_{u_kh}) \leq  \lambda_{\max}(D_{u_kh}) \leq c_4\}$, where $c_i,i=1,\ldots,4$, are four positive constants that will be specified in the proof. Assume there exist two constants $\gamma_1,\gamma_2>0$ such that $M/n^{\gamma_1} \to 0$ and $nh^d/n^{\gamma_2} \to \infty$. Then when $n$ is large enough, there must exist an event $\mathcal{C}_h$ on $\mathfrak{X}_n$ such that
  \begin{equation*}
    \mathcal{C}_h \subseteq \bigcap_{k=1}^{M^d}\left(\mathcal{A}_{u_kh} \cap \mathcal{B}_{u_kh} \cap \mathcal{D}_{u_kh}\right)
  \end{equation*}
  and
  \begin{equation*}
    \mathbb{P}_{\otimes^ n}\left( \mathcal{C}_h \right) \geq 1 - \exp\left( - C nh^d \right),
  \end{equation*}
  where $C>0$ is a constant not related to $n$.
\end{lemma}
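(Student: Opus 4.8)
The plan is to control each of the three events $\mathcal{A}_{u_kh}$, $\mathcal{B}_{u_kh}$, $\mathcal{D}_{u_kh}$ for a \emph{fixed} $k$ with exponential-in-$nh^d$ probability, then take a union bound over the $M^d$ discretization points, absorbing the polynomial factor $M^d$ using the growth assumptions $M \lesssim n^{\gamma_1}$ and $nh^d \gtrsim n^{\gamma_2}$.

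First I would handle $\mathcal{A}_{u_kh}$. The count $n_{u_kh} = \sum_{i=1}^n 1_{\{X_i \in B(u_k,h)\}}$ is a sum of i.i.d.\ Bernoulli variables with mean $p_k \triangleq \mathbb{P}_X(B(u_k,h) \cap [0,1]^d)$. Under the bounded density assumption (Definition~\ref{def:strong_density}), $p_k \asymp h^d$ up to constants depending only on $d$, $\mu_{\min}$, $\mu_{\max}$ (the intersection with $[0,1]^d$ costs at most a dimensional constant since $u_k$ lies at the center of a cube of side $1/M \asymp h$). A two-sided Bernstein/Chernoff bound then gives $\mathbb{P}(n_{u_kh} \notin [c_0 nh^d, c_1 nh^d]) \leq 2\exp(-C nh^d)$ for suitable constants; in particular $n_{u_kh} > 0$ on this event.

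Next, for $\mathcal{B}_{u_kh}$ and $\mathcal{D}_{u_kh}$: write $B_{u_kh} = \frac{1}{nh^d}\sum_{i=1}^n W_i$ where $W_i \triangleq U\big(\tfrac{X_i-u_k}{h}\big)U^\top\big(\tfrac{X_i-u_k}{h}\big)K\big(\tfrac{X_i-u_k}{h}\big)$, an i.i.d.\ sequence of PSD matrices with operator norm bounded by a constant (since $K$ is supported in $B(0,1)$ and $\|U(\cdot)\|$ is bounded on $B(0,1)$ by Assumption~\ref{ass:kernel}). Its expectation is $\mathbb{E} W_i = h^d \,\overline{B}_k$ where $\overline{B}_k \triangleq \int U(v)U^\top(v)K(v)\mu(u_k+hv)\,dv$, and one shows $\lambda_{\min}(\overline{B}_k)$ is bounded below by a positive constant uniformly in $k$: by the lower bound $K \geq K_{\min}1_{\{\|v\|\le\Delta\}}$ and $\mu \geq \mu_{\min}$, $\overline{B}_k \succeq \mu_{\min}K_{\min}\int_{B(0,\Delta)} U(v)U^\top(v)\,dv$, and the latter Gram matrix is positive definite because $\{v^s/s!\}$ are linearly independent on any ball of positive radius (this standard fact is, e.g., behind Theorem~1.7 of Tsybakov). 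One has to be slightly careful near the boundary of $[0,1]^d$, where the integration domain is $B(0,\Delta)\cap\frac{1}{h}([0,1]^d - u_k)$; since $u_k$ is a cube center this still contains a ball of radius $\asymp \Delta$, so the lower eigenvalue bound survives with a (possibly smaller) dimensional constant. For the concentration, apply the matrix Bernstein inequality (or matrix Chernoff) to $\frac{1}{nh^d}\sum(W_i - \mathbb{E}W_i)$: since $\|W_i\| \le C$ and $\|\mathbb{E}[W_i W_i^\top]\| \le C h^d$, one gets $\mathbb{P}(\|B_{u_kh} - \overline{B}_k\| > t) \le 2N_{\ell,d}\exp(-C nh^d t^2)$ for $t$ small, hence $\lambda_{\min}(B_{u_kh}) \ge c_2 \triangleq \frac12\inf_k\lambda_{\min}(\overline B_k)$ off an event of probability $\le 2N_{\ell,d}\exp(-Cnh^d)$. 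The diagonal matrix $D_{u_kh}$ has entries that are the diagonal entries of $B_{u_kh}$, so the same concentration gives $\mathcal{D}_{u_kh}$ with $c_3,c_4$ the min/max over the (bounded, positive) limiting diagonal entries; positivity of those entries follows since $\int (v^s/s!)^2 K(v)\mu(u_k+hv)\,dv \ge \mu_{\min}K_{\min}\int_{B(0,\Delta)}(v^s/s!)^2dv > 0$.

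Finally I would set $\mathcal{C}_h \triangleq \bigcap_{k=1}^{M^d}(\mathcal{A}_{u_kh}\cap\mathcal{B}_{u_kh}\cap\mathcal{D}_{u_kh})$, which is by construction a subset of the desired intersection and is $\mathfrak{X}_n$-measurable. A union bound gives $\mathbb{P}_{\otimes^n}(\mathcal{C}_h^c) \le M^d(4 + 2N_{\ell,d})\exp(-C' nh^d)$; since $M^d \le n^{d\gamma_1} = \exp(d\gamma_1\log n)$ and $nh^d \ge n^{\gamma_2}$ dominates $\log n$ for large $n$, the prefactor is absorbed: $M^d(4+2N_{\ell,d})\exp(-C'nh^d) \le \exp(-C nh^d)$ for a smaller constant $C>0$ and $n$ large. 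I expect the main obstacle to be the uniform-in-$k$ lower bound $\inf_k \lambda_{\min}(\overline{B}_k) > 0$, particularly the bookkeeping for boundary cubes where the effective integration region is a truncated ball; the resolution is simply that a cube center is always at distance $\ge 1/(2M) \asymp h$ from each face, so after rescaling by $1/h$ the region still contains a fixed-radius ball, and all constants depend only on $d,\ell,\Delta,\mu_{\min},\mu_{\max},K_{\min}$.
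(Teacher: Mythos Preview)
Your plan is essentially the paper's: Bernstein-type concentration for each of the three events at a fixed $u_k$, a population lower-eigenvalue bound for $\mathbb{E}B_{u_kh}$ via linear independence of the monomials on a set of positive Lebesgue measure, then a union bound over the $M^d$ centers with the polynomial prefactor absorbed into the exponential since $nh^d\gtrsim n^{\gamma_2}$. The only cosmetic difference is that the paper applies \emph{scalar} Bernstein to each of the $N_{\ell,d}^2$ entries of $B_{u_kh}-\mathbb{E}B_{u_kh}$ rather than matrix Bernstein; since $N_{\ell,d}$ is a fixed constant, both routes give the same exponential rate.

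One caveat on your boundary argument: your claim that the rescaled integration region $B(0,\Delta)\cap\frac1h([0,1]^d-u_k)$ ``still contains a ball of radius $\asymp\Delta$'' implicitly uses $1/(2M)\gtrsim \Delta h$, i.e.\ $Mh\lesssim 1$, which is \emph{not} among the lemma's hypotheses (and fails, e.g., in the later application with $M\asymp n$). The paper sidesteps this by invoking the cube's regularity property $\lambda([0,1]^d\cap B(u_k,r))\ge c_\mu\,\lambda(B(u_k,r))$, which holds for every $u_k\in[0,1]^d$ with $c_\mu=2^{-d}$; this guarantees the rescaled region always has Lebesgue measure $\ge 2^{-d}v_d\Delta^d$, and since it always contains at least one fixed orthant of $B(0,\Delta)$, the Gram-matrix lower bound is uniform in $k$. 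Swap in this argument and your proof goes through.
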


\begin{proof}[Proof of Lemma~\ref{lem:event}]

\underline{\textsc{Studying the event} $\mathcal{A}_{u_kh}$.} Define
$$
n_{u_kh}' \triangleq \mathrm{Card}[\{X_i:X_i \in B(u_k, \Delta h) \}]=\sum_{i=1}^{n}1_{\{ \| X_i - u_k \| \leq \Delta h \}},
$$
where $0 < \Delta < 1$ is the constant appearing in Assumption~\ref{ass:kernel}. Recalling the definition $n_{u_kh} = \mathrm{Card}[\{X_i:X_i \in B(u_k, h) \}]$, we thus have $n_{u_kh}' \leq n_{u_kh}$, and
\begin{equation}\label{eq:luo1}
  \left\{ n_{u_kh}' >0 \right\} \subseteq \left\{ n_{u_kh} >0 \right\}.
\end{equation}

Note that the random variable $1_{\{ \| X_i - u_k \| \leq \Delta h \}} - \mathbb{P}(\| X - u_k \| \leq \Delta h)$ has mean $0$, and is bounded since $|1_{\{ \| X_i - u_k \| \leq \Delta h \}} - \mathbb{P}(\| X - u_k \| \leq \Delta h) | \leq 1$. And its variance is upper bounded by $\mathbb{P}(\| X - u_k \| \leq \Delta h) [1 - \mathbb{P}(\| X - u_k \| \leq \Delta h)] \leq \mathbb{P}(\| X - u_k \| \leq \Delta h)$. Based on the Bernstein inequality, for any $\varepsilon >0$, we have
\begin{equation}\label{eq:luoluo2}
\begin{split}
   &\mathbb{P}_{\otimes^ n}\left[ \left|  n_{u_kh}' - n \mathbb{P}(\| X - u_k \| \leq \Delta h) \right|\geq\varepsilon  \right] \\
    &= \mathbb{P}_{\otimes^ n}\left\{ \left| \sum_{i=1}^{n}\left[ 1_{\{ \| X_i - u_k \| \leq \Delta h \}} - \mathbb{P}(\| X - u_k \| \leq \Delta h) \right] \right| \geq\varepsilon  \right\} \\
   & \leq 2\exp\left( -\frac{\varepsilon^2/2}{n\mathbb{P}(\| X - u_k \| \leq \Delta h) [1 - \mathbb{P}(\| X - u_k \| \leq \Delta h)]+ \varepsilon/3} \right)\\
     & \leq 2\exp\left( -\frac{\varepsilon^2/2}{n\mathbb{P}\left(\| X - u_k \| \leq \Delta h\right)+ \varepsilon/3} \right)\\
     &  \leq 2\exp\left( -\frac{\varepsilon^2/2}{n\kappa_1 h^d+ \varepsilon/3} \right),
\end{split}
\end{equation}
where the last inequality follows from
\begin{equation}\label{eq:luoluo0}
\begin{split}
   \mathbb{P}\left(\| X - u_k \| \leq \Delta h \right) & = \int_{\Omega \cap B(u_k,\Delta h)}\mu(x)dx  \leq \mu_{\max}\lambda\left[ B(u_k,\Delta h)\right]\\
   & = \mu_{\max}v_d (\Delta h)^d \triangleq \kappa_1 h^d,
\end{split}
\end{equation}
and here $v_d = \lambda[B(0,1)]$ denotes the volume of the unit ball. In addition, recall that the support $\Omega = [0,1]^d$ satisfies the $(c_\mu, r_\mu)$-regularity condition introduced in \cite{Audibert2007fast}, i.e., there exist constants $c_\mu, r_\mu > 0$ such that
\begin{equation}\label{eq:support_regular}
  \lambda\left[\Omega \cap B(x, r)\right] \geq c_\mu \lambda\left[B(x, r)\right], \quad \forall 0<r \leq r_\mu, \forall x \in \Omega.
\end{equation}
We have
\begin{equation}\label{eq:luoluo}
\begin{split}
   \mathbb{P}\left(\| X - u_k \| \leq \Delta h \right) & = \int_{\Omega \cap B(u_k,\Delta h)}\mu(x)dx \geq \mu_{\min}\lambda\left[ \Omega \cap B(u_k,\Delta h)\right] \\
     & \geq \mu_{\min}c_{\mu} \lambda\left[ B(u_k,\Delta h)\right] = \mu_{\min}c_{\mu} v_d (\Delta h)^d \triangleq \kappa_2 h^d,
\end{split}
\end{equation}
where the second inequality follows from (\ref{eq:support_regular}) and the condition $h \leq r_{\mu}/\Delta$.

We now consider the event $\{  n_{u_kh}' \geq \kappa_2 n h^d/2\}$. From (\ref{eq:luo1}), we see $\{  n_{u_kh}' \geq \kappa_2 n h^d/2\} \subseteq \{ n_{u_kh} >0 \}$. In addition, the probability of the complement of this event is upper bounded by
\begin{equation}\label{eq:luoxin1}
\begin{split}
    \mathbb{P}_{\otimes^ n}\left( n_{u_kh}'\leq \frac{\kappa_2 n h^d}{2}\right)& = \mathbb{P}_{\otimes^ n}\left(   n_{u_kh}' \leq \kappa_2 n h^d - \frac{\kappa_2 n h^d}{2} \right)  \\
     & \leq \mathbb{P}_{\otimes^ n}\left[   n_{u_kh}' \leq  n \mathbb{P}(\| X - u_k \| \leq \Delta h) - \frac{\kappa_2 n h^d}{2} \right]\\
     & \leq \mathbb{P}_{\otimes^ n}\left[ \left|  n_{u_kh}' - n \mathbb{P}(\| X - u_k \| \leq \Delta h) \right|\geq \frac{\kappa_2 n h^d}{2}  \right]\\
       & \leq 2\exp\left( - Cnh^d \right),
\end{split}
\end{equation}
where the second inequality follows from (\ref{eq:luoluo}), and the last inequality follows from (\ref{eq:luoluo2}).

Similar to (\ref{eq:luoluo0}) and (\ref{eq:luoluo}), we see
\begin{equation}\label{eq:wo2}
  \kappa_4 h^d \leq \mathbb{P}\left(\| X - u_k \| \leq  h \right) \leq \kappa_3 h^d.
\end{equation}
Define $c_1 \triangleq \kappa_3 + \kappa_4/2$ and consider the event $\{ n_{u_kh} \geq c_1 n h^d \} $. Using the Bernstein inequality again, we obtain
\begin{equation}\label{eq:wo1}
\begin{split}
   \mathbb{P}_{\otimes^ n}\left( n_{u_kh} \geq c_1 n h^d \right) & = \mathbb{P}_{\otimes^ n}\left[ n_{u_kh} \geq  \left( \kappa_3 + \frac{\kappa_4}{2} \right)n h^d \right]  \\
     & \leq \mathbb{P}_{\otimes^ n}\left[   n_{u_kh} \geq  n \mathbb{P}(\| X - u_k \| \leq  h) + \frac{\kappa_4 n h^d}{2} \right]\\
     & \leq \mathbb{P}_{\otimes^ n}\left[ \left|  n_{u_kh} - n \mathbb{P}(\| X - u_k \| \leq  h) \right|\geq \frac{\kappa_4 n h^d}{2}  \right]\\
       & \leq 2\exp\left( - Cnh^d \right),
\end{split}
\end{equation}
where the first inequality follows from (\ref{eq:wo2}), and the last inequality can be obtained by the similar reasoning as that in (\ref{eq:luoluo2}).

Therefore, we define
\begin{equation*}
  \mathcal{A}_{u_kh}' \triangleq \left\{ n_{u_kh}'\geq \frac{\kappa_2 n h^d}{2} \right\} \cap \left\{ n_{u_kh} \leq c_1 n h^d \right\}.
\end{equation*}
We see $\mathcal{A}_{u_kh}' \subseteq \mathcal{A}_{u_kh}$, and from (\ref{eq:luoxin1}) and (\ref{eq:wo1}), we have
\begin{equation}\label{eq:wo3}
  \mathbb{P}_{\otimes^ n}\left( \mathcal{A}_{u_kh}'^c\right) \leq 4\exp\left( - Cnh^d \right).
\end{equation}

\underline{\textsc{Studying the event} $\mathcal{B}_{u_kh}$.} In this part, we bound the probability of $\mathcal{B}_{u_kh}$ based on the technique in \cite{Audibert2007fast}. Define a matrix $\bar{B}_{u_kh} \triangleq \mathbb{E}(B_{u_kh}) \in \mathbb{R}^{N_{\ell,d} \times N_{\ell,d}}$, where $\bar{B}_{u_kh}(s,r)$ and $B_{u_kh}(s,r)$ denote $sr$-th component of the matrixes $\bar{B}_{u_kh}$ and $B_{u_kh}$, respectively. Indeed, we have
\begin{equation}\label{eq:zhuzhu3}
\begin{split}
   B_{u_kh}(s,r) & = \frac{1}{nh^d}\sum_{i=1}^{n}\frac{(X_i - u_k)^s}{h^{|s|}s!}\frac{(X_i - u_k)^r}{h^{|r|}r!}K\left( \frac{X_i-u_k}{h}\right) \\
     & = \frac{1}{nh^d} \sum_{i=1}^{n} \frac{1}{s!r!}\left( \frac{X_i - u_k}{h} \right)^{s+r}K\left( \frac{X_i-u_k}{h}\right),
\end{split}
\end{equation}
and
\begin{equation}\label{eq:zhuzhu1}
  \begin{split}
     \bar{B}_{u_kh}(s,r) & =\mathbb{E}\left[\frac{1}{h^d}\frac{1}{s!r!}\left( \frac{X - u_k}{h} \right)^{s+r}K\left( \frac{X-u_k}{h}\right)\right]\\
       &= \int_{\Omega}\frac{1}{h^d}\frac{1}{s!r!}\left( \frac{x - u_k}{h} \right)^{s+r}K\left( \frac{x-u_k}{h}\right) \mu(x) dx \\
     & =\int_{\mathbb{R}^d} \frac{ t^{s+r}}{s!r!}K(t)\mu(u_k+ht)dt.
  \end{split}
\end{equation}
Then, the smallest eigenvalue $\lambda_{\min}(B_{u_kh})$ can be lower bounded by
\begin{equation}\label{eq:zhuzhu4}
  \begin{split}
     \lambda_{\min}(B_{u_kh}) & = \inf_{\| w \|=1}w^{\top}B_{u_kh}w = \inf_{\| w \|=1}w^{\top}\left(\bar{B}_{u_kh}+B_{u_kh} -\bar{B}_{u_kh} \right)w\\
       & \geq  \inf_{\| w \|=1}w^{\top}\bar{B}_{u_kh}w + \inf_{\| w \|=1}w^{\top}\left(B_{u_kh} -\bar{B}_{u_kh} \right)w\\
       & \geq \inf_{\| w \|=1}w^{\top}\bar{B}_{u_kh}w - \sum_{0 \leq |s|,|r| \leq \ell}\left| B_{u_kh}(s,r) -  \bar{B}_{u_kh}(s,r) \right|.
  \end{split}
\end{equation}

We first show that $\inf_{\| w \|=1}w^{\top}\bar{B}_{u_kh}w$ is lower bounded by a positive constant. Recalling $\Omega$ is the support of $\mu(x)$, we define $\Psi_{u_k}\triangleq \{ t\in \mathbb{R}^d: \| t \| \leq \Delta, u_k+ht \in \Omega \}$. On account of Assumption~\ref{ass:kernel} and Definition~\ref{def:strong_density}, we have $K(t) \geq K_{\min}$ and $\mu(u_k+ht) \geq \mu_{\min}$ when $t \in \Psi_{u_k}$, respectively, and
\begin{equation*}
  \lambda(\Psi_{u_k}) = h^{-d}\lambda\left[ B(u_k,\Delta h) \cap \Omega \right] \geq c_{\mu} h^{-d}  \lambda\left[ B(u_k,\Delta h) \right] = c_{\mu}v_d \Delta^d>0.
\end{equation*}
Thus, we see
\begin{equation*}
  \begin{split}
     w^{\top}\bar{B}_{u_kh}w & =  \int_{\mathbb{R}^d}\left( \sum_{0 \leq |s| \leq \ell}w_s\frac{t^s}{s!} \right)^2K(t)\mu(u_k+ht)dt \geq  K_{\min} \mu_{\min} \int_{\Psi_{u_k}}\left( \sum_{0 \leq |s| \leq \ell}w_s\frac{t^s}{s!} \right)^2dt.
  \end{split}
\end{equation*}
Now we argue that
\begin{equation*}
  \inf_{\| w \|=1} \int_{\Psi_{u_k}}\left( \sum_{0 \leq |s| \leq \ell}w_s\frac{t^s}{s!} \right)^2dt
\end{equation*}
must be positive. Suppose its minimum is attained at $w^*$. Then the relation
\begin{equation}\label{eq:zhuzhu2}
   \sum_{0 \leq |s| \leq \ell}w_s^*\frac{t^s}{s!} = 0
\end{equation}
holds for almost all $t \in \Psi_{u_k}$. As observed above, $\Psi_{u_k}$ is a set with positive measure. On the other hand, the expression in (\ref{eq:zhuzhu2}) is a polynomial in $t$. Thus, it is impossible there are infinite number of roots for a polynomial, which results in a contradiction. Thus, there must exist a constant $c_2>0$ such that
\begin{equation}\label{eq:zhuzhu5}
  \inf_{\| w \|=1}w^{\top}\bar{B}_{u_kh}w \geq 2c_2
\end{equation}
for all $k=1,\ldots,M^d$.

In view of (\ref{eq:zhuzhu4}), to establish the event $\mathcal{B}_{u_kh}$, we need to control the term $| B_{u_kh}(s,r) -  \bar{B}_{u_kh}(s,r) |$ for $0 \leq |s|,|r| \leq \ell$. Combining (\ref{eq:zhuzhu3}) with (\ref{eq:zhuzhu1}), we see
\begin{equation}\label{eq:xiao1}
\begin{split}
     & B_{u_kh}(s,r)-\bar{B}_{u_kh}(s,r) \\
     & = \frac{1}{nh^d} \sum_{i=1}^{n} \frac{1}{s!r!} \left( \frac{X_i - u_k}{h} \right)^{s+r}K\left( \frac{X_i-u_k}{h}\right) - \int_{\mathbb{R}^d} \frac{t^{s}t^{r}}{s!r!}K(t)\mu(u_k+ht)dt\\
     & =  \sum_{i=1}^{n}T_i,
\end{split}
\end{equation}
where
\begin{equation*}
  T_i \triangleq \frac{1}{nh^d}\frac{1}{s!r!}\left( \frac{X_i - u_k}{h} \right)^{s+r}K\left( \frac{X_i-u_k}{h}\right) -  \frac{1}{n}\int_{\mathbb{R}^d} \frac{t^{s}t^{r}}{s!r!}K(t)\mu(u_k+ht)dt.
\end{equation*}
Note that $\mathbb{E}T_i = 0$, and
\begin{equation*}
\begin{split}
|T_i|& \leq 2\left|\frac{1}{nh^d}\frac{1}{s!r!} \left( \frac{X_i - u_k}{h} \right)^{s+r}K\left( \frac{X_i-u_k}{h}\right) \right|\\
     & \leq \frac{2 K_{\max}}{nh^d} \left|t^{s+r}\right|  = \frac{2K_{\max}}{nh^d} \left|t_1\right|^{s_1+r_1}\cdots \left|t_d\right|^{s_d+r_d}\\
     & \leq \frac{2K_{\max}}{nh^d}\triangleq\frac{\kappa_3}{nh^d},
\end{split}
\end{equation*}
where $t$ is a $d$-dimensional vector satisfying $\| t\| \leq 1$ due to the support of $K$. The variance of $T_i$ is upper bounded by
\begin{equation*}
  \begin{split}
     \Var(T_i) & \leq \frac{1}{n^2h^{2d}}\left(\frac{1}{s!r!}\right)^2\mathbb{E}\left( \frac{X_i - u_k}{h} \right)^{2s+2r}K^2\left( \frac{X_i-u_k}{h}\right) \\
       & \leq \frac{1}{n^2 h^d} \int_{\mathbb{R}^d}t^{2s+2r}K^2(t)\mu(u_k+h t)dt\\
       & \leq \frac{K_{\max}^2 \mu_{\max}}{n^2 h^d}  \int_{B(0,1)}t^{2s+2r} dt \leq \frac{K_{\max}^2 \mu_{\max} v_d}{n^2 h^d } \triangleq  \frac{\kappa_4}{n^2 h^d },
  \end{split}
\end{equation*}
where the second inequality follows from that the kernel function is supported on $B(0,1)$, and the last inequality is due to $|t^{2s+2r}| \leq 1$ when $t \in B(0,1)$. Using (\ref{eq:xiao1}) and the Bernstein inequality, we have for any $\varepsilon>0$,
\begin{equation}\label{eq:xiao3}
  \begin{split}
&\mathbb{P}_{\otimes^ n}\left( \left| B_{u_kh}(s,r)-\bar{B}_{u_kh}(s,r) \right| \geq \varepsilon \right)  = \mathbb{P}_{\otimes^ n}\left( \left| \sum_{i=1}^{n}T_i \right| \geq \varepsilon \right) \leq 2 \exp\left( - \frac{\frac{1}{2}\varepsilon^2}{n\mathbb{E}T_i^2+\frac{1}{3}\frac{\kappa_3}{nh^d} \varepsilon} \right)\\
        &\leq 2 \exp\left( - \frac{\frac{1}{2}\varepsilon^2}{ \frac{n\kappa_4}{n^2 h^d }+\frac{1}{3}\frac{\kappa_3}{nh^d} \varepsilon} \right) = 2 \exp\left( - \frac{\varepsilon^2}{\frac{1}{nh^d}(2\kappa_4+2\kappa_3\varepsilon/3)} \right).
  \end{split}
\end{equation}

We now define an event
\begin{equation}\label{eq:event2}
  \mathcal{B}_{u_kh}' \triangleq \bigcap_{0 \leq |s|,|r| \leq \ell} \left \{ \left| B_{u_kh}(s,r)-\bar{B}_{u_kh}(s,r) \right| \leq \frac{c_2}{N_{\ell,d}^2} \right\},
\end{equation}
where $c_2$ is the constant given in (\ref{eq:zhuzhu5}). From (\ref{eq:xiao3}), we obtain
\begin{equation}\label{eq:xiao4}
  \begin{split}
     \mathbb{P}_{\otimes^ n}\left( \mathcal{B}_{u_kh}'^c \right) & \leq \sum_{0 \leq |s|,|r| \leq \ell}\mathbb{P}_{\otimes^ n}\left( \left| B_{u_kh}(s,r)-\bar{B}_{u_kh}(s,r) \right| \geq \frac{c_2}{N_{\ell,d}^2} \right) \\
       & \leq 2N_{\ell,d}^2\exp\left( - C nh^d  \right).
  \end{split}
\end{equation}
Furthermore, when the event $\mathcal{B}_{u_kh}'$ holds, it follows from (\ref{eq:zhuzhu4}) and (\ref{eq:zhuzhu5}) that
\begin{equation}\label{eq:xiaoxiao1}
  \lambda_{\min}(B_{u_kh}) \geq 2c_2 - N_{\ell,d}^2 \frac{c_2}{N_{\ell,d}^2} = c_2 >0,
\end{equation}
which implies $\mathcal{B}_{u_kh}' \subseteq \mathcal{B}_{u_kh}$.

\underline{\textsc{Studying the event} $\mathcal{D}_{u_kh}$.} Recall that the entries of $D_{u_kh}$ are just the diagonal entries of $B_{u_kh}$. Thus, the analysis of the event $\mathcal{D}_{u_kh}$ is similar to that of $\mathcal{B}_{u_kh}$. Define $\bar{D}_{u_kh}= \mathbb{E}(D_{u_kh})$. There exists a $c_3>0$ such that $\inf_{\| w \|=1}w^{\top}\bar{D}_{u_kh}w \geq 2c_3$, and
\begin{equation*}
  \begin{split}
     \lambda_{\min}(D_{u_kh}) & \geq \inf_{\| w \|=1}w^{\top}\bar{D}_{u_kh}w - \sum_{0 \leq |s| \leq \ell}\left| D_{u_kh}(s,s) -  \bar{D}_{u_kh}(s,s) \right| \\
       & \geq c_3 >0
  \end{split}
\end{equation*}
under the event
\begin{equation}\label{eq:event3}
  \mathcal{D}_{u_kh}' \triangleq \bigcap_{0 \leq |s| \leq \ell} \left \{ \left| D_{u_kh}(s,s)-\bar{D}_{u_kh}(s,s) \right| \leq \frac{c_3}{N_{\ell,d}} \right\}.
\end{equation}
In addition, if the event $\mathcal{D}_{u_kh}'$ holds, we also see that
\begin{equation*}
  \begin{split}
     \lambda_{\max}(D_{u_kh}) & \leq \sup_{\| w \|=1}w^{\top}\bar{D}_{u_kh}w + \sum_{0 \leq |s| \leq \ell}\left| D_{u_kh}(s,s) -  \bar{D}_{u_kh}(s,s) \right| \\
       & \leq K_{\max}\mu_{\max}v_d + c_3 \triangleq c_4,
  \end{split}
\end{equation*}
where the second inequality follows from the fact that the largest eigenvalues of the diagonal matrix $\bar{D}_{u_kh}$ equal to its largest entries. Therefore, we obtain $\mathcal{D}_{u_kh}' \subseteq \mathcal{D}_{u_kh}$. As in the proof of (\ref{eq:xiao4}), we have
\begin{equation}\label{eq:xiao44}
  \mathbb{P}_{\otimes^ n}\left( \mathcal{D}_{u_kh}'^c \right) \leq 2N_{\ell,d}\exp\left( - C nh^d  \right).
\end{equation}

\underline{\textsc{Constructing the event} $\mathcal{C}_h$.} Define the event $\mathcal{C}_h$ as
\begin{equation*}
  \mathcal{C}_h \triangleq \bigcap_{k=1}^{M^d}(\mathcal{A}_{u_kh}' \cap \mathcal{B}_{u_kh}' \cap \mathcal{D}_{u_kh}' ).
\end{equation*}
Combining the results from the above three parts, we obtain
\begin{equation*}
  \mathcal{C}_h \subseteq \bigcap_{k=1}^{M^d}(\mathcal{A}_{u_kh} \cap \mathcal{B}_{u_kh} \cap \mathcal{D}_{u_kh}).
\end{equation*}
And from (\ref{eq:luoxin1}), (\ref{eq:xiao4}), and (\ref{eq:xiao44}), we see that when $n$ is large enough,
\begin{equation*}
\begin{split}
    \mathbb{P}_{\otimes^ n}\left(\mathcal{C}_h\right) &\geq 1- \sum_{k=1}^{M^d}\left[ \mathbb{P}_{\otimes^ n}(\mathcal{A}_{u_kh}'^c) + \mathbb{P}_{\otimes^ n}(\mathcal{B}_{u_kh}'^c) + \mathbb{P}_{\otimes^ n}(\mathcal{D}_{u_kh}'^c)\right]  \geq 1 - C_1M^d\exp\left( - C_2 nh^d \right) \\
     & \geq 1 - \exp\left( - C nh^d \right),
\end{split}
\end{equation*}
where the last inequality follows from that $M$ increases to $\infty$ no faster than a polynomial rate, and $nh^d$ increases to $\infty$ no slower than a polynomial rate. Thus, the results in Lemma~\ref{lem:event} are established.

\end{proof}

The following lemma provides an upper bound for the expected maximum of powers of sub-Gaussian variables. Its proof applies Lemma 2 from \cite{peng2023optimality} specifically to sub-Gaussian variables.

\begin{lemma}\label{lem:sub_Gaussian_bound}
  Let $\xi_1,\ldots,\xi_M$ be zero-mean $\sigma$-sub-Gaussian random variables. Then, we have
  \begin{equation*}
    \mathbb{E} \left( \max_{1 \leq m \leq M} \left| \xi_m \right|^q \right) \leq C \left( \log M \right)^{\frac{q}{2}},
  \end{equation*}
  where $C>0$ is an absolute constant that depends on $\sigma$ and $q$.
\end{lemma}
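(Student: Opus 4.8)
The plan is to run a standard tail-integration argument, which is exactly the specialization of Lemma~2 of \cite{peng2023optimality} to the sub-Gaussian tail. First I would record the one-sided (and hence two-sided) sub-Gaussian deviation bound: since each $\xi_m$ has mean zero and lies in $\mathcal{E}(\sigma)$, applying Markov's inequality to $\exp(\pm t\xi_m)$ and optimizing over $t>0$ gives $\mathbb{P}(|\xi_m| \geq s) \leq 2\exp(-s^2/(2\sigma^2))$ for all $s \geq 0$. A union bound over $m=1,\ldots,M$ then yields
\[
  \mathbb{P}\left( \max_{1 \leq m \leq M}|\xi_m| \geq s \right) \leq 2M\exp\left( -\frac{s^2}{2\sigma^2} \right), \qquad \forall\, s \geq 0.
\]
I would also note that $\max_m |\xi_m|^q = \bigl(\max_m |\xi_m|\bigr)^q$ since $t \mapsto t^q$ is nondecreasing on $[0,\infty)$, so it suffices to control the $q$-th moment of the nonnegative variable $Z \triangleq \max_m|\xi_m|$.

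Next I would write $\mathbb{E}(Z^q) = \int_0^\infty q s^{q-1}\,\mathbb{P}(Z \geq s)\,ds$ and split the integral at the threshold $s_0 \triangleq 2\sigma\sqrt{\log(2M)}$, chosen so that $2M\exp(-s_0^2/(2\sigma^2)) \leq 1$ and, more usefully, $2M\exp(-s^2/(2\sigma^2)) \leq \exp(-s^2/(4\sigma^2))$ for all $s \geq s_0$. On $[0,s_0]$ I bound the probability by $1$, obtaining a contribution of at most $\int_0^{s_0}q s^{q-1}\,ds = s_0^q \asymp \sigma^q (\log M)^{q/2}$ (using $\log(2M) \leq 2\log M$ for $M \geq 2$; the case $M=1$ being trivial since $Z$ has a finite $q$-th moment). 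On $[s_0,\infty)$ the union bound gives a contribution of at most $\int_{s_0}^\infty q s^{q-1}\exp(-s^2/(4\sigma^2))\,ds \leq \int_0^\infty q s^{q-1}\exp(-s^2/(4\sigma^2))\,ds$, which is a finite constant depending only on $\sigma$ and $q$. Adding the two pieces yields $\mathbb{E}(Z^q) \leq C(\log M)^{q/2}$ with $C = C(\sigma,q)$, as claimed.

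There is no real obstacle in this argument; the only point meriting a moment's care is the choice of the cutoff $s_0$, which must be large enough (in terms of $M$) that the leftover Gaussian-type integral over $[s_0,\infty)$ is bounded uniformly in $M$ — absorbing the prefactor $2M$ into the exponent by passing from $s^2/(2\sigma^2)$ to $s^2/(4\sigma^2)$ handles this cleanly. In the final write-up I would in fact simply invoke Lemma~2 of \cite{peng2023optimality}, which states the analogous moment bound for a general tail-decay profile, and instantiate it with the sub-Gaussian rate, presenting the tail-integration steps above only as the short justification of that specialization.
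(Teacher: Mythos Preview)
Your argument is correct, but it is not quite the route the paper takes. The paper works through the moment generating function rather than the tail: it records $\mathbb{E}\exp(\lambda|\xi_m|)\leq 2\exp(\lambda^2\sigma^2/2)$, then applies Lemma~2 of \cite{peng2023optimality} in its MGF form to obtain
\[
  \bigl[\mathbb{E}\max_m|\xi_m|^q\bigr]^{1/q}\;\leq\;\frac{\log\bigl[2M\exp(\lambda^2\sigma^2/2)+\exp(q-1)\bigr]}{\lambda}\;\lesssim\;\frac{\log M}{\lambda}+\lambda,
\]
and finishes by choosing $\lambda=\sqrt{\log M}$. So the cited lemma is a Chernoff/log-sum-exp bound with a free exponential parameter, not a tail-decay statement; your description of it as ``the analogous moment bound for a general tail-decay profile'' is a slight misreading. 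Your layer-cake split at $s_0=2\sigma\sqrt{\log(2M)}$ is an equally standard and entirely valid alternative---arguably more self-contained, since it does not rely on the external lemma at all---while the paper's MGF route is a touch more compact once that lemma is in hand. Either is fine here; just be aware that if you write ``invoke Lemma~2 of \cite{peng2023optimality}'' your justification should match the MGF formulation the paper uses, not the tail-integration you sketched.
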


\begin{proof}
  Recalling the definition of $\sigma$-sub-Gaussian variables (see, e.g., Definition~\ref{def:error_moment}), we have $\mathbb{E} \exp(\lambda \xi_m)  \leq \exp(\lambda^2\sigma^2/2)$ for all $\lambda \in \mathbb{R}$. It follows that
  \begin{equation}\label{eq:huiyigun2}
    \mathbb{E} \exp\left(\lambda \left|\xi_m \right| \right) \leq \mathbb{E} \exp\left(\lambda \xi_m\right) + \mathbb{E} \exp\left(-\lambda \xi_m\right) \leq 2\exp\left(\frac{\lambda^2\sigma^2}{2}\right).
  \end{equation}
  Using similar reasoning to that in Lemma~2 of \cite{peng2023optimality}, we obtain
  \begin{equation}\label{eq:huiyigun1}
    \begin{split}
       \left[\mathbb{E} \left( \max_{1 \leq m \leq M} \left| \xi_m \right|^q \right) \right]^{\frac{1}{q}} & \leq  \frac{\log\left[ 2M\exp\left(\lambda^2\sigma^2/2\right) +\exp\left( q - 1 \right) \right]}{\lambda} \\
         & \leq C \left( \frac{\log M}{\lambda} + \lambda \right).
    \end{split}
  \end{equation}
  Optimizing the right side of (\ref{eq:huiyigun1}) yields $\lambda^* = \sqrt{\log M}$. Thus, the upper bound (\ref{eq:huiyigun2}) holds.
\end{proof}

\subsection{Proof of the upper bound for $1 \leq q < \infty$}\label{sec:proof:buxiang2}

We are now in a position to prove Theorem~\ref{theo:upper}. Note that the adversarial risk (\ref{eq:adv_risk}) can be written as
\begin{equation}\label{eq:0}
  R_{A,q}(\hat{f}_{\mathrm{PP}}, f) = \mathbb{E}\sup_{X' \in A(X)}\left| \hat{f}_{\mathrm{PP}}(X') - f(X) \right|^q = \mathbb{E}_X \mathbb{E}_{\otimes^n}\sup_{X' \in A(X)}\left| \hat{f}_{\mathrm{PP}}(X') - f(X) \right|^q.
\end{equation}
Given that the marginal distribution $\mathbb{P}_X$ satisfies the bounded density assumption as outlined in Definition~\ref{def:strong_density}, constructing upper bounds for (\ref{eq:0}) can be reduced to establishing upper bounds for the pointwise risk. For a given $x \in \Omega$, we have
\begin{equation}\label{eq:1}
\begin{split}
\mathbb{E}_{\otimes^n}\sup_{x' \in A(x)}\left| \hat{f}_{\mathrm{PP}}(x') - f(x) \right|^q & \leq 2^{q-1} \mathbb{E}_{\otimes^n}\sup_{x' \in A(x)}\left| \hat{f}_{\mathrm{PP}}(x') - f(x') \right|^q \\
&+ 2^{q-1} \sup_{x' \in A(x)}\left| f(x') - f(x) \right|^q.
\end{split}
\end{equation}
According to Lemma~\ref{lem:lip}, the second term on the right-hand side of (\ref{eq:1}) is upper bounded by
\begin{equation}\label{eq:2}
  2^{q-1}\sup_{x' \in A(x)}\left| f(x') - f(x) \right|^q \lesssim  r^{q(1 \wedge \beta)}.
\end{equation}
Our primary objective in the subsequent analysis is to upper bound $\mathbb{E}_{\otimes^n}\sup_{x' \in A(x)}| \hat{f}_{\mathrm{PP}}(x') - f(x') |^q$ uniformly for each $x \in \Omega$.

Recall that the magnitude of the adversarial attack is characterized by
$$
r = \sup_{x \in [0,1]^d,x' \in A(x)}\| x' - x \|.
$$
Based on the construction of the adversarial sets in Definition~\ref{def:adversarial}, for any $x \in \Omega$, there exists an index set $I_x \in \{1,\ldots,M^d \}$ such that
\begin{equation}\label{eq:rela1}
  A(x) \subseteq \Omega \cap B(x, r) \subseteq \Omega \cap B_{\infty}(x, r) \subseteq \cup_{k \in I_x}\Omega_k
\end{equation}
with $\mathrm{Card}(I_x) \leq ( 2rM +2)^d$. Recall also that for $x' \in \Omega_k$, we write $u_{x'} = u_k$ and $\hat{f}_{\mathrm{PP}}(x') = \tilde{f}_{u_kh}(x')$. Combining this with the relation in (\ref{eq:rela1}), we can derive the following upper bound:
\begin{equation}\label{eq:rela2}
\begin{split}
   &\mathbb{E}_{\otimes^n}\sup_{x' \in A(x)}\left| \hat{f}_{\mathrm{PP}}(x') - f(x') \right|^q \leq \mathbb{E}_{\otimes^n}\sup_{k \in I_x}\sup_{x' \in \Omega_k}\left| \hat{f}_{\mathrm{PP}}(x') - f(x') \right|^q \\
     & \leq 2^{q-1} \mathbb{E}_{\otimes^n}\sup_{k \in I_x}\sup_{x' \in \Omega_k}\left| \tilde{f}_{u_kh}(x') - f_{u_k}(x') \right|^q + 2^{q-1} \sup_{k \in I_x}\sup_{x' \in \Omega_k} \left| f_{u_k}(x') - f(x') \right|^q\\
     & \lesssim \mathbb{E}_{\otimes^n}\sup_{k \in I_x}\sup_{x' \in \Omega_k}\left| \tilde{f}_{u_kh}(x') - f_{u_k}(x') \right|^q + \frac{1}{M^{q\beta}},
\end{split}
\end{equation}
where the last inequality follows from the results in Lemma~\ref{lem:taylor}.

We now proceed to upper bound the term $\mathbb{E}_{\otimes^n}\sup_{k \in I_x}\sup_{x' \in \Omega_k}| \tilde{f}_{u_kh}(x') - f_{u_k}(x') |^q$ in the last line of (\ref{eq:rela2}). Note that this term can be decomposed as
\begin{subequations}
\begin{align}
&\mathbb{E}_{\otimes^n}\sup_{k \in I_x}\sup_{x' \in \Omega_k}\left| \tilde{f}_{u_kh}(x') - f_{u_k}(x') \right|^q \nonumber \\
&= \mathbb{E}_{\otimes^n}\sup_{k \in I_x}\sup_{x' \in \Omega_k}\left| \tilde{f}_{u_kh}(x') - f_{u_k}(x') \right|^q1_{\{ \mathcal{C}_h \}} \label{eqn:line-1} \\
&+ \mathbb{E}_{\otimes^n}\sup_{k \in I_x}\sup_{x' \in \Omega_k}\left| \tilde{f}_{u_kh}(x') - f_{u_k}(x') \right|^q1_{\{ \mathcal{C}_h^c  \}}, \label{eqn:line-2}
\end{align}
\end{subequations}
where the event $\mathcal{C}_h$ is defined in Lemma~\ref{lem:event}.

\subsubsection{Upper bounding~(\ref{eqn:line-1})}

Define the vector $c_{u_kh}$ in $\mathbb{R}^{N_{\ell,d}}$:
\begin{equation}\label{eq:chule1}
  c_{u_kh} = \left( \tilde{D}^sf(u_k)h^{|s|} \right)_{0 \leq |s| \leq \ell},
\end{equation}
which equals to the vector $c_{uh}$ defined in (\ref{eq:ljinv2}) with $u = u_k$ and $\ell \geq \lfloor \beta \rfloor$. In addition, define
\begin{equation}\label{eq:d}
  \begin{split}
     d_{u_kh} \triangleq B_{u_kh} c_{u_kh} & = \frac{1}{nh^d}\sum_{i=1}^{n} U\left( \frac{X_i - u_k}{h} \right) U^{\top}\left( \frac{X_i - u_k}{h} \right)c_{u_kh} K\left( \frac{X_i-u_k}{h} \right)\\
       & = \frac{1}{nh^d} \sum_{i=1}^{n} f_{u_k}(X_i) U\left( \frac{X_i - u_k}{h} \right) K\left( \frac{X_i-u_k}{h} \right),
  \end{split}
\end{equation}
where the second equality follows from the definition of $f_{u_k}$. Since $\mathcal{C}_h \subseteq \mathcal{A}_{u_kh} \cap \mathcal{B}_{u_kh} \cap \mathcal{D}_{u_kh}$, when the event $\mathcal{C}_h$ holds, we have $\tilde{f}_{u_kh}(x') = \hat{f}_{u_kh}(x')$, and the matrix $D_{u_kh}$ is invertible since $\lambda_{\min}(D_{u_kh})\geq c_3$. Thus, when $\mathcal{C}_h$ holds, for $1 \leq k \leq M^d$, we have
\begin{equation}\label{eq:4}
  \begin{split}
     \tilde{f}_{u_kh}(x') - f_{u_k}(x') & = \hat{f}_{u_kh}(x') - f_{u_k}(x')\\
      &= U^{\top}\left(\frac{x' - u_k}{h}\right)B_{u_kh}^{-1}a_{u_kh} - U^{\top}\left(\frac{x' - u_k}{h}\right)c_{u_kh}\\
       & = U^{\top}\left(\frac{x' - u_k}{h}\right)B_{u_kh}^{-1} \left( a_{u_kh} - d_{u_kh}\right)\\
       & = U^{\top}\left(\frac{x' - u_k}{h}\right)D_{u_kh}^{-\frac{1}{2}} (D_{u_kh}^{-\frac{1}{2}}B_{u_kh}D_{u_kh}^{-\frac{1}{2}})^{-1} D_{u_kh}^{-\frac{1}{2}}\left( a_{u_kh} - d_{u_kh}\right),
  \end{split}
\end{equation}
where the second equality follows from the definitions of $\hat{f}_{u_kh}(x')$ and $f_{u_k}(x')$, and the third equality follows from the definition of $d_{u_kh}$. Define $\Delta_{u_kh}(s)$ as the $s$-th element of the vector $\Delta_{u_kh}\triangleq D_{u_kh}^{-1/2} (D_{u_kh}^{-1/2}B_{u_kh}D_{u_kh}^{-1/2})^{-1} D_{u_kh}^{-1/2}\left( a_{u_kh} - d_{u_kh}\right)$. Therefore, when the event $\mathcal{C}_h$ holds, we have
\begin{equation}\label{eq:tus2}
  \begin{split}
     \sup_{k \in I_x}\sup_{x' \in \Omega_k}\left| \hat{f}_{u_kh}(x') - f_{u_k}(x') \right| & = \sup_{k \in I_x}\sup_{x' \in \Omega_k}\left|  \sum_{0 \leq |s| \leq \ell}\frac{1}{s!}\left( \frac{x' - u_k}{h} \right)^s \Delta_{u_kh}(s) \right|\\
       & = \sup_{k \in I_x}\sup_{x' \in \Omega_k}\left|  \sum_{0 \leq |s| \leq \ell}\frac{1}{s!h^{|s|}}\left( x' - u_k \right)^s \Delta_{u_kh}(s) \right|\\
       & \leq \sup_{k \in I_x}\sup_{x' \in \Omega_k}  \sum_{0 \leq |s| \leq \ell}\frac{1}{s!h^{|s|}}\left|\left( x' - u_k \right)^s\right| \left| \Delta_{u_kh}(s) \right|\\
       & \leq \sup_{k \in I_x}\sum_{0 \leq |s| \leq \ell} \frac{1}{s!(2M)^{|s|}h^{|s|}}  \left| \Delta_{u_kh}(s) \right|,
  \end{split}
\end{equation}
where the last inequality follows from $x' \in \Omega_k$ and the fact that $\left| \Delta_{u_kh}(s) \right|$ does not depend on $x'$.

Our following goal is to upper bound $\sup_{k \in I_x}| \Delta_{u_kh}(s) |$. In view of (\ref{eq:d}), the $s$-th component of the vector $ a_{u_kh} - d_{u_kh}$ can be expressed as
\begin{equation}\label{eq:k1}
\begin{split}
    \left( a_{u_kh} - d_{u_kh}\right)_s & = \frac{1}{nh^d} \sum_{i=1}^{n} \left[Y_i - f_{u_k}(X_i)\right] \frac{1}{s!}\left( \frac{X_i - u_k}{h} \right)^s K\left( \frac{X_i-u_k}{h} \right)\\
     & = \frac{1}{nh^d}\sum_{i=1}^{n} \left[f(X_i) - f_{u_k}(X_i)\right] \frac{1}{s!}\left( \frac{X_i - u_k}{h} \right)^s K\left( \frac{X_i-u_k}{h} \right)\\
     &\quad+ \frac{1}{nh^d}\sum_{i=1}^{n} \xi_i \frac{1}{s!}\left( \frac{X_i - u_k}{h} \right)^s K\left( \frac{X_i-u_k}{h} \right)\\
     & \triangleq b_{u_kh}(s) + v_{u_kh}(s),
\end{split}
\end{equation}
which corresponds respectively to the bias and estimation error terms. In matrix notation, this decomposition can also be written as
\begin{equation}\label{eq:part0}
  \begin{split}
     a_{u_kh} - d_{u_kh} & = \frac{1}{nh^d}\sum_{i=1}^{n} \left[f(X_i) - f_{u_k}(X_i)\right] U\left( \frac{X_i - u_k}{h} \right) K\left( \frac{X_i-u_k}{h} \right) \\
       & \quad+ \frac{1}{nh^d}\sum_{i=1}^{n} \xi_i U\left( \frac{X_i - u_k}{h} \right) K\left( \frac{X_i-u_k}{h} \right)\\
       & \triangleq b_{u_kh} + v_{u_kh}.
  \end{split}
\end{equation}
Using the Cauchy–Schwarz inequality, the $s$-th component of the bias term $b_{u_kh}$ can be upper bounded by
\begin{equation}\label{eq:5}
  \begin{split}
b_{u_kh}(s)& = \frac{1}{nh^d} \sum_{i=1}^{n} \left[f(X_i) - f_{u_k}(X_i)\right] \sqrt{K\left( \frac{X_i-u_k}{h} \right)} \frac{1}{s!} \left( \frac{X_i - u_k}{h} \right)^s \sqrt{K\left( \frac{X_i-u_k}{h} \right)} \\
       & \leq \left\{\frac{1}{nh^d}\sum_{i=1}^{n} \left[f(X_i) - f_{u_k}(X_i)\right]^2 K\left( \frac{X_i-u_k}{h} \right)\right\}^{\frac{1}{2}}\\
       &\quad\times\left\{\frac{1}{nh^d}\sum_{i=1}^{n} \frac{1}{(s!)^2} \left( \frac{X_i - u_k}{h} \right)^{2s} K\left( \frac{X_i-u_k}{h} \right)\right\}^{\frac{1}{2}}.
  \end{split}
\end{equation}
Recalling the definition (\ref{eq:D}), we have
\begin{equation*}
  \begin{split}
      \left(D_{u_kh}^{-\frac{1}{2}}b_{u_kh} \right)_s  & \leq  \left\{\frac{1}{nh^d}\sum_{i=1}^{n} \left[f(X_i) - f_{u_k}(X_i)\right]^2 K\left( \frac{X_i-u_k}{h} \right)\right\}^{\frac{1}{2}}\\
       & \lesssim \left\{\frac{1}{nh^d}\sum_{i=1}^{n} \| X_i - u_k \|^{2\beta} K\left( \frac{X_i-u_k}{h} \right)\right\}^{\frac{1}{2}}\\
       & \lesssim \frac{n_{u_kh}^{\frac{1}{2}}}{(nh^d)^{\frac{1}{2}}} h^{\beta} \lesssim h^{\beta}
  \end{split}
\end{equation*}
where the first inequality follows from the definition of $D_{u_kh}$ and (\ref{eq:5}), the second inequality follows from (\ref{eq:poly2}), and the third inequality follows from Assumption~\ref{ass:kernel}, and the last inequality follows from $\mathcal{C}_h\subseteq \mathcal{A}_{u_kh}$. Hence, the vector $(D_{u_kh}^{-1/2}B_{u_kh}D_{u_kh}^{-1/2})^{-1} D_{u_kh}^{-1/2}b_{u_kh}$ can be written as
\begin{equation}\label{eq:part1}
  \left(D_{u_kh}^{-\frac{1}{2}}B_{u_kh}D_{u_kh}^{-\frac{1}{2}}\right)^{-1}  h^{\beta} \alpha_{u_kh},
\end{equation}
where the vector $\alpha_{u_kh} \in \mathbb{R}^{N_{\ell,d}}$ satisfies $\| \alpha_{u_kh} \|_{\infty} \lesssim 1$.

We now present an equivalent expression for the vector $(D_{u_kh}^{-1/2}B_{u_kh}D_{u_kh}^{-1/2})^{-1} D_{u_kh}^{-1/2}v_{u_kh}$. Note that
\begin{equation}\label{eq:huai1}
  \begin{split}
v_{u_kh}& = \frac{1}{nh^d}\left[
        U\left(\frac{X_1 - u_k}{h}\right)\sqrt{K\left( \frac{X_1-u_k}{h} \right)},\cdots , U\left(\frac{X_n - u_k}{h}\right)\sqrt{K\left( \frac{X_n-u_k}{h} \right)}
      \right]\\
      &\quad \times \begin{pmatrix}
\xi_1\sqrt{K\left( \frac{X_1-u_k}{h} \right)} \\
\vdots \\
\xi_n\sqrt{K\left( \frac{X_n-u_k}{h} \right)}
\end{pmatrix} \\
       &  \triangleq \frac{1}{\sqrt{nh^d}} Q_{u_kh} e_{u_kh},
  \end{split}
\end{equation}
where $Q_{u_kh} \in \mathbb{R}^{N_{\ell,d}\times n}$ satisfies $Q_{u_kh}Q_{u_kh}^{\top}=B_{u_kh}$, and the elements of $e_{u_kh} \in \mathbb{R}^n$ are independent $(K_{\max}\sigma)$-sub-Gaussian random variables conditioned on $\mathfrak{X}_n$. Define the random vector $\gamma_{u_kh} \triangleq (D_{u_kh}^{-1/2}B_{u_kh}D_{u_kh}^{-1/2})^{-1/2} D_{u_kh}^{-1/2}Q_{u_kh} e_{u_kh}$. Since the property of sub-Gaussianity is preserved by linear operations, and
\begin{equation*}
\begin{split}
     &   \left(D_{u_kh}^{-\frac{1}{2}}B_{u_kh}D_{u_kh}^{-\frac{1}{2}}\right)^{-\frac{1}{2}} D_{u_kh}^{-\frac{1}{2}} Q_{u_kh}  Q_{u_kh}^{\top} D_{u_kh}^{-\frac{1}{2}} \left(D_{u_kh}^{-\frac{1}{2}}B_{u_kh}D_{u_kh}^{-\frac{1}{2}}\right)^{-\frac{1}{2}} = I_{N_{\ell,d}},
\end{split}
\end{equation*}
we see that the elements of $\gamma_{u_kh}$ are $(K_{\max}\sigma)$-sub-Gaussian random variables. Thus, we obtain
\begin{equation}\label{eq:part3}
  \left(D_{u_kh}^{-\frac{1}{2}}B_{u_kh}D_{u_kh}^{-\frac{1}{2}}\right)^{-1} D_{u_kh}^{-\frac{1}{2}}v_{u_kh} = \frac{1}{\sqrt{nh^d}} \left(D_{u_kh}^{-\frac{1}{2}}B_{u_kh}D_{u_kh}^{-\frac{1}{2}}\right)^{-\frac{1}{2}}\gamma_{u_kh}.
\end{equation}
Combining (\ref{eq:part0}) with (\ref{eq:part1}) and (\ref{eq:part3}), we have
\begin{equation}\label{eq:important1}
\begin{split}
     & \left(D_{u_kh}^{-\frac{1}{2}}B_{u_kh}D_{u_kh}^{-\frac{1}{2}}\right)^{-1} D_{u_kh}^{-\frac{1}{2}}\left( a_{u_kh} - d_{u_kh}\right) \\
     & = \left(D_{u_kh}^{-\frac{1}{2}}B_{u_kh}D_{u_kh}^{-\frac{1}{2}}\right)^{-1} D_{u_kh}^{-\frac{1}{2}}\left( b_{u_kh} + v_{u_kh}\right)\\
     & \lesssim \left(D_{u_kh}^{-\frac{1}{2}}B_{u_kh}D_{u_kh}^{-\frac{1}{2}}\right)^{-1}  h^{\beta} \alpha_{u_kh} + \frac{1}{\sqrt{nh^d}} \left(D_{u_kh}^{-\frac{1}{2}}B_{u_kh}D_{u_kh}^{-\frac{1}{2}}\right)^{-\frac{1}{2}}\gamma_{u_kh},
\end{split}
\end{equation}
where we emphasize again that $\| \alpha_{u_kh} \|_{\infty} \lesssim 1$, and the elements of $\gamma_{u_kh}$ are centered $(K_{\max}\sigma)$-sub-Gaussian random variables conditioned on $\mathfrak{X}_n$.

From (\ref{eq:important1}), we see that each element of $(D_{u_kh}^{-1/2}B_{u_kh}D_{u_kh}^{-1/2})^{-1} D_{u_kh}^{-1/2}( a_{u_kh} - d_{u_kh})$ can be upper bounded by
\begin{equation}\label{eq:important2}
  \begin{split}
     & \left\| \left(D_{u_kh}^{-\frac{1}{2}}B_{u_kh}D_{u_kh}^{-\frac{1}{2}}\right)^{-1} D_{u_kh}^{-\frac{1}{2}}\left( a_{u_kh} - d_{u_kh}\right) \right\|_{\infty} \\
       & \lesssim \left\| \left(D_{u_kh}^{-\frac{1}{2}}B_{u_kh}D_{u_kh}^{-\frac{1}{2}}\right)^{-1} h^{\beta} \alpha_{u_kh}\right\|_{\infty} + \frac{1}{\sqrt{nh^d}} \left\|\left(D_{u_kh}^{-\frac{1}{2}}B_{u_kh}D_{u_kh}^{-\frac{1}{2}}\right)^{-\frac{1}{2}}\gamma_{u_kh}\right\|_{\infty}\\
       & \leq \left\| \left(D_{u_kh}^{-\frac{1}{2}}B_{u_kh}D_{u_kh}^{-\frac{1}{2}}\right)^{-1} h^{\beta} \alpha_{u_kh}\right\| + \frac{1}{\sqrt{nh^d}} \left\|\left(D_{u_kh}^{-\frac{1}{2}}B_{u_kh}D_{u_kh}^{-\frac{1}{2}}\right)^{-\frac{1}{2}}\gamma_{u_kh}\right\|\\
       & \leq \lambda_{\max}\left[ \left(D_{u_kh}^{-\frac{1}{2}}B_{u_kh}D_{u_kh}^{-\frac{1}{2}}\right)^{-1}  \right]  h^{\beta}\left\| \alpha_{u_kh}\right\|\\
        &\quad+ \frac{1}{\sqrt{nh^d}}\lambda_{\max}\left[ \left(D_{u_kh}^{-\frac{1}{2}}B_{u_kh}D_{u_kh}^{-\frac{1}{2}}\right)^{-\frac{1}{2}}  \right] \left\| \gamma_{u_kh}\right\|\\
       & \leq \lambda_{\min}^{-1} \left(D_{u_kh}^{-\frac{1}{2}}B_{u_kh}D_{u_kh}^{-\frac{1}{2}}\right) h^{\beta}\sqrt{N_{\ell,d}}\| \alpha_{u_kh} \|_{\infty}\\
        &\quad+ \frac{1}{\sqrt{nh^d}}\sqrt{N_{\ell,d}}\lambda_{\max}\left[ \left(D_{u_kh}^{-\frac{1}{2}}B_{u_kh}D_{u_kh}^{-\frac{1}{2}}\right)^{-1}  \right] \sqrt{N_{\ell,d}}\left\| \gamma_{u_kh}\right\|_{\infty}\\
       & \lesssim \lambda_{\min}^{-1} \left(D_{u_kh}^{-\frac{1}{2}}B_{u_kh}D_{u_kh}^{-\frac{1}{2}}\right) \left( h^{\beta} + \frac{1}{\sqrt{nh^d}}\left\| \gamma_{u_kh}\right\|_{\infty} \right),
  \end{split}
\end{equation}
where the second inequality follows from $\| \cdot \|_{\infty} \leq \| \cdot \|$, and the forth step follows from $ \| a \| \leq \sqrt{N_{\ell,d}}\| a \|_{\infty}$ for any vector $a \in \mathbb{R}^{N_{\ell,d}}$. The second term in the forth step of (\ref{eq:important2}) follows from
\begin{equation*}
  \begin{split}
     \lambda_{\max}\left[ \left(D_{u_kh}^{-\frac{1}{2}}B_{u_kh}D_{u_kh}^{-\frac{1}{2}}\right)^{-\frac{1}{2}}  \right] & = \lambda_{\max}\left[ \left(D_{u_kh}^{-\frac{1}{2}}B_{u_kh}D_{u_kh}^{-\frac{1}{2}}\right)^{-1}  \left(D_{u_kh}^{-\frac{1}{2}}B_{u_kh}D_{u_kh}^{-\frac{1}{2}}\right)^{\frac{1}{2}} \right] \\
       & \leq \lambda_{\max}\left[ \left(D_{u_kh}^{-\frac{1}{2}}B_{u_kh}D_{u_kh}^{-\frac{1}{2}}\right)^{-1}  \right] \lambda_{\max}\left[ \left(D_{u_kh}^{-\frac{1}{2}}B_{u_kh}D_{u_kh}^{-\frac{1}{2}}\right)^{\frac{1}{2}}  \right]\\
       & \leq \sqrt{N_{\ell,d}}\lambda_{\max}\left[ \left(D_{u_kh}^{-\frac{1}{2}}B_{u_kh}D_{u_kh}^{-\frac{1}{2}}\right)^{-1}  \right],
  \end{split}
\end{equation*}
where the last inequality is due to each entries of the matrix $D_{u_kh}^{-1/2}B_{u_kh}D_{u_kh}^{-1/2}$ is smaller than $1$ in absolute value by the Cauchy–Schwarz inequality. Since
\begin{equation*}
\begin{split}
     \lambda_{\min} \left(D_{u_kh}^{-\frac{1}{2}}B_{u_kh}D_{u_kh}^{-\frac{1}{2}}\right) &\geq \lambda_{\min} ^2 \left(D_{u_kh}^{-\frac{1}{2}} \right) \lambda_{\min}\left( B_{u_kh} \right) \\
     & = \lambda_{\max}^{-1}\left( D_{u_kh} \right) \lambda_{\min}\left( B_{u_kh} \right) \geq C
\end{split}
\end{equation*}
and $\lambda_{\min}(D_{u_kh}) \geq c_3$ under the event $\mathcal{C}_h$, we thus have
\begin{equation*}
  \left\| \left(D_{u_kh}^{-\frac{1}{2}}B_{u_kh}D_{u_kh}^{-\frac{1}{2}}\right)^{-1} D_{u_kh}^{-\frac{1}{2}}\left( a_{u_kh} - d_{u_kh}\right) \right\|_{\infty} \lesssim h^{\beta} + \frac{1}{\sqrt{nh^d}}\left\| \gamma_{u_kh}\right\|_{\infty},
\end{equation*}
and
\begin{equation}\label{eq:tus1}
  \left\| \Delta_{u_kh}\right\|_{\infty}= \left\|D_{u_kh}^{-\frac{1}{2}} \left(D_{u_kh}^{-\frac{1}{2}}B_{u_kh}D_{u_kh}^{-\frac{1}{2}}\right)^{-1} D_{u_kh}^{-\frac{1}{2}}\left( a_{u_kh} - d_{u_kh}\right) \right\|_{\infty} \lesssim h^{\beta} + \frac{1}{\sqrt{nh^d}}\left\| \gamma_{u_kh}\right\|_{\infty}.
\end{equation}

Now define a random vector $\gamma_{nx} = (\gamma_{u_kh}^{\top})^{\top}_{k \in I_{x}}$. Then combining (\ref{eq:tus2}) with (\ref{eq:tus1}), we have
\begin{equation*}
\begin{split}
    \sup_{k \in I_x}\sup_{x' \in \Omega_k}\left| \hat{f}_{u_kh}(x') - f_{u_k}(x') \right| & \lesssim \sum_{0 \leq |s| \leq \ell} \frac{1}{(M h)^{|s|}} \left(h^{\beta} + \frac{1}{\sqrt{nh^d}}\left\| \gamma_{nx}\right\|_{\infty} \right) \\
     &  \lesssim \left(h^{\beta} + \frac{1}{\sqrt{nh^d}}\left\| \gamma_{nx}\right\|_{\infty} \right),
\end{split}
\end{equation*}
where the second inequality follows from the condition $M h \gtrsim 1$. Thus, we have
\begin{equation*}
  \mathbb{E}_{\otimes^n}\sup_{k \in I_x}\sup_{x' \in \Omega_k}\left| \tilde{f}_{u_kh}(x') - f_{u_k}(x') \right|^q1_{\{ \mathcal{C}_h \}} \lesssim h^{q\beta} + \frac{\mathbb{E}_{\otimes^n}\left(\left\| \gamma_{nx}\right\|_{\infty}^q\right)}{(nh^d)^{q/2}}.
\end{equation*}
Note that
\begin{equation*}
\begin{split}
   \mathbb{E}_{\otimes^n}\left(\left\| \gamma_{nx}\right\|_{\infty}^q\right) & = \mathbb{E}_{\otimes^n} \left[ \mathbb{E}_{\otimes^n} \left( \left\| \gamma_{nx}\right\|_{\infty}^q | \mathfrak{X}_n \right)\right]\\
     & = \mathbb{E}_{\otimes^n} \left\{ \mathbb{E}_{\otimes^n} \left[ \left(\max_{1 \leq i \leq N_{\ell,d}\mathrm{Card}(I_x)}\left| W_i \right|\right) ^q | \mathfrak{X}_n \right]\right\},
\end{split}
\end{equation*}
where $W_i$ are the centered $(K_{\max}\sigma)$-sub-Gaussian random variables conditioned on $\mathfrak{X}_n$. Then, based on Lemma~\ref{lem:sub_Gaussian_bound}, we know that
\begin{equation*}
\begin{split}
   \mathbb{E}_{\otimes^n} \left[ \left(\max_{1 \leq i \leq N_{\ell,d}\mathrm{Card}(I_x)}\left| W_i \right|\right) ^q | \mathfrak{X}_n \right] & \lesssim \left[  \log (N_{\ell,d}\mathrm{Card}(I_x) ) \right]^{\frac{q}{2}} \\
     & \lesssim  \left[\log N_{\ell,d} + d \log (2rM +2)\right]^{\frac{q}{2}},
\end{split}
\end{equation*}
where the last inequality follows from $\mathrm{Card}(I_x) \leq ( 2rM +2)^d$.
Thus, we conclude that
\begin{equation}\label{eq:xiayule3}
  \mathbb{E}_{\otimes^n}\sup_{k \in I_x}\sup_{x' \in \Omega_k}\left| \tilde{f}_{u_kh}(x') - f_{u_k}(x') \right|^q 1_{\{ \mathcal{C}_h \}} \lesssim h^{q\beta} +\left[ \frac{\log (rM +1) }{nh^d}\right]^{q/2}
\end{equation}
under the condition $M h \gtrsim 1$.

\subsubsection{Upper bounding~(\ref{eqn:line-2})}\label{sec:exinnv11}

Since $f \in \mathcal{F}(\beta,C_{\beta})$, there exists a constant $C$ such that $|f_{u_k}(x')| \leq C$. Thus,
\begin{equation}\label{eq:gift1}
  \begin{split}
     \mathbb{E}_{\otimes^n}\sup_{k \in I_x}\sup_{x' \in \Omega_k}\left| \tilde{f}_{u_kh}(x') - f_{u_k}(x') \right|^q1_{\{ \mathcal{C}_h^c  \}} & \lesssim \mathbb{E}_{\otimes^n}\left[\sup_{k \in I_x}\sup_{x' \in \Omega_k}\left| \tilde{f}_{u_kh}(x') \right|\right]^q1_{\{ \mathcal{C}_h^c \}}\\
      &+ C\mathbb{P}_{\otimes^n}\left( \mathcal{C}_h^c \right). \\
  \end{split}
\end{equation}
The second term, $C\mathbb{P}_{\otimes^n}( \mathcal{C}_h^c )$, can be upper bounded by an exponentially vanishing probability as given in Lemma~\ref{lem:event}. Our primary goal now is to bound the first term in (\ref{eq:gift1}).

First, if $n_{u_kh}=0$, then from (\ref{eq:lpe_modified}), we have $\tilde{f}_{u_kh} = 0$, implying
\begin{equation}\label{eq:gift2}
  \sup_{x' \in \Omega_k}\left| \tilde{f}_{u_kh}(x') \right|=0.
\end{equation}
In this case, the first term in (\ref{eq:gift1}) is upper bounded by $0$.

Next, we assume $n_{u_kh}>0$. In this case, the matrixes $\tilde{B}_{u_kh}$ and $\tilde{D}_{u_kh}$, defined in (\ref{eq:B_tilde}) and (\ref{eq:tilde_D}), are invertible, as $\lambda_{\min}(\tilde{B}_{u_kh}) \geq \tau$ and $\lambda_{\min}(\tilde{D}_{u_kh}) \geq \tau$. By (\ref{eq:lpe_modified}), the estimator $\tilde{f}_{u_kh}$ can be expressed as
\begin{equation}\label{eq:pp1}
  \begin{split}
   \tilde{f}_{u_kh}(x')   &= U^{\top}\left(\frac{x' - u_k}{h}\right)\tilde{B}_{u_kh}^{-1}a_{u_kh} \\
       & = U^{\top}\left(\frac{x' - u_k}{h}\right)\tilde{D}_{u_kh}^{-\frac{1}{2}} (\tilde{D}_{u_kh}^{-\frac{1}{2}}\tilde{B}_{u_kh}\tilde{D}_{u_kh}^{-\frac{1}{2}})^{-1} \tilde{D}_{u_kh}^{-\frac{1}{2}} a_{u_kh}\\
       & \triangleq U^{\top}\left(\frac{x' - u_k}{h}\right)\tilde{\theta}_{u_kh},
  \end{split}
\end{equation}
where $\tilde{\theta}_{u_kh}\triangleq \tilde{D}_{u_kh}^{-1/2} (\tilde{D}_{u_kh}^{-1/2}\tilde{B}_{u_kh}\tilde{D}_{u_kh}^{-1/2})^{-1} \tilde{D}_{u_kh}^{-1/2} a_{u_kh} $, and $\tilde{\theta}_{u_kh}(s)$ denotes the $s$-th element of the vector $\tilde{\theta}_{u_kh}$. Thus, we have
\begin{equation}\label{eq:qian1}
  \begin{split}
     \sup_{k \in I_x}\sup_{x' \in \Omega_k}\left| \tilde{f}_{u_kh}(x') \right| & = \sup_{k \in I_x}\sup_{x' \in \Omega_k}\left|  \sum_{0 \leq |s| \leq \ell}\frac{1}{s!}\left( \frac{x' - u_k}{h} \right)^s \tilde{\theta}_{u_kh}(s) \right|\\
       & \leq \sup_{k \in I_x}\sum_{0 \leq |s| \leq \ell} \frac{1}{s!(2M h)^{|s|}}  \left| \tilde{\theta}_{u_kh}(s) \right|\\
       & \lesssim \sup_{k \in I_x}\sup_{0 \leq |s| \leq \ell}\left| \tilde{\theta}_{u_kh}(s) \right| = \sup_{k \in I_x}\left\| \tilde{\theta}_{u_kh} \right\|_{\infty},
  \end{split}
\end{equation}
where the first inequality follows from $x' \in \Omega_k$, and the second inequality follows from the condition $Mh \gtrsim  1$.

Our goal now is to upper bound $\| \tilde{\theta}_{u_kh} \|_{\infty}$. Using the relationship in (\ref{eq:japan1}), we have
\begin{subequations}
\begin{align}
\left\| \tilde{\theta}_{u_kh} \right\|_{\infty}   & = \left\|\tilde{D}_{u_kh}^{-\frac{1}{2}} (\tilde{D}_{u_kh}^{-\frac{1}{2}}\tilde{B}_{u_kh}\tilde{D}_{u_kh}^{-\frac{1}{2}})^{-1} \tilde{D}_{u_kh}^{-\frac{1}{2}} a_{u_kh} \right\|_{\infty}\nonumber \\
       & = \left\|\frac{1}{\sqrt{nh^d}}\tilde{D}_{u_kh}^{-\frac{1}{2}} (\tilde{D}_{u_kh}^{-\frac{1}{2}}\tilde{B}_{u_kh}\tilde{D}_{u_kh}^{-\frac{1}{2}})^{-1} \tilde{D}_{u_kh}^{-\frac{1}{2}} Q_{u_kh} \left( \eta_{u_kh} + e_{u_kh} \right) \right\|_{\infty} \nonumber \\
       & \leq \frac{1}{\sqrt{nh^d}} \left\| \tilde{D}_{u_kh}^{-\frac{1}{2}} (\tilde{D}_{u_kh}^{-\frac{1}{2}}\tilde{B}_{u_kh}\tilde{D}_{u_kh}^{-\frac{1}{2}})^{-1} \tilde{D}_{u_kh}^{-\frac{1}{2}} Q_{u_kh} \eta_{u_kh} \right\|_{\infty} \label{eq:huanle1}\\
       & + \frac{1}{\sqrt{nh^d}} \left\| \tilde{D}_{u_kh}^{-\frac{1}{2}} (\tilde{D}_{u_kh}^{-\frac{1}{2}}\tilde{B}_{u_kh}\tilde{D}_{u_kh}^{-\frac{1}{2}})^{-1} \tilde{D}_{u_kh}^{-\frac{1}{2}} Q_{u_kh} e_{u_kh} \right\|_{\infty}. \label{eq:huanle2}
\end{align}
\end{subequations}

\underline{\textsc{Upper bounding}~(\ref{eq:huanle1}).} The first term (\ref{eq:huanle1}) can be upper bounded by
\begin{equation}\label{eq:xianbai3}
  \begin{split}
  \mathrm{(\ref{eq:huanle1})} & \leq \frac{1}{\sqrt{nh^d}} \left\| \tilde{D}_{u_kh}^{-\frac{1}{2}} (\tilde{D}_{u_kh}^{-\frac{1}{2}}\tilde{B}_{u_kh}\tilde{D}_{u_kh}^{-\frac{1}{2}})^{-1} \tilde{D}_{u_kh}^{-\frac{1}{2}} Q_{u_kh} \eta_{u_kh} \right\| \\
     & \leq \frac{1}{\sqrt{nh^d}} \lambda_{\max}\left[ \tilde{D}_{u_kh}^{-\frac{1}{2}} (\tilde{D}_{u_kh}^{-\frac{1}{2}}\tilde{B}_{u_kh}\tilde{D}_{u_kh}^{-\frac{1}{2}})^{-1}   \right] \left\| \tilde{D}_{u_kh}^{-\frac{1}{2}} Q_{u_kh} \eta_{u_kh} \right\|\\
     & \leq \sqrt{\frac{N_{\ell,d}}{nh^d}} \lambda_{\max}\left( \tilde{B}_{u_kh}^{-1} \tilde{D}_{u_kh}^{\frac{1}{2}} \right) \left\| \tilde{D}_{u_kh}^{-\frac{1}{2}} Q_{u_kh} \eta_{u_kh} \right\|_{\infty},\\
  \end{split}
\end{equation}
where the first inequality uses $\| \cdot \|_{\infty} \leq \| \cdot \| $, and the last inequality follows from $\| \cdot \| \leq \sqrt{N_{\ell,d}} \| \cdot \|_{\infty}$. Recall that the sum of the squares of the $s$-th row of $Q_{u_kh}$ equals to the $ss$-th diagonal element of $D_{u_kh}$. By the definition of $\tilde{D}_{u_kh}$ and the Cauchy–Schwarz inequality, we have
\begin{equation}\label{eq:xianbai1}
  \left\| \tilde{D}_{u_kh}^{-\frac{1}{2}} Q_{u_kh} \eta_{u_kh} \right\|_{\infty} \leq \left\|  \eta_{u_kh}  \right\| = \left[\sum_{i=1}^{n} f^2(X_i)  K\left( \frac{X_i-u_k}{h} \right)\right]^{\frac{1}{2}} \lesssim \sqrt{n},
\end{equation}
where the second inequality holds since $f \in \mathcal{F}(\beta, C_{\beta})$ and $K$ satisfies Assumption~\ref{ass:kernel}. Furthermore, using the fact that $\lambda_{\max}(AB) \leq \lambda_{\max}(A)\lambda_{\max}(B)$ for two symmetric positive definite matrices, we obtain
\begin{equation}\label{eq:xianbai2}
  \begin{split}
     \lambda_{\max}\left( \tilde{B}_{u_kh}^{-1} \tilde{D}_{u_kh}^{\frac{1}{2}} \right) & \leq \lambda_{\max}\left( \tilde{B}_{u_kh}^{-1} \right) \lambda_{\max}\left(\tilde{D}_{u_kh}^{\frac{1}{2}} \right)  = \lambda_{\min}^{-1}\left( \tilde{B}_{u_kh} \right) \lambda_{\max}^{\frac{1}{2}}\left(\tilde{D}_{u_kh} \right) \\
       & \leq \tau^{-1} \left( \frac{K_{\max}}{h^d} + \tau \right)^{\frac{1}{2}},\\
  \end{split}
\end{equation}
where the equality holds because $\tilde{B}_{u_kh}$ is positive definite and $\tilde{D}_{u_kh}$ is a diagonal matrix. The last inequality in (\ref{eq:xianbai2}) follows from the fact that the diagonal elements of $\tilde{D}_{u_kh}$ are no larger than $K_{\max}/h^d + \tau$. Substituting (\ref{eq:xianbai1})--(\ref{eq:xianbai2}) into (\ref{eq:xianbai3}), we obtain
\begin{equation}\label{eq:xianbai4}
  \mathrm{(\ref{eq:huanle1})} \lesssim  \frac{1}{\sqrt{nh^d}}\times\frac{1}{\tau} \left( \frac{K_{\max}}{h^d} + \tau \right)^{\frac{1}{2}}\times\sqrt{n} \lesssim \frac{1}{\tau h^d},
\end{equation}
where the last step follows from $h=O(1)$ and $\tau = O(1)$.

\underline{\textsc{Upper bounding}~(\ref{eq:huanle2}).} Recall that the elements of $e_{u_kh}$ are $n$ independent $(K_{\max}\sigma)$-sub-Gaussian random variables. Define the random vector
\begin{equation}\label{eq:juqi1}
  \tilde{\gamma}_{u_kh} \triangleq \left(\tilde{D}_{u_kh}^{-\frac{1}{2}}\tilde{B}_{u_kh}\tilde{D}_{u_kh}^{-\frac{1}{2}}\right)^{-\frac{1}{2}} \tilde{D}_{u_kh}^{-\frac{1}{2}}Q_{u_kh} e_{u_kh} \in \mathbb{R}^{N_{\ell,d}}.
\end{equation}
Using the relation $B_{u_kh} = Q_{u_kh} Q_{u_kh}^{\top}$, we observe that the covariance matrix of  $\tilde{\gamma}_{u_kh}$ is given by
\begin{equation*}
\begin{split}
     & (\tilde{D}_{u_kh}^{-\frac{1}{2}}\tilde{B}_{u_kh}\tilde{D}_{u_kh}^{-\frac{1}{2}})^{-\frac{1}{2}} \tilde{D}_{u_kh}^{-\frac{1}{2}} B_{u_kh} \tilde{D}_{u_kh}^{-\frac{1}{2}} (\tilde{D}_{u_kh}^{-\frac{1}{2}}\tilde{B}_{u_kh}\tilde{D}_{u_kh}^{-\frac{1}{2}})^{-\frac{1}{2}}\\
     & =  (\tilde{D}_{u_kh}^{-\frac{1}{2}}\tilde{B}_{u_kh}\tilde{D}_{u_kh}^{-\frac{1}{2}})^{-\frac{1}{2}} \tilde{D}_{u_kh}^{-\frac{1}{2}} \tilde{B}_{u_kh} \tilde{D}_{u_kh}^{-\frac{1}{2}} (\tilde{D}_{u_kh}^{-\frac{1}{2}}\tilde{B}_{u_kh}\tilde{D}_{u_kh}^{-\frac{1}{2}})^{-\frac{1}{2}}\\
      &\quad+ (\tilde{D}_{u_kh}^{-\frac{1}{2}}\tilde{B}_{u_kh}\tilde{D}_{u_kh}^{-\frac{1}{2}})^{-\frac{1}{2}} \tilde{D}_{u_kh}^{-\frac{1}{2}} \left(B_{u_kh} - \tilde{B}_{u_kh}\right) \tilde{D}_{u_kh}^{-\frac{1}{2}} (\tilde{D}_{u_kh}^{-\frac{1}{2}}\tilde{B}_{u_kh}\tilde{D}_{u_kh}^{-\frac{1}{2}})^{-\frac{1}{2}}\\
      & = I_{N_{\ell,d}} + (\tilde{D}_{u_kh}^{-\frac{1}{2}}\tilde{B}_{u_kh}\tilde{D}_{u_kh}^{-\frac{1}{2}})^{-\frac{1}{2}} \tilde{D}_{u_kh}^{-\frac{1}{2}} \left(B_{u_kh} - \tilde{B}_{u_kh}\right) \tilde{D}_{u_kh}^{-\frac{1}{2}} (\tilde{D}_{u_kh}^{-\frac{1}{2}}\tilde{B}_{u_kh}\tilde{D}_{u_kh}^{-\frac{1}{2}})^{-\frac{1}{2}}.
\end{split}
\end{equation*}
Since $B_{u_kh} - \tilde{B}_{u_kh}$ is a diagonal matrix with non-positive elements, the diagonal entries of
$$
(\tilde{D}_{u_kh}^{-\frac{1}{2}}\tilde{B}_{u_kh}\tilde{D}_{u_kh}^{-\frac{1}{2}})^{-\frac{1}{2}} \tilde{D}_{u_kh}^{-\frac{1}{2}} \left(B_{u_kh} - \tilde{B}_{u_kh}\right) \tilde{D}_{u_kh}^{-\frac{1}{2}} (\tilde{D}_{u_kh}^{-\frac{1}{2}}\tilde{B}_{u_kh}\tilde{D}_{u_kh}^{-\frac{1}{2}})^{-\frac{1}{2}}
$$
are also non-positive. Consequently, the diagonal entries of the covariance matrix of $\tilde{\gamma}_{u_kh}$ do not exceed 1. Therefore, the random vector $\tilde{\gamma}_{u_kh}$ consists of $N_{\ell,d}$ $(K_{\max}\sigma)$-sub-Gaussian variables.

Based on (\ref{eq:juqi1}), the term (\ref{eq:huanle2}) can be rewritten and upper bounded as follows:
\begin{equation}\label{eq:xianbai5}
  \begin{split}
     \mathrm{(\ref{eq:huanle2})} & = \frac{1}{\sqrt{nh^d}} \left\| \tilde{D}_{u_kh}^{-\frac{1}{2}} (\tilde{D}_{u_kh}^{-\frac{1}{2}}\tilde{B}_{u_kh}\tilde{D}_{u_kh}^{-\frac{1}{2}})^{-\frac{1}{2}} \tilde{\gamma}_{u_kh} \right\|_{\infty}\\
       & \leq \frac{1}{\sqrt{nh^d}} \left\| \tilde{D}_{u_kh}^{-\frac{1}{2}} (\tilde{D}_{u_kh}^{-\frac{1}{2}}\tilde{B}_{u_kh}\tilde{D}_{u_kh}^{-\frac{1}{2}})^{-\frac{1}{2}} \tilde{\gamma}_{u_kh} \right\|\\
       & \leq\frac{1}{\sqrt{nh^d}} \lambda_{\max}\left[ \tilde{D}_{u_kh}^{-\frac{1}{2}} (\tilde{D}_{u_kh}^{-\frac{1}{2}}\tilde{B}_{u_kh}\tilde{D}_{u_kh}^{-\frac{1}{2}})^{-\frac{1}{2}} \right] \left\| \tilde{\gamma}_{u_kh} \right\|\\
       & \leq\sqrt{\frac{N_{\ell,d}}{nh^d}} \lambda_{\max}\left[ \tilde{D}_{u_kh}^{-\frac{1}{2}} (\tilde{D}_{u_kh}^{-\frac{1}{2}}\tilde{B}_{u_kh}\tilde{D}_{u_kh}^{-\frac{1}{2}})^{-\frac{1}{2}} \right] \left\| \tilde{\gamma}_{u_kh} \right\|_{\infty},
  \end{split}
\end{equation}
where the first inequality follows from $\| \cdot \|_{\infty} \leq \| \cdot \|$, and the last inequality follows from $\| \cdot \| \leq \sqrt{N_{\ell,d}} \| \cdot \|_{\infty}$.

The main challenge here is to upper bound $\lambda_{\max}[ \tilde{D}_{u_kh}^{-1/2} (\tilde{D}_{u_kh}^{-1/2}\tilde{B}_{u_kh}\tilde{D}_{u_kh}^{-1/2})^{-1/2} ]$ in (\ref{eq:xianbai5}). We observe that
\begin{equation}\label{eq:wuban1}
  \begin{split}
     &\lambda_{\max}\left[ \tilde{D}_{u_kh}^{-\frac{1}{2}} (\tilde{D}_{u_kh}^{-\frac{1}{2}}\tilde{B}_{u_kh}\tilde{D}_{u_kh}^{-\frac{1}{2}})^{-\frac{1}{2}} \right]    \leq \lambda_{\max} \left( \tilde{D}_{u_kh}^{-\frac{1}{2}} \right) \lambda_{\max} \left[ \left(\tilde{D}_{u_kh}^{-\frac{1}{2}}\tilde{B}_{u_kh}\tilde{D}_{u_kh}^{-\frac{1}{2}}\right)^{-\frac{1}{2}} \right]\\
       & = \lambda_{\min}^{-\frac{1}{2}}\left( \tilde{D}_{u_kh} \right) \lambda_{\max}\left[ \left(\tilde{D}_{u_kh}^{-\frac{1}{2}}\tilde{B}_{u_kh}\tilde{D}_{u_kh}^{-\frac{1}{2}}\right)^{-1}  \left(\tilde{D}_{u_kh}^{-\frac{1}{2}}\tilde{B}_{u_kh}\tilde{D}_{u_kh}^{-\frac{1}{2}}\right)^{\frac{1}{2}} \right]\\
       & \leq \tau^{-\frac{1}{2}} \lambda_{\max}\left[ \left(\tilde{D}_{u_kh}^{-\frac{1}{2}}\tilde{B}_{u_kh}\tilde{D}_{u_kh}^{-\frac{1}{2}}\right)^{-1}  \right] \lambda_{\max}\left[   \left(\tilde{D}_{u_kh}^{-\frac{1}{2}}\tilde{B}_{u_kh}\tilde{D}_{u_kh}^{-\frac{1}{2}}\right)^{\frac{1}{2}}  \right],
  \end{split}
\end{equation}
where the last inequality follows from the fact $\lambda_{\min}( \tilde{D}_{u_kh} ) \geq \tau$. To proceed, we establish bounds for each term in (\ref{eq:wuban1}) individually. First,
\begin{equation}\label{eq:kaoya1}
\begin{split}
\lambda_{\max}\left[ \left(\tilde{D}_{u_kh}^{-\frac{1}{2}}\tilde{B}_{u_kh}\tilde{D}_{u_kh}^{-\frac{1}{2}}\right)^{-1}  \right] & \leq \lambda_{\max}\left( \tilde{D}_{u_kh} \right) \lambda_{\min}^{-1}\left( \tilde{B}_{u_kh} \right) \leq \left( \frac{K_{\max}}{h^d} + \tau \right)\frac{1}{\tau}.
\end{split}
\end{equation}
The proof of (\ref{eq:kaoya1}) is straightforward. Next, we bound
\begin{equation}\label{eq:kaoya2}
  \begin{split}
     \lambda_{\max}\left[   \left(\tilde{D}_{u_kh}^{-\frac{1}{2}}\tilde{B}_{u_kh}\tilde{D}_{u_kh}^{-\frac{1}{2}}\right)^{\frac{1}{2}}  \right] & = \lambda_{\max}^{\frac{1}{2}}    \left(\tilde{D}_{u_kh}^{-\frac{1}{2}}\tilde{B}_{u_kh}\tilde{D}_{u_kh}^{-\frac{1}{2}}\right)  \leq \sqrt{2N_{\ell,d}},
  \end{split}
\end{equation}
The proof of (\ref{eq:kaoya2}) uses the fact that each entry of  $\tilde{D}_{u_kh}^{-1/2}\tilde{B}_{u_kh}\tilde{D}_{u_kh}^{-1/2}$ has absolute value less than $2$. To see this, note that $D_{u_kh}$ is a diagonal matrix, and its smallest eigenvalue is equal to its smallest diagonal entry. Therefore,
$$
\tilde{D}_{u_kh}(s,s) \geq D_{u_kh}(s,s) \vee \tau \geq \frac{D_{u_kh}(s,s) + \tau}{2},
$$
and we also have
$$
\tilde{B}_{u_kh}(s,s) \leq B_{u_kh}(s,s) + \tau = D_{u_kh}(s,s) + \tau.
$$
Thus, the diagonal entries of $\tilde{D}_{u_kh}^{-1/2}\tilde{B}_{u_kh}\tilde{D}_{u_kh}^{-1/2}$ are bounded by $2$. Additionally, the $st$-th off-diagonal entry of $\tilde{D}_{u_kh}^{-1/2}\tilde{B}_{u_kh}\tilde{D}_{u_kh}^{-1/2}$ is
$$
\frac{\tilde{B}_{u_kh}(s,t)}{\tilde{D}_{u_kh}^{1/2}(s,s)\tilde{D}_{u_kh}^{1/2}(t,t)} \leq \frac{B_{u_kh}(s,t)}{D_{u_kh}^{1/2}(s,s)D_{u_kh}^{1/2}(t,t)},
$$
whose absolute value is less than $1$ by the Cauchy–Schwarz inequality.

Inserting (\ref{eq:kaoya1})--(\ref{eq:kaoya2}) into the right-hand side of (\ref{eq:wuban1}), we obtain
\begin{equation}\label{eq:kaoya3}
  \begin{split}
     \lambda_{\max}\left[ \tilde{D}_{u_kh}^{-\frac{1}{2}} (\tilde{D}_{u_kh}^{-\frac{1}{2}}\tilde{B}_{u_kh}\tilde{D}_{u_kh}^{-\frac{1}{2}})^{-\frac{1}{2}} \right] & \lesssim \tau^{-\frac{1}{2}} \left( \frac{K_{\max}}{h^d} + \tau \right)\frac{1}{\tau} \lesssim \frac{1}{\tau^{\frac{3}{2}}h^d}.\\
  \end{split}
\end{equation}
Combining (\ref{eq:xianbai5}) with (\ref{eq:kaoya3}), we get
\begin{equation}\label{eq:xianbai444}
  \mathrm{(\ref{eq:huanle2})} \lesssim \frac{\left\| \tilde{\gamma}_{u_kh} \right\|_{\infty}}{n^{\frac{1}{2}}\tau^{\frac{3}{2}}h^\frac{3d}{2}} = \frac{\left\| \tilde{\gamma}_{u_kh} \right\|_{\infty}}{(\tau nh^d)^{\frac{1}{2}}\tau h^d}.
\end{equation}

\underline{\textsc{Constructing the final bound}.} The upper bounds for the terms (\ref{eq:huanle1}) and (\ref{eq:huanle2}) are established in (\ref{eq:xianbai4})--(\ref{eq:xianbai444}), respectively. We define $\tilde{\gamma}_{nx} \triangleq (\tilde{\gamma}_{u_kh}^{\top})^{\top}_{k \in I_{x}}$, which contains $\mathrm{Card}(I_{x})N_{\ell,d}$ $(K_{\max}\sigma)$-sub-Gaussian random variables. By combining the bounds (\ref{eq:xianbai4})--(\ref{eq:xianbai444}) with (\ref{eq:qian1}), we immediately find that
\begin{equation}\label{eq:xianbai4444}
  \sup_{k \in I_x}\sup_{x' \in \Omega_k}\left| \tilde{f}_{u_kh}(x') \right| \lesssim \sup_{k \in I_x}\left\| \tilde{\theta}_{u_kh} \right\|_{\infty} \lesssim \frac{1}{\tau h^d} + \frac{\left\| \tilde{\gamma}_{nx} \right\|_{\infty}}{(\tau nh^d)^{\frac{1}{2}}\tau h^d}.
\end{equation}
Thus, the first term in (\ref{eq:gift1}) can be upper bounded by
\begin{equation}\label{eq:xiayule1}
  \begin{split}
    & \mathbb{E}_{\otimes^n}\left[\sup_{k \in I_x}\sup_{x' \in \Omega_k}\left| \tilde{f}_{u_kh}(x') \right|\right]^q1_{\{ \mathcal{C}_h^c \}} \lesssim \mathbb{E}_{\otimes^n}\left[\frac{1}{(\tau h^d)^{q}} + \frac{\| \tilde{\gamma}_{nx}\|_{\infty}^q}{(\tau nh^d)^{\frac{q}{2}}(\tau h^d)^q}\right]1_{\{ \mathcal{C}_h^c \}}\\
     & =\mathbb{E}_{\otimes^n} \left\{ \mathbb{E}_{\otimes^n}\left[\frac{1}{(\tau h^d)^{q}} + \frac{\| \tilde{\gamma}_{nx}\|_{\infty}^q}{(\tau nh^d)^{\frac{q}{2}}(\tau h^d)^q}\right]1_{\{ \mathcal{C}_h^c \}} \mid \mathfrak{X}_n \right\} \\
     & \leq \mathbb{E}_{\otimes^n}  \left[\frac{1}{(\tau h^d)^{q}} + \frac{\mathbb{E}_{\otimes^n}\left(\| \tilde{\gamma}_{nx}\|_{\infty}^q\mid \mathfrak{X}_n\right)}{(\tau nh^d)^{\frac{q}{2}}(\tau h^d)^q}\right]  \mathbb{P}_{\otimes^n}\left( \mathcal{C}_h^c \right) \\
       & \lesssim \left[\frac{1}{(\tau h^d)^{q}} + \frac{\left[\log (rM +1) \right]^{\frac{q}{2}}}{(\tau nh^d)^{\frac{q}{2}}(\tau h^d)^q}\right]\exp\left( - C nh^d \right)\\
       & \lesssim \exp\left( - C nh^d \right),
  \end{split}
\end{equation}
where the third inequality follows from Lemma~\ref{lem:sub_Gaussian_bound} and the fact that $\mathrm{Card}(I_{x})N_{\ell,d} \lesssim ( rM +1)^d$. The last inequality holds because, under the conditions $M/n^{\gamma_1} \to 0$, $nh^d/n^{\gamma_2} \to \infty$, and $\tau n^{\gamma_3} \to \infty$ (with $0<\gamma_1,\gamma_2,\gamma_3 <\infty$), the penultimate term in (\ref{eq:xiayule1}) is dominated by the exponentially small probability.

By substituting (\ref{eq:xiayule1}) into the right-hand side of (\ref{eq:gift1}), we obtain
\begin{equation}\label{eq:xiayule2}
  \mathbb{E}_{\otimes^n}\sup_{k \in I_x}\sup_{x' \in \Omega_k}\left| \tilde{f}_{u_kh}(x') - f_{u_k}(x') \right|^q1_{\{ \mathcal{C}_h^c  \}} \lesssim \exp\left( - C nh^d \right).
\end{equation}

\subsubsection{Completing the proof}

Inserting (\ref{eq:xiayule3}) and (\ref{eq:xiayule2}) into (\ref{eqn:line-1})--(\ref{eqn:line-2}) gives
\begin{equation}\label{eq:xiayule4}
  \mathbb{E}_{\otimes^n}\sup_{k \in I_x}\sup_{x' \in \Omega_k}\left| \tilde{f}_{u_kh}(x') - f_{u_k}(x') \right|^q \lesssim h^{q\beta} +\left[ \frac{\log (rM +1) }{nh^d}\right]^{\frac{q}{2}}.
\end{equation}
Combining (\ref{eq:0}), (\ref{eq:1}), (\ref{eq:2}), and (\ref{eq:rela2}) with (\ref{eq:xiayule4}), we obtain
\begin{equation}\label{eq:xiongxiong1}
  R_{A,q}(\hat{f}_{\mathrm{PP}}, f) \lesssim r^{q(1 \wedge \beta)} + \frac{1}{M^{q\beta}} + h^{q\beta} + \left[ \frac{\log (rM +1) }{nh^d}\right]^{\frac{q}{2}}
\end{equation}
under the conditions $M h \gtrsim 1$, $M/n^{\gamma_1} \to 0$, $nh^d/n^{\gamma_2} \to \infty$, and $\tau n^{\gamma_3} \to \infty$, where $0<\gamma_1,\gamma_2,\gamma_3 <\infty$. Given the choices $h \asymp r \vee n^{-\frac{1}{2\beta+d}}$ and $M \asymp 1/h \asymp r^{-1} \wedge n^{\frac{1}{2\beta+d}}$, we find
\begin{equation}
\begin{split}
   R_{A,q}(\hat{f}_{\mathrm{PP}}, f) & \lesssim r^{q(1 \wedge \beta)} + r^{q\beta} + n^{-\frac{q\beta}{2\beta+d}}  + \left(\frac{1}{nh^d}\right)^{\frac{q}{2}}\\
     & \lesssim r^{q(1 \wedge \beta)} + r^{q\beta} + n^{-\frac{q\beta}{2\beta+d}} + n^{-\frac{q\beta}{2\beta+d}} \\
     & \lesssim r^{q(1 \wedge \beta)}  + n^{-\frac{q\beta}{2\beta+d}}. \\
\end{split}
\end{equation}
This completes the proof of Theorem~\ref{theo:upper} for $1 \leq q <\infty$.

\subsection{Proof of the upper bound for $q=\infty$}\label{sec:proof_upper_2}

In this subsection, we complete the proof of Theorem~\ref{theo:upper} for the adversarial sup-norm risk. This proof relies on techniques similar to those developed in Section~\ref{sec:proof:buxiang2}. The adversarial sup-norm risk can be upper bounded by
\begin{equation}\label{eq:exinnv1}
  \begin{split}
     R_{A,\infty}(\hat{f}_{\mathrm{PP}}, f) & = \mathbb{E}_{\otimes^n}\sup_{x \in \Omega}\sup_{x' \in A(x)}\left| \hat{f}_{\mathrm{PP}}(x') - f(x) \right| \\
       & \leq \mathbb{E}_{\otimes^n}\sup_{x \in \Omega}\sup_{x' \in A(x)}\left| \hat{f}_{\mathrm{PP}}(x') -f(x')\right|+ \sup_{x \in \Omega}\sup_{x' \in A(x)}\left| f(x') - f(x) \right|\\
       & \lesssim \mathbb{E}_{\otimes^n}\sup_{x \in \Omega}\sup_{x' \in A(x)}\left| \hat{f}_{\mathrm{PP}}(x') -f(x')\right| + r^{1 \wedge \beta},
  \end{split}
\end{equation}
where the last inequality follows from Lemma~\ref{lem:lip}. Thus, it remains to upper bound the localized sup-norm risk $\mathbb{E}_{\otimes^n}\sup_{x \in \Omega}\sup_{x' \in A(x)}| \hat{f}_{\mathrm{PP}}(x') -f(x')|$.

Following a similar approach to (\ref{eq:rela2}), we have
\begin{equation}\label{eq:exinnv2}
  \begin{split}
     &\mathbb{E}_{\otimes^n}\sup_{x \in \Omega}\sup_{x' \in A(x)}\left| \hat{f}_{\mathrm{PP}}(x') -f(x')\right| \leq \mathbb{E}_{\otimes^n}\sup_{k \in \{1,\ldots,M^d \}}\sup_{x' \in \Omega_k}\left| \hat{f}_{\mathrm{PP}}(x') -f(x')\right|\\
       & \leq \mathbb{E}_{\otimes^n}\sup_{k \in \{1,\ldots,M^d \}}\sup_{x' \in \Omega_k}\left| \hat{f}_{\mathrm{PP}}(x') -f_{u_k}(x')\right| + \sup_{k \in \{1,\ldots,M^d \}}\sup_{x' \in \Omega_k}\left| f_{u_k}(x') - f(x')\right|\\
       & \lesssim \mathbb{E}_{\otimes^n}\sup_{k \in \{1,\ldots,M^d \}}\sup_{x' \in \Omega_k}\left| \hat{f}_{\mathrm{PP}}(x') -f_{u_k}(x')\right| + \frac{1}{M^{\beta}},
  \end{split}
\end{equation}
where the last inequality follows from Lemma~\ref{lem:taylor}. Then, using the same technique as in Section~\ref{sec:proof:buxiang2}, we can show that
\begin{equation}\label{eq:exinnv3}
  \mathbb{E}_{\otimes^n}\sup_{k \in \{1,\ldots,M^d \}}\sup_{x' \in \Omega_k}\left| \hat{f}_{\mathrm{PP}}(x') -f_{u_k}(x')\right| \leq h^{\beta} + \left[\frac{\log (M +1)}{nh^d}\right]^{\frac{1}{2}}
\end{equation}
when $Mh \gtrsim 1$ and the same conditions used in Section~\ref{sec:proof:buxiang2} hold. Combining (\ref{eq:exinnv1}), (\ref{eq:exinnv2}), and (\ref{eq:exinnv3}), we derive the following bound:
\begin{equation}\label{eq:exinnv4}
\begin{split}
   R_{A,\infty}(\hat{f}_{\mathrm{PP}}, f) & \lesssim r^{1 \wedge \beta} + \frac{1}{M^{\beta}} + h^{\beta} + \left[\frac{\log (M +1)}{nh^d}\right]^{\frac{1}{2}} \\
     & \lesssim r^{1 \wedge \beta} + h^{\beta} + \left[\frac{\log (M +1)}{nh^d}\right]^{\frac{1}{2}}\\
     & \lesssim r^{1 \wedge \beta} + r^{\beta} + \left(\frac{n}{\log n}\right)^{-\frac{\beta}{2\beta+d}},
\end{split}
\end{equation}
where the inequalities follow from $M \asymp 1/h$ and $h \asymp r \vee (n/\log n)^{-\frac{1}{2\beta+d}}$.
Thus, the results in Theorem~\ref{theo:upper} follow immediately.

\section{Proof of Theorem~\ref{theo:adaptive1}}\label{sec:proof:buxiang}

\subsection{Notation}

The proof in this section will make repeated use of the notations introduced in Section~\ref{sec:proof_notation_1}, particularly the matrix notations (\ref{eq:Q_uh})--(\ref{eq:eta_uh}). Additionally, we recall the diagonal matrix $R_{h'h} \in \mathbb{R}^{N_{\ell,d}}$ defined in (\ref{eq:jiaquan3}). Given a point $u_k$ and bandwidths $h' \leq h$, we define another diagonal matrix $K_{h'h} \in \mathbb{R}^n$ as
\begin{equation}\label{eq:dongdu1}
  K_{h'h} \triangleq \mathrm{diag}\left[\left(\frac{\sqrt{K\left( \frac{X_i-u_k}{h'} \right)}}{\sqrt{K\left( \frac{X_i-u_k}{h} \right)}}\right)_{1 \leq i \leq n} \right],
\end{equation}
where the ratios involving $0/0$ are defined to be $0$ in the above matrix. Note that under Assumption~\ref{ass:kernel_2}, $K_{h'h}$ is well-defined since for any $h' \leq h$, we have $\| \frac{X_i-u_k}{h} \| \leq \| \frac{X_i-u_k}{h'} \|$, implying $0 \leq K( \frac{X_i-u_k}{h'} ) \leq K( \frac{X_i-u_k}{h} )$. Therefore, dividing a non-zero term by zero will never happen.

We further define
\begin{equation}\label{eq:oracle_h}
  \bar{h} \triangleq \argmax_{h \in H}\left\{ h: h^{2\beta} \leq \frac{\log n}{nh^d}  \right\},
\end{equation}
as the oracle bandwidth within the grid $H$, where $H$ is defined by (\ref{eq:H_set}).

\subsection{Technical lemma}

Recall that $\mathfrak{X}_n$ denotes the $\sigma$-algebra generated by the design data. The following lemma restates Lemma~\ref{lem:event} specifically for $h = \bar{h}$.

\begin{lemma}\label{lem:C_n_bar}
  For $k=1,\ldots,M^d$, define the events $\mathcal{A}_{u_k\bar{h}} \triangleq \{ 0 <n_{u_k\bar{h}} \leq c_1 n \bar{h}^d\}$, $\mathcal{B}_{u_k\bar{h}} \triangleq \{\lambda_{\min}(B_{u_k\bar{h}}) \geq c_2 \}$, and $\mathcal{D}_{u_k\bar{h}} \triangleq \{ c_3 \leq  \lambda_{\min}(D_{u_k\bar{h}}) \leq  \lambda_{\max}(D_{u_k\bar{h}}) \leq c_4\}$, where $c_i,i=1,\ldots,4$, are positive constants. If $M \asymp n$, then for sufficiently large $n$, there exists an event $\mathcal{C}_{\bar{h}}$ measurable with respect to $\mathfrak{X}_n$ such that
  \begin{equation*}
    \mathcal{C}_{\bar{h}} \subseteq \bigcap_{k=1}^{M^d}\left(\mathcal{A}_{u_k\bar{h}} \cap \mathcal{B}_{u_k\bar{h}} \cap \mathcal{D}_{u_k\bar{h}}\right)
  \end{equation*}
  and
  \begin{equation*}
    \mathbb{P}_{\otimes^ n}\left( \mathcal{C}_{\bar{h}} \right) \geq 1 - \exp( - C n\bar{h}^d ),
  \end{equation*}
  where $C>0$ is a constant not related to $n$.
\end{lemma}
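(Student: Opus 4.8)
The plan is to obtain Lemma~\ref{lem:C_n_bar} as the special case $h=\bar{h}$ of Lemma~\ref{lem:event}. Since $\bar{h}$ (defined in (\ref{eq:oracle_h})) is a deterministic sequence depending only on $n$, $\beta$ and $d$, it suffices to verify the two growth hypotheses required by Lemma~\ref{lem:event}, namely that there exist constants $\gamma_1,\gamma_2>0$ with $M/n^{\gamma_1}\to 0$ and $n\bar{h}^d/n^{\gamma_2}\to\infty$; the conclusion of Lemma~\ref{lem:event} then reproduces the statement of Lemma~\ref{lem:C_n_bar} verbatim, with the same constants $c_1,\dots,c_4$ and with the event denoted $\mathcal{C}_{\bar{h}}$.

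The first hypothesis is immediate: the assumption $M\asymp n$ gives $M/n^{2}\to 0$, so $\gamma_1=2$ works. The only real work is the polynomial lower bound $n\bar{h}^d\gtrsim n^{\gamma_2}$. First I would rewrite the defining inequality of $\bar{h}$: $h^{2\beta}\leq \log n/(nh^d)$ is equivalent to $h\leq(\log n/n)^{1/(2\beta+d)}$, so $\bar{h}$ is simply the largest element of the grid $H$ from (\ref{eq:H_set}) not exceeding $(\log n/n)^{1/(2\beta+d)}$. I would then exhibit one such grid point that is not too small. Recall $h_j=n^{-1/(2\beta_j+d)}$ with $\beta_j=(1+1/\log n)^j$ and $j\in\{-J,\dots,J_{\max}\}$, where $\beta_{-J}\asymp(\log n)^{-2}\to 0$ and $\beta_{J_{\max}}\to\beta_{\max}$. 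For $n$ large choose $\beta_j$ to be the smallest grid value with $\beta_j\geq\beta$ (when $\beta=\beta_{\max}$ and the largest grid value falls just below $\beta$, use $\beta_{J_{\max}}$ instead; the argument is the same since $\beta_{J_{\max}}\to\beta$). Since $\beta_j\geq\beta$ we have $h_j=n^{-1/(2\beta_j+d)}\geq n^{-1/(2\beta+d)}$, while $\beta_j-\beta=O(\beta/\log n)$ forces $h_j/(\log n/n)^{1/(2\beta+d)}=n^{1/(2\beta+d)-1/(2\beta_j+d)}(\log n)^{-1/(2\beta+d)}\to 0$, so that $h_j$ obeys the constraint once $n$ is large. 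Hence $\bar{h}\geq h_j\geq n^{-1/(2\beta+d)}$, which yields $n\bar{h}^d\geq n^{\,1-d/(2\beta+d)}=n^{\,2\beta/(2\beta+d)}$; with $\gamma_2=\beta/(2\beta+d)>0$ we get $n\bar{h}^d/n^{\gamma_2}\to\infty$. Along the way one notes $\bar{h}\leq(\log n/n)^{1/(2\beta+d)}\to 0$, so the mild restrictions $h=O(1)$ and $h\leq r_\mu/\Delta$ used inside the proof of Lemma~\ref{lem:event} hold for large $n$.

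With both hypotheses in hand, applying Lemma~\ref{lem:event} with $h=\bar{h}$ delivers an event $\mathcal{C}_{\bar{h}}$, measurable with respect to $\mathfrak{X}_n$, satisfying $\mathcal{C}_{\bar{h}}\subseteq\bigcap_{k=1}^{M^d}(\mathcal{A}_{u_k\bar{h}}\cap\mathcal{B}_{u_k\bar{h}}\cap\mathcal{D}_{u_k\bar{h}})$ and $\mathbb{P}_{\otimes^ n}(\mathcal{C}_{\bar{h}})\geq 1-\exp(-Cn\bar{h}^d)$, which is precisely the assertion. I expect the only step requiring genuine attention to be the lower bound on $n\bar{h}^d$: it is not deep, but one must read off from the construction of the $\beta_j$-grid that $\bar{h}$ is comparable to $n^{-1/(2\beta+d)}$ (up to sub-polynomial factors) rather than being pushed down to the smallest grid bandwidth $h_{-J}\asymp n^{-1/d}$, for which $nh_{-J}^d=O(1)$.
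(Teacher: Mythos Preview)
Your proposal is correct and follows essentially the same route as the paper: both reduce the lemma to Lemma~\ref{lem:event} with $h=\bar{h}$ and then verify the polynomial growth $n\bar{h}^d\gtrsim n^{2\beta/(2\beta+d)}$ by exploiting the fine spacing $\beta_{j}/\beta_{j-1}=1+1/\log n$ of the bandwidth grid. The only cosmetic difference is that the paper argues from the fact that $h_{\bar{j}+1}$ violates the defining constraint (and that consecutive grid bandwidths differ by at most a factor $e^{1/d}$), whereas you exhibit a specific feasible $h_j$ with $\beta_j$ just above $\beta$; both yield the same lower bound on $\bar{h}$.
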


\begin{proof}[Proof of Lemma~\ref{lem:C_n_bar}]
  To prove this lemma, it suffices to verify that $n \bar{h}^d$ grows faster than a polynomial function of $n$. We begin by showing that for sufficiently large $n$, the oracle bandwidth $\bar{h}$ exists. To prove this existence, it is enough to show that $h_{-J}^{2\beta+d} = o(\log n / n)$. Based on the definitions in (\ref{eq:exinnv5}), we have
  \begin{equation}\label{eq:exinnv6}
    \beta_{-J} \sim \left( 1 + \frac{1}{\log n} \right)^{-2  \log n \log\log n  } \sim \left( \frac{1}{\log n} \right)^2.
  \end{equation}
  Thus,
  \begin{equation*}
    \begin{split}
       \frac{h_{-J}^{2\beta+d}}{\log n/ n} & = \frac{n^{-\frac{2\beta+d}{2\beta_{-J}+d}}}{\log n/ n} = \frac{n^{\frac{2\beta_{-J} - 2\beta}{2\beta_{-J}+d}}}{\log n} \to 0, \\
    \end{split}
  \end{equation*}
  where the last approximation follows from (\ref{eq:exinnv6}) and holds for any fixed $\beta > 0$. This confirms the existence of $\bar{h}$ when $n$ is sufficiently large.

  We now assume that $\bar{h}$ can be expressed as
  \begin{equation*}
    \bar{h} = h_{\bar{j}} = n^{-\frac{1}{2\beta_{\bar{j}}+d}},
  \end{equation*}
  where $\bar{j} \in \{ - J,\ldots, J_{\max} \}$ and may depend on the sample size $n$. In the case where $\bar{j} = J_{\max}$, we have
  \begin{equation*}
    \beta_{J_{\max}} \gtrsim \left( 1 + \frac{1}{\log n} \right)^{\log n \log \beta_{\max}} \sim \beta_{\max},
  \end{equation*}
  and
  \begin{equation*}
    n \bar{h}^d = n \times n^{-\frac{d}{2\beta_{J_{\max}}+d}} \gtrsim n^{1-\frac{d}{2\beta_{\max}+d}} = n^{\frac{2\beta_{\max}}{2\beta_{\max}+d}}.
  \end{equation*}
  This yields the desired result.

  When $\bar{j} \in \{ - J,\ldots, J_{\max}-1 \}$, we have the following relation according to the definition in (\ref{eq:oracle_h}):
  \begin{equation}\label{eq:exinnv7}
    h_{\bar{j}}^{d} = n^{-\frac{d}{2\beta_{\bar{j}}+d}} \leq \left(\frac{\log n}{n} \right)^{\frac{d}{2\beta+d}} < n^{-\frac{d}{2\beta_{\bar{j}+1}+d}}.
  \end{equation}
    Note that
  \begin{equation*}
    \begin{split}
       \frac{d}{2\beta_{\bar{j}}+d} - \frac{d}{2\beta_{\bar{j}+1}+d} & = \frac{2d(\beta_{\bar{j}+1} - \beta_{\bar{j}} )}{(2\beta_{\bar{j}}+d)(2\beta_{\bar{j}+1}+d)} = \frac{2d\beta_{\bar{j}}( 1 + \frac{1}{\log n} - 1 )}{(2\beta_{\bar{j}}+d)(2\beta_{\bar{j}+1}+d)}\\
         & = \frac{2d\beta_{\bar{j}}}{(2\beta_{\bar{j}}+d)(2\beta_{\bar{j}+1}+d)\log n } \leq \frac{1}{\log n},
    \end{split}
  \end{equation*}
  where the second equality follows from the definition in (\ref{eq:exinnv5}). Then, we have
  \begin{equation}\label{eq:exinnv8}
    \begin{split}
       \frac{n^{-\frac{d}{2\beta_{\bar{j}}+d}}}{n^{-\frac{d}{2\beta_{\bar{j}+1}+d}}} & = n^{-\left( \frac{d}{2\beta_{\bar{j}}+d} - \frac{d}{2\beta_{\bar{j}+1}+d} \right)} \geq n^{-\frac{1}{\log n}} = \exp(-1).\\
    \end{split}
  \end{equation}
  Thus, we obtain
  \begin{equation*}
    n h_{\bar{j}}^{d} = n \times n^{-\frac{d}{2\beta_{\bar{j}}+d}} \geq n \times n^{-\frac{d}{2\beta_{\bar{j}+1}+d}} \times \exp(-1) \gtrsim n \times \left(\frac{\log n}{n} \right)^{\frac{d}{2\beta+d}} \gtrsim n^{\frac{2\beta}{2\beta+d}},
  \end{equation*}
  where the first inequality follows from (\ref{eq:exinnv8}) and the second from (\ref{eq:exinnv7}). This verifies that $n \bar{h}^d$ grows faster than a polynomial function of $n$, thus proving the lemma.

\end{proof}

We now turn to investigating the probabilities associated with the bandwidth selection procedure in (\ref{eq:select:band}). Recall that $\Omega_1, \ldots, \Omega_{M^d}$ form a partition of $\Omega = [0, 1]^d$ based on the set of discretization points $\Lambda_{M}$. Note that for any $x \in \Omega_k$, the selected bandwidth $\hat{h}_x$ remains constant across $x$. Thus, we denote $\hat{h}_k$ as $\hat{h}_x$ for any $x \in \Omega_k$.

In the following lemma, we analyze the probability of the event
\begin{equation}\label{eq:xiongxiong2}
  \mathcal{E}_{u_k\bar{h}} \triangleq \left\{ \hat{h}_{k} \geq \bar{h} \right\},
\end{equation}
where $\bar{h}$ is the oracle bandwidth defined in (\ref{eq:oracle_h}).

\begin{lemma}\label{lem:E_n}

Suppose that $M \asymp n$. Define the event
  \begin{equation}\label{eq:yeyan1}
    \mathcal{E}_{\bar{h}} \triangleq \bigcap_{k=1}^{M^d} \mathcal{E}_{u_k\bar{h}}.
  \end{equation}
  Then, when the constant $C$ in (\ref{eq:select:band}) is chosen to be sufficiently large, there exists a large enough constant $C'$ such that
  \begin{equation}\label{eq:dongdu4}
    \mathbb{P}_{\otimes^n} \left( \mathcal{E}_{\bar{h}}^c  | \mathfrak{X}_n \right) \leq n^{-C'}
  \end{equation}
  if the event $\mathcal{C}_{\bar{h}}$ holds.

\end{lemma}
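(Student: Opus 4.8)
The plan is to unfold Lepski's rule \eqref{eq:select:band} into a union of elementary exceedance events and then dominate each by a sub-Gaussian tail bound. By the definition of $\hat h_k$, if $\bar h$ belongs to the feasible set of \eqref{eq:select:band} --- i.e., if for every $h'\in H$ with $h'\le\bar h$ the discrepancy statistic $T_k(h',\bar h)\triangleq\| \tilde D_{u_kh'}^{-1/2}( a_{u_kh'}-B_{u_kh'}R_{h'\bar h}\tilde B_{u_k\bar h}^{-1}a_{u_k\bar h})\|_{\infty}$ does not exceed $C\sqrt{\log n/(nh'^d)}$ --- then $\hat h_k\ge\bar h$. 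Hence $\mathcal{E}_{\bar h}^{c}\subseteq\bigcup_{k=1}^{M^d}\bigcup_{h'\in H,\,h'\le\bar h}\{T_k(h',\bar h)>C\sqrt{\log n/(nh'^d)}\}$, so it suffices to bound $\mathbb{P}_{\otimes^n}(T_k(h',\bar h)>C\sqrt{\log n/(nh'^d)}\mid\mathfrak{X}_n)$ uniformly over such $k,h'$ on $\mathcal{C}_{\bar h}$, and then take a union bound over the at most $M^d|H|\lesssim n^{d+1}$ pairs.

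Next I would record the algebraic structure of $T_k(h',\bar h)$. On $\mathcal{C}_{\bar h}$ one has $\lambda_{\min}(B_{u_k\bar h})\ge c_2>\tau(\bar h)$ for $n$ large, so $\tilde B_{u_k\bar h}=B_{u_k\bar h}$. Writing $a_{uh}=B_{uh}c_{uh}+b_{uh}+v_{uh}$, with $c_{uh}$ the rescaled Taylor vector, $b_{uh}$ the $\mathfrak{X}_n$-measurable bias vector and $v_{uh}$ the conditionally centered noise vector --- the decomposition used in the proof of Theorem~\ref{theo:upper} --- and using the identity $R_{h'\bar h}c_{u_k\bar h}=c_{u_kh'}$ (the matrix $R_{h'\bar h}$ merely rescales the $s$-th coordinate by $h'^{|s|}/\bar h^{|s|}$), the vector inside $T_k(h',\bar h)$ equals $(b_{u_kh'}+v_{u_kh'})-B_{u_kh'}R_{h'\bar h}B_{u_k\bar h}^{-1}(b_{u_k\bar h}+v_{u_k\bar h})$. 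I would then bound the $\mathfrak{X}_n$-measurable bias part and the conditionally sub-Gaussian noise part separately, aiming to show that on $\mathcal{C}_{\bar h}$ the bias part has $\|\cdot\|_\infty\le C_b\sqrt{\log n/(nh'^d)}$ with $C_b$ a fixed constant, while each coordinate of the noise part exceeds $(C-C_b)\sqrt{\log n/(nh'^d)}$ with conditional probability at most a negative power of $n$ whose exponent grows with $C$.

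The technical estimate underpinning both halves, and the step I expect to be the main obstacle, is a sharp bound $\|\tilde D_{u_kh'}^{-1/2}B_{u_kh'}\|_{\mathrm{op}}\lesssim(\bar h^d/h'^d)^{1/2}$ valid on $\mathcal{C}_{\bar h}$ for every $h'\le\bar h$, even though $\mathcal{C}_{\bar h}$ only controls the design at scale $\bar h$. The crude bound $\|\tilde D_{u_kh'}^{-1/2}\|_{\mathrm{op}}\le\tau(h')^{-1/2}=(nh'^d)^{1/2}$ is far too lossy; instead I would factor $\tilde D_{u_kh'}^{-1/2}B_{u_kh'}=(\tilde D_{u_kh'}^{-1/2}B_{u_kh'}\tilde D_{u_kh'}^{-1/2})\tilde D_{u_kh'}^{1/2}$, note that every entry of $\tilde D_{u_kh'}^{-1/2}B_{u_kh'}\tilde D_{u_kh'}^{-1/2}$ has modulus at most $1$ (Cauchy--Schwarz, since $D_{u_kh'}$ is the diagonal of $B_{u_kh'}$ and $\tilde D_{u_kh'}\ge D_{u_kh'}$ on the diagonal), and bound $\lambda_{\max}(\tilde D_{u_kh'})\le\max_s D_{u_kh'}(s,s)+\tau(h')\le K_{\max}n_{u_kh'}/(nh'^d)+(nh'^d)^{-1}$. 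The decisive point is the monotonicity $n_{u_kh'}\le n_{u_k\bar h}\le c_1 n\bar h^d$, valid on $\mathcal{C}_{\bar h}$ since $B(u_k,h')\subseteq B(u_k,\bar h)$ and $\mathcal{A}_{u_k\bar h}$ holds; it gives $\lambda_{\max}(\tilde D_{u_kh'})\lesssim\bar h^d/h'^d$ and hence the claim. The same monotonicity yields $\|\tilde D_{u_kh'}^{-1/2}b_{u_kh'}\|_\infty\lesssim\bar h^{d/2}h'^{\beta-d/2}$, and on $\mathcal{C}_{\bar h}$ one also gets $\|B_{u_k\bar h}^{-1}b_{u_k\bar h}\|\lesssim\bar h^{\beta}$ from $\lambda_{\min}(B_{u_k\bar h})\ge c_2$ and $\lambda_{\max}(D_{u_k\bar h})\le c_4$; combined with $\|R_{h'\bar h}\|_{\mathrm{op}}\le1$ and the oracle relation $\bar h^{2\beta+d}\le\log n/n$ from \eqref{eq:oracle_h} together with $h'\le\bar h$, both bias contributions collapse to $O(\sqrt{\log n/(nh'^d)})$.

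Finally, for the noise part I would use that, conditionally on $\mathfrak{X}_n$, each coordinate of $\tilde D_{u_kh'}^{-1/2}(v_{u_kh'}-B_{u_kh'}R_{h'\bar h}B_{u_k\bar h}^{-1}v_{u_k\bar h})$ is a fixed linear functional $\sum_i w_i\xi_i$ of the errors, hence sub-Gaussian with parameter $\sigma\|w\|$. The identity $\sum_i[U((X_i-u_k)/h')]_s^2 K((X_i-u_k)/h')=nh'^d D_{u_kh'}(s,s)$ bounds the variance proxy coming from $v_{u_kh'}$ by $\sigma^2 K_{\max}(nh'^d)^{-1}$ (using $D_{u_kh'}(s,s)\le\tilde D_{u_kh'}(s,s)$), while, writing $P\triangleq\tilde D_{u_kh'}^{-1/2}B_{u_kh'}R_{h'\bar h}B_{u_k\bar h}^{-1}$, the identity $\sum_i[PU((X_i-u_k)/\bar h)]_s^2 K((X_i-u_k)/\bar h)=n\bar h^d(PB_{u_k\bar h}P^{\top})_{ss}$ together with $(PB_{u_k\bar h}P^{\top})_{ss}\le\|\tilde D_{u_kh'}^{-1/2}B_{u_kh'}\|_{\mathrm{op}}^2\lambda_{\min}^{-1}(B_{u_k\bar h})\lesssim\bar h^d/h'^d$ bounds the variance proxy coming from $v_{u_k\bar h}$ by $\sigma^2 K_{\max}(n\bar h^d)^{-1}\cdot O(\bar h^d/h'^d)\lesssim(nh'^d)^{-1}$ --- the factor $\bar h^d/h'^d$ exactly cancelling the $(n\bar h^d)^{-1}$ prefactor. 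A sub-Gaussian tail bound then shows that each such coordinate exceeds $(C-C_b)\sqrt{\log n/(nh'^d)}$ with conditional probability at most $2\exp(-c(C-C_b)^2\log n)$, and a union bound over the $N_{\ell,d}$ coordinates, the $|H|\lesssim\log n\log\log n$ admissible bandwidths, and the $M^d\asymp n^d$ cells yields $\mathbb{P}_{\otimes^n}(\mathcal{E}_{\bar h}^{c}\mid\mathfrak{X}_n)\lesssim n^{d+1}\exp(-c(C-C_b)^2\log n)\le n^{-C'}$ once $C$ is chosen large enough that $c(C-C_b)^2\ge d+1+C'$.
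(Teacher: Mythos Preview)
Your proposal is correct, and it takes a genuinely different route from the paper's proof. The paper does not use the Taylor decomposition $a_{uh}=B_{uh}c_{uh}+b_{uh}+v_{uh}$ at all; instead it introduces the diagonal kernel-ratio matrix $K_{h'\bar h}$ (which requires the monotonicity Assumption~\ref{ass:kernel_2}) and the identity $Q_{u_kh'}^{\top}R_{h'\bar h}=\sqrt{\bar h^d/h'^d}\,K_{h'\bar h}Q_{u_k\bar h}^{\top}$, which recasts the entire discrepancy vector as a multiple of $Q_{u_kh'}K_{h'\bar h}$ applied to a projection residual $(\mathrm{Id}-\Pi_{\bar h})\eta_{u_k\bar h}$ plus a projected noise term. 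The bias is then bounded in one stroke via $\|(\mathrm{Id}-\Pi_{\bar h})\eta_{u_k\bar h}\|\le\|\eta_{u_k\bar h}-t_{u_k\bar h}\|\lesssim\bar h^{\beta}\sqrt{n\bar h^d}$, using only projection optimality and control of the design at scale $\bar h$.

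Your argument trades this algebraic identity for the operator-norm estimate $\|\tilde D_{u_kh'}^{-1/2}B_{u_kh'}\|_{\mathrm{op}}\lesssim(\bar h^d/h'^d)^{1/2}$, whose proof rests on the monotonicity $n_{u_kh'}\le n_{u_k\bar h}$ of local sample counts rather than on kernel monotonicity. This is more elementary and, notably, does not invoke Assumption~\ref{ass:kernel_2} at all for this lemma; the paper's route genuinely needs it to make $K_{h'\bar h}$ well-defined with entries at most one. On the other hand, the paper's projection argument gives the bias bound in a single line and makes the structure of Lepski's rule (comparing the data against its own best polynomial fit at the coarser scale) more transparent, whereas your approach handles $b_{u_kh'}$ and $B_{u_kh'}R_{h'\bar h}B_{u_k\bar h}^{-1}b_{u_k\bar h}$ as two separate pieces and needs the operator-norm bound twice.
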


\begin{proof}

Note that
\begin{equation}\label{eq:exinnv9}
  \mathbb{P}_{\otimes^n} \left( \mathcal{E}_{\bar{h}}^c  | \mathfrak{X}_n \right) = \mathbb{P}_{\otimes^n} \left( \cup_{k=1}^{M^d} \mathcal{E}_{u_k\bar{h}}^c  | \mathfrak{X}_n \right) \leq M^d \max_{1 \leq k \leq M^d} \mathbb{P}_{\otimes^n} \left(  \mathcal{E}_{u_k\bar{h}}^c  | \mathfrak{X}_n \right).
\end{equation}
To establish this lemma, it is sufficient to bound the probability of the event $\mathcal{E}_{u_k\bar{h}}^c$ when $\mathcal{C}_{\bar{h}}$ holds. By the definition of the bandwidth selection criterion (\ref{eq:select:band}), $\hat{h}_{k}$ is the largest bandwidth satisfying (\ref{eq:select:band}). When the event $\mathcal{E}_{u_k\bar{h}}^c$ occurs, we have $\{ \hat{h}_{k} < \bar{h} \}$. This implies that $\bar{h}$ is not the largest bandwidth satisfying (\ref{eq:select:band}), leading to the occurrence of the following event $\mathcal{G}_{u_k}$:
  \begin{equation}\label{eq:event_G}
    \mathcal{G}_{u_k} \triangleq \left\{ \exists h' \in H, h' \leq \bar{h}, \left\| \tilde{D}_{u_kh'}^{-\frac{1}{2}}\left(  a_{u_kh'} - B_{u_kh'}R_{h'\bar{h}}\tilde{B}_{u_k\bar{h}}^{-1}a_{u_k\bar{h}}\right) \right\|_{\infty}  >  C\sqrt{\frac{\log n}{nh'^d}} \right\}.
  \end{equation}
Thus, it suffices to bound the probability of $\mathcal{G}_{u_k}$.

When the event $\mathcal{C}_{\bar{h}}$ holds, we have $\tilde{B}_{u_k\bar{h}} = B_{u_k\bar{h}}$. Using the matrix notations introduced in (\ref{eq:Q_uh}), (\ref{eq:e_uh}), and (\ref{eq:eta_uh}), the vector $a_{u_kh'} - B_{u_kh'}R_{h'\bar{h}}B_{u_k\bar{h}}^{-1}a_{u_k\bar{h}}$ can be expressed as
\begin{equation}\label{eq:dongdu6}
  \begin{split}
  & a_{u_k h'} - B_{u_kh'}R_{h'\bar{h}}B_{u_k\bar{h}}^{-1}a_{u_k\bar{h}}\\
       & =\frac{1}{\sqrt{nh'^d}} Q_{u_kh'} \left( \eta_{u_kh'} + e_{u_kh'} \right)\\
        &\quad- \frac{1}{\sqrt{n\bar{h}^d}}Q_{u_kh'}Q_{u_kh'}^{\top} R_{h'\bar{h}}\left( Q_{u_k\bar{h}}Q_{u_k\bar{h}}^{\top} \right)^{-1}Q_{u_k\bar{h}} \left( \eta_{u_k\bar{h}} + e_{u_k\bar{h}} \right) \\
       & =\frac{1}{\sqrt{nh'^d}} Q_{u_kh'} \left( \eta_{u_kh'} + e_{u_kh'} \right)\\
        &\quad- \frac{1}{\sqrt{nh'^d}}Q_{u_kh'}K_{h'\bar{h}}Q_{u_k\bar{h}}^{\top}\left( Q_{u_k\bar{h}}Q_{u_k\bar{h}}^{\top} \right)^{-1} Q_{u_k\bar{h}} \left( \eta_{u_k\bar{h}} + e_{u_k\bar{h}} \right)\\
       & = \frac{1}{\sqrt{nh'^d}} Q_{u_kh'} K_{h'\bar{h}} \left[ \eta_{u_k\bar{h}} - Q_{u_k\bar{h}}^{\top}\left( Q_{u_k\bar{h}}Q_{u_k\bar{h}}^{\top} \right)^{-1} Q_{u_k\bar{h}} \eta_{u_k\bar{h}} \right]\\
       &\quad + \frac{1}{\sqrt{nh'^d}} Q_{u_kh'} e_{u_kh'} - \frac{1}{\sqrt{nh'^d}}Q_{u_kh'}K_{h'\bar{h}}Q_{u_k\bar{h}}^{\top}\left( Q_{u_k\bar{h}}Q_{u_k\bar{h}}^{\top} \right)^{-1} Q_{u_k\bar{h}} e_{u_k\bar{h}},
  \end{split}
\end{equation}
where the first equality follows from $B_{uh} = Q_{uh} Q_{uh}^{\top}$ and (\ref{eq:japan1}), the second equality follows from
\begin{equation*}
  \begin{split}
     &  Q_{u_kh'}^{\top}R_{h'\bar{h}} = \frac{1}{\sqrt{nh^{'d}}} \begin{pmatrix}
U^{\top}\left(\frac{X_1 - u_k}{h'}\right)\sqrt{K\left( \frac{X_1-u_k}{h'} \right)} \\
\vdots \\
U^{\top}\left(\frac{X_n - u_k}{h'}\right)\sqrt{K\left( \frac{X_n-u_k}{h'} \right)}
\end{pmatrix} \diag\left[\left(\frac{h^{'|s|}}{\bar{h}^{|s|}}\right)_{0 \leq |s| \leq \ell}\right]\\
& = \frac{1}{\sqrt{nh^{'d}}} \begin{pmatrix}
U^{\top}\left(\frac{X_1 - u_k}{\bar{h}}\right)\sqrt{K\left( \frac{X_1-u_k}{h'} \right)} \\
\vdots \\
U^{\top}\left(\frac{X_n - u_k}{\bar{h}}\right)\sqrt{K\left( \frac{X_n-u_k}{h'} \right)}
\end{pmatrix}\\
& =\frac{\sqrt{n\bar{h}^d}}{\sqrt{nh^{'d}}}  \mathrm{diag}\left[\left(\frac{\sqrt{K\left( \frac{X_i-u_k}{h'} \right)}}{\sqrt{K\left( \frac{X_i-u_k}{\bar{h}} \right)}}\right)_{1 \leq i \leq n} \right] \frac{1}{\sqrt{n\bar{h}^d}} \begin{pmatrix}
U^{\top}\left(\frac{X_1 - u_k}{\bar{h}}\right)\sqrt{K\left( \frac{X_1-u_k}{\bar{h}} \right)} \\
\vdots \\
U^{\top}\left(\frac{X_n - u_k}{\bar{h}}\right)\sqrt{K\left( \frac{X_n-u_k}{\bar{h}} \right)}
\end{pmatrix}\\
& = \frac{\sqrt{n\bar{h}^d}}{\sqrt{nh^{'d}}}  K_{h'\bar{h}}Q_{u_k\bar{h}}^{\top},
  \end{split}
\end{equation*}
and the third equality follows from
\begin{equation*}
  \begin{split}
     K_{h'\bar{h}} \eta_{u_k\bar{h}}& = \mathrm{diag}\left[\left(\frac{\sqrt{K\left( \frac{X_i-u_k}{h'} \right)}}{\sqrt{K\left( \frac{X_i-u_k}{\bar{h}} \right)}}\right)_{1 \leq i \leq n} \right] \begin{pmatrix}
f(X_1)\sqrt{K\left( \frac{X_1-u_k}{h} \right)} \\
\vdots \\
f(X_n)\sqrt{K\left( \frac{X_n-u_k}{h} \right)}
\end{pmatrix}  = \eta_{u_kh'}.
  \end{split}
\end{equation*}
From (\ref{eq:dongdu6}) and the triangle inequality, we obtain
\begin{equation}\label{eq:dongdu7}
  \begin{split}
       & \left\| \tilde{D}_{u_kh'}^{-\frac{1}{2}}\left(  a_{u_kh'} - B_{u_kh'}R_{h'\bar{h}}\tilde{B}_{u_k\bar{h}}^{-1}a_{u_k\bar{h}}\right) \right\|_{\infty} \\
       & \leq \left\| \frac{1}{\sqrt{nh'^d}} \tilde{D}_{u_kh'}^{-\frac{1}{2}} Q_{u_kh'} K_{h'\bar{h}} \left[ \eta_{u_k\bar{h}} - Q_{u_k\bar{h}}^{\top}\left( Q_{u_k\bar{h}}Q_{u_k\bar{h}}^{\top} \right)^{-1} Q_{u_k\bar{h}} \eta_{u_k\bar{h}} \right] \right\|_{\infty}\\
       & + \left\| \frac{1}{\sqrt{nh'^d}} \tilde{D}_{u_kh'}^{-\frac{1}{2}} Q_{u_kh'} e_{u_kh'} - \frac{1}{\sqrt{nh'^d}}\tilde{D}_{u_kh'}^{-\frac{1}{2}}Q_{u_kh'}K_{h'\bar{h}}Q_{u_k\bar{h}}^{\top}\left( Q_{u_k\bar{h}}Q_{u_k\bar{h}}^{\top} \right)^{-1} Q_{u_k\bar{h}} e_{u_k\bar{h}} \right\|_{\infty}
  \end{split}
\end{equation}

We upper bound the first term on the right-hand side of (\ref{eq:dongdu7}). Using the Cauchy–Schwarz inequality, absolute value of the $s$-th element of the vector involved in the first term can be upper bounded by
\begin{equation*}
  \frac{1}{\sqrt{nh'^d}} \left\{ \tilde{D}_{u_kh'}^{-1}(s,s)  \sum_{j=1}^{n} \left[Q_{u_kh'}(s,j)\right]^2 \right\}^{\frac{1}{2}}\left\| K_{h'\bar{h}} \left[ \eta_{u_k\bar{h}} - Q_{u_k\bar{h}}^{\top}\left( Q_{u_k\bar{h}}Q_{u_k\bar{h}}^{\top} \right)^{-1} Q_{u_k\bar{h}} \eta_{u_k\bar{h}} \right] \right\|.
\end{equation*}
Combining the definitions in (\ref{eq:Q_uh}), (\ref{eq:D}), and (\ref{eq:tilde_D}), we know
\begin{equation*}
  \sum_{j=1}^{n} \left[Q_{u_kh'}(s,j)\right]^2 = \frac{1}{nh'^d}\sum_{i=1}^{n} \frac{1}{(s!)^2}\left(\frac{X_i - u_k}{h'} \right)^{2s} K\left( \frac{X_i-u_k}{h'} \right)
\end{equation*}
and
\begin{equation*}
  \tilde{D}_{u_kh'}^{-1}(s,s)  \sum_{j=1}^{n} \left[Q_{u_kh'}(s,j)\right]^2 \leq 1.
\end{equation*}
For another term, recalling the vector defined in (\ref{eq:t_uh}), we have
\begin{equation*}
  \begin{split}
       & \left\| K_{h'\bar{h}} \left[ \eta_{u_k\bar{h}} - Q_{u_k\bar{h}}^{\top}\left( Q_{u_k\bar{h}}Q_{u_k\bar{h}}^{\top} \right)^{-1} Q_{u_k\bar{h}} \eta_{u_k\bar{h}} \right] \right\| \\
       & \leq \left\|   \eta_{u_k\bar{h}} - Q_{u_k\bar{h}}^{\top}\left( Q_{u_k\bar{h}}Q_{u_k\bar{h}}^{\top} \right)^{-1} Q_{u_k\bar{h}} \eta_{u_k\bar{h}} \right\|\\
       & = \left\| \eta_{u_k\bar{h}}- t_{u_k\bar{h}} - Q_{u_k\bar{h}}^{\top} \left(Q_{u_k\bar{h}}Q_{u_k\bar{h}}^{\top}\right)^{-1} Q_{u_k\bar{h}}\left( \eta_{u_k\bar{h}}- t_{u_k\bar{h}}\right) \right\|\\
       & \leq \left\| \eta_{u_k\bar{h}}- t_{u_k\bar{h}} \right\|
  \end{split}
\end{equation*}
where the first inequality follows from that the elements of $K_{h'\bar{h}}$ are less than one due to Assumption~\ref{ass:kernel_2}, the first equality follows from (\ref{eq:ljinv1}), and the last inequality follows from that $Q_{u_k\bar{h}}^{\top} (Q_{u_k\bar{h}}Q_{u_k\bar{h}}^{\top})^{-1} Q_{u_k\bar{h}}$ is an orthogonal projection matrix. Therefore, we have proved that
\begin{equation*}
  \begin{split}
       \left\| \eta_{u_k\bar{h}}- t_{u_k\bar{h}} \right\|& = \sqrt{\sum_{i=1}^{n}\left[f(X_i) - f_{u_k}(X_i)\right]^2K\left( \frac{X_i-u_k}{\bar{h}} \right)}\\
       & \lesssim \sqrt{\bar{h}^{2\beta} n_{u_k\bar{h}} }  \lesssim \bar{h}^{\beta} \sqrt{ n \bar{h}^d},
  \end{split}
\end{equation*}
where the equality follows from the definitions (\ref{eq:eta_uh})--(\ref{eq:t_uh}), the first inequality follows from (\ref{eq:poly2}), and the second inequality follows from Lemma~\ref{lem:C_n_bar}. Combining the above results, we have proved that the first term on the right-hand side of (\ref{eq:dongdu7}) is upper bounded by
\begin{equation*}
  \frac{\sqrt{n \bar{h}^d}}{\sqrt{nh'^d}}  \bar{h}^{\beta} \leq \sqrt{\frac{\log n}{nh'^d}},
\end{equation*}
where the inequality follows from (\ref{eq:oracle_h}).

Thus, it remains to upper bound the probability of the following event:
\begin{equation}\label{eq:ljinv3}
\begin{split}
  &\mathcal{G}_{u_k}' \triangleq \left\{ \exists h' \in H, h' \leq \bar{h},\left\| \frac{1}{\sqrt{nh'^d}} \tilde{D}_{u_kh'}^{-\frac{1}{2}} Q_{u_kh'} e_{u_kh'} \right.\right.\\
   &\qquad\qquad\left.\left.- \frac{1}{\sqrt{nh'^d}}\tilde{D}_{u_kh'}^{-\frac{1}{2}}Q_{u_kh'}K_{h'\bar{h}}Q_{u_k\bar{h}}^{\top}\left( Q_{u_k\bar{h}}Q_{u_k\bar{h}}^{\top} \right)^{-1} Q_{u_k\bar{h}} e_{u_k\bar{h}} \right\|_{\infty}  >  C_1\sqrt{\frac{\log n}{nh'^d}} \right\},
   \end{split}
\end{equation}
where $C_1$ is a constant that depends on the difference between the constant $C$ in (\ref{eq:select:band}) and the leading constant in the upper bound of the first term in (\ref{eq:dongdu7}). Hence, $C_1$ can be chosen sufficiently large if the constant $C$ in (\ref{eq:select:band}) is sufficiently large.

Recalling the definition in (\ref{eq:e_uh}), we know that the elements of $e_{u_kh'}$ are i.i.d. $K_{\max}\sigma$-sub-Gaussian random variables when conditioned on $\mathfrak{X}_n$. Consequently, the $s$-th element of the vector $\frac{1}{\sqrt{nh'^d}} \tilde{D}_{u_kh'}^{-1/2} Q_{u_kh'} e_{u_kh'}$ is a sub-Gaussian variable with parameter
\begin{equation*}
  \frac{1}{\sqrt{nh'^d}} K_{\max}\sigma.
\end{equation*}
Using the Cauchy–Schwarz inequality, we know that the variance of each element in the vector  $\frac{1}{\sqrt{nh'^d}}\tilde{D}_{u_kh'}^{-1/2}Q_{u_kh'}K_{h'\bar{h}}Q_{u_k\bar{h}}^{\top}( Q_{u_k\bar{h}}Q_{u_k\bar{h}}^{\top} )^{-1} Q_{u_k\bar{h}} e_{u_k\bar{h}}$ is upper bounded by
\begin{equation*}
\begin{split}
     &   \frac{1}{nh'^d}\mathbb{E}_{\otimes}\left[ \left\| K_{h'\bar{h}}Q_{u_k\bar{h}}^{\top}\left( Q_{u_k\bar{h}}Q_{u_k\bar{h}}^{\top} \right)^{-1} Q_{u_k\bar{h}} e_{u_k\bar{h}} \right\|^2 | \mathfrak{X}_n \right] \\
     & \leq \frac{1}{nh'^d}\mathbb{E}_{\otimes}\left[ \left\| Q_{u_k\bar{h}}^{\top}\left( Q_{u_k\bar{h}}Q_{u_k\bar{h}}^{\top} \right)^{-1} Q_{u_k\bar{h}} e_{u_k\bar{h}} \right\|^2 | \mathfrak{X}_n \right]\\
     & \leq \frac{N_{\ell,d}K_{\max}^2\sigma^2}{nh'^d}
\end{split}
\end{equation*}
Let $\zeta_{u_kh'}(s)$ denote the $s$-th entry of the vector
$$
\frac{1}{\sqrt{nh'^d}} \tilde{D}_{u_kh'}^{-\frac{1}{2}} Q_{u_kh'} e_{u_kh'}-\frac{1}{\sqrt{nh'^d}}\tilde{D}_{u_kh'}^{-\frac{1}{2}}Q_{u_kh'}K_{h'\bar{h}}Q_{u_k\bar{h}}^{\top}\left( Q_{u_k\bar{h}}Q_{u_k\bar{h}}^{\top} \right)^{-1} Q_{u_k\bar{h}} e_{u_k\bar{h}}.
$$
Each $\zeta_{u_kh'}(s)$ is a sub-Gaussian random variable with parameter
\begin{equation*}
  \frac{(\sqrt{2N_{\ell,d}}+\sqrt{2})}{\sqrt{nh'^d}} K_{\max}\sigma \triangleq \frac{C_2}{\sqrt{nh'^d}}.
\end{equation*}
To upper bound the probability of the event in (\ref{eq:ljinv3}), we only need to bound the probability of the following event:
\begin{equation*}
  \mathcal{G}_{u_k}'' \triangleq\left\{ \exists h' \in H, h' \leq \bar{h}, \exists 0 \leq |s| \leq N_{\ell,d}, \mathrm{ s.t. } |\zeta_{u_kh'}(s)| \geq \frac{C_1 C_2\sqrt{\log n}}{\sqrt{nh'^d}} \right\}.
\end{equation*}

The probability of $\mathcal{G}_{u_k}''$ can be upper bounded as follows:
\begin{equation*}
  \mathbb{P}_{\otimes^n} \left(  \mathcal{G}_{u_k}''  | \mathfrak{X}_n \right) \leq  \mathrm{Card}(H)N_{\ell,d} \exp\left( - \frac{C^2_1}{2} \log n \right) \lesssim (\log n)^2\left(\frac{1}{n}\right)^{\frac{C^2_1}{2}},
\end{equation*}
where the first inequality comes from the tail behavior of sub-Gaussian variables, and the last inequality uses $\mathrm{Card}(H)\lesssim  (\log n)^2$. Combining this with (\ref{eq:exinnv9}), we get
\begin{equation*}
  \mathbb{P}_{\otimes^n} \left( \mathcal{E}_{\bar{h}}^c  | \mathfrak{X}_n \right) \lesssim n^d(\log n)^2\left(\frac{1}{n}\right)^{\frac{C^2_1}{2}} \lesssim (\log n)^2\left(\frac{1}{n}\right)^{\frac{C^2_1}{2}-d} \leq n^{-C'}
\end{equation*}
as long as the constant $C$ in (\ref{eq:select:band}) is chosen to be sufficiently large.

\end{proof}

\subsection{Proof of the upper bound for $1 \leq q < \infty$}\label{eq:meichifan1}

Following a similar technique as in Section~\ref{sec:proof:buxiang2}, we express the adversarial $L_q$-risk of $\hat{f}_{\mathrm{AP}}$ as
\begin{equation}\label{eq:buxian0}
  R_{A,q}(\hat{f}_{\mathrm{AP}}, f) = \mathbb{E}\sup_{X' \in A(X)}\left| \hat{f}_{\mathrm{AP}}(X') - f(X) \right|^q = \mathbb{E}_X \mathbb{E}_{\otimes^n}\sup_{X' \in A(X)}\left| \hat{f}_{\mathrm{AP}}(X') - f(X) \right|^q.
\end{equation}
Since the probability density function of $X$ is bounded above, to construct an upper bound on (\ref{eq:buxian0}), it is sufficient to bound the pointwise adversarial risk. For any given $x \in \Omega$, we apply the triangle inequality and Jensen's inequality to obtain
\begin{equation}\label{eq:buxian1}
\begin{split}
\mathbb{E}_{\otimes^n}\sup_{x' \in A(x)}\left| \hat{f}_{\mathrm{AP}}(x') - f(x) \right|^q &\leq 2^{q-1} \mathbb{E}_{\otimes^n}\sup_{x' \in A(x)}\left| \hat{f}_{\mathrm{AP}}(x') - f(x') \right|^q\\
     &+ 2^{q-1} \sup_{x' \in A(x)}\left| f(x') - f(x) \right|^q.
\end{split}
\end{equation}
Using the result from Lemma~\ref{lem:lip}, we can bound the second term on the right-hand side of (\ref{eq:buxian1}) as follows:
\begin{equation*}\label{eq:buxian2}
  2^{q-1} \sup_{x' \in A(x)}\left| f(x') - f(x) \right|^q \lesssim  r^{q(1 \wedge \beta)}.
\end{equation*}
 Thus, our main goal in the subsequent analysis is to derive an upper bound for the localized sup-norm risk $\mathbb{E}_{\otimes^n}\sup_{x' \in A(x)}| \hat{f}_{\mathrm{AP}}(x') - f(x') |^q$.

Similar to the argument in (\ref{eq:rela1}), there exists an index set $I_x$ such that $A(x) \subseteq \cup_{k \in I_x}\Omega_k$ and $|I_x|\leq ( 2rM +2)^d$. Therefore, the localized sup-norm risk can be upper bounded as follows:
\begin{equation}\label{eq:buxianrela2}
\begin{split}
&\mathbb{E}_{\otimes^n}\sup_{x' \in A(x)}\left| \hat{f}_{\mathrm{AP}}(x') - f(x') \right|^q \leq \mathbb{E}_{\otimes^n}\sup_{k \in I_x}\sup_{x' \in \Omega_k}\left| \hat{f}_{\mathrm{AP}}(x') - f(x') \right|^q \\
     & \lesssim  \mathbb{E}_{\otimes^n}\sup_{k \in I_x}\sup_{x' \in \Omega_k}\left| \tilde{f}_{u_{x'}\hat{h}_{x'}}(x') - f_{u_k}(x') \right|^q + \sup_{k \in I_x}\sup_{x' \in \Omega_k} \left| f_{u_k}(x') - f(x') \right|^q\\
     & \lesssim \mathbb{E}_{\otimes^n}\sup_{k \in I_x}\sup_{x' \in \Omega_k}\left| \tilde{f}_{u_k\hat{h}_k}(x') - f_{u_k}(x') \right|^q + \frac{1}{M^{q\beta}},
\end{split}
\end{equation}
where $f_{u_k}$ is the Taylor polynomial of $f$ of degree $\lfloor \beta \rfloor$ at the point $u_k$ (see the definition of (\ref{eq:taylor_poly})), and $u_k$ is the center of $\Omega_k$. The first inequality follows from $A(x) \subseteq \cup_{k \in I_x}\Omega_k$, the second inequality follows from the definition of $\hat{f}_{\mathrm{AP}}$ in (\ref{eq:buxianliang1}), the triangle inequality, and Jensen's inequality, and the last inequality stems from the definition $u_{x'} = u_k$ and $\hat{h}_{x'} = \hat{h}_{k}$ when $x' \in \Omega_k$, as well as the result from Lemma~\ref{lem:taylor}.

Our main goal is now to upper bound $\mathbb{E}_{\otimes^n}\sup_{k \in I_x}\sup_{x' \in \Omega_k}| \tilde{f}_{u_k\hat{h}_k}(x') - f_{u_k}(x') |^q$. We proceed as follows:
\begin{subequations}
\begin{align}
&\mathbb{E}_{\otimes^n}\sup_{k \in I_x}\sup_{x' \in \Omega_k}\left| \tilde{f}_{u_k\hat{h}_k}(x') - f_{u_k}(x') \right|^q \nonumber\\
& = \mathbb{E}_{\otimes^n}\sup_{k \in I_x}\sup_{x' \in \Omega_k} \left| \tilde{f}_{u_k\hat{h}_k}(x') - \tilde{f}_{u_k\bar{h}}(x') + \tilde{f}_{u_k\bar{h}}(x') - f_{u_k}(x') \right|^q \nonumber \\
&\lesssim \mathbb{E}_{\otimes^n}\sup_{k \in I_x}\sup_{x' \in \Omega_k} \left| \tilde{f}_{u_k\hat{h}_k}(x') - \tilde{f}_{u_k\bar{h}}(x') \right|^q \label{eqn:line-3} \\
&+ \mathbb{E}_{\otimes^n}\sup_{k \in I_x}\sup_{x' \in \Omega_k} \left| \tilde{f}_{u_k\bar{h}}(x') - f_{u_k}(x') \right|^q, \label{eqn:line-4}
\end{align}
\end{subequations}
where $\bar{h}$ is defined in (\ref{eq:oracle_h}).

\subsubsection{Upper bounding~(\ref{eqn:line-3})}

Recalling the events $\mathcal{C}_{\bar{h}}$ and $\mathcal{E}_{u_k\bar{h}}$ defined in Lemmas~\ref{lem:C_n_bar}--\ref{lem:E_n}, we decompose the term (\ref{eqn:line-3}) as follows:
\begin{subequations}
\begin{align}
&\mathbb{E}_{\otimes^n}\sup_{k \in I_x}\sup_{x' \in \Omega_k} \left| \tilde{f}_{u_k\hat{h}_k}(x') - \tilde{f}_{u_k\bar{h}}(x') \right|^q \nonumber\\
& = \mathbb{E}_{\otimes^n}\sup_{k \in I_x}\sup_{x' \in \Omega_k} \left| \tilde{f}_{u_k\hat{h}_k}(x') - \tilde{f}_{u_k\bar{h}}(x') \right|^q 1_{\{ \mathcal{C}_{\bar{h}} \cap \mathcal{E}_{u_k\bar{h}} \}} \label{eqn:line-5}\\
&+ \mathbb{E}_{\otimes^n}\sup_{k \in I_x}\sup_{x' \in \Omega_k} \left| \tilde{f}_{u_k\hat{h}_k}(x') - \tilde{f}_{u_k\bar{h}}(x') \right|^q 1_{\{ \mathcal{C}_{\bar{h}} \cap \mathcal{E}_{u_k\bar{h}}^c \}} \label{eqn:line-6} \\
&+ \mathbb{E}_{\otimes^n}\sup_{k \in I_x}\sup_{x' \in \Omega_k} \left| \tilde{f}_{u_k\hat{h}_k}(x') - \tilde{f}_{u_k\bar{h}}(x') \right|^q 1_{\{ \mathcal{C}_{\bar{h}}^{c} \}}. \label{eqn:line-7}
\end{align}
\end{subequations}
Note that $\mathcal{C}_{\bar{h}}$ is an event on $\mathfrak{X}_n$, while the event $\mathcal{E}_{u_k\bar{h}}$ depends on both the design data and the response vector.

\underline{\textsc{Upper bounding}~(\ref{eqn:line-5}).} Let us now state several consequences when the events $\mathcal{C}_{\bar{h}}$ and $\mathcal{E}_{u_k\bar{h}}$ hold simultaneously. By Lemma~\ref{lem:C_n_bar}, we know that $n_{u_k \bar{h}} >0$, $\tilde{B}_{u_k\bar{h}} = B_{u_k\bar{h}}$, and $\tilde{D}_{u_k\bar{h}} = D_{u_k\bar{h}}$. Furthermore, by the definition in (\ref{eq:xiongxiong2}), we have $ \bar{h} \leq \hat{h}_k$ and $n_{u_k \hat{h}_k} \geq n_{u_k \bar{h}} >0$. Since $\bar{h}, \hat{h}_k \in H$ and $\bar{h} \leq \hat{h}_k$, the definition of $\hat{h}_k$ in (\ref{eq:select:band}) implies
\begin{equation}\label{eq:jiaquan2}
  \left\| D_{u_k\bar{h}}^{-\frac{1}{2}}\left(  a_{u_k \bar{h}} - B_{u_k\bar{h}}R_{\bar{h}\hat{h}_k}\tilde{B}_{u_k \hat{h}_k}^{-1}a_{u_k \hat{h}_k}\right) \right\|_{\infty}  \leq  \frac{C\sqrt{\log n}}{\sqrt{n\bar{h}^d}}.
\end{equation}
Thus, when $\mathcal{C}_{\bar{h}} \cap \mathcal{E}_{u_k\bar{h}}$ holds, for any $x' \in  \Omega_k$, we have
\begin{equation}\label{eq:shabinv1}
  \begin{split}
     &\left|\tilde{f}_{u_k\hat{h}_k}(x') - \tilde{f}_{u_k \bar{h}}(x') \right|= \left|\tilde{f}_{u_k\hat{h}_k}(x') - \hat{f}_{u_k \bar{h}}(x') \right| \\
       & = \left| U^{\top}\left(\frac{x' - u_k}{\hat{h}_k}\right)\tilde{B}_{u_k \hat{h}_k}^{-1}a_{u_k \hat{h}_k} - U^{\top}\left(\frac{x' - u_k}{\bar{h}}\right)B_{u_k \bar{h}}^{-1}a_{u_k\bar{h}}\right|\\
       & = \left| \sum_{0 \leq |s| \leq \ell}\frac{1}{s!}\left( \frac{x' - u_k}{\hat{h}_k} \right)^s \left( \tilde{B}_{u_k \hat{h}_k}^{-1}a_{u_k \hat{h}_k} \right)_s - \sum_{0 \leq |s| \leq \ell}\frac{1}{s!}\left( \frac{x' - u_k}{\bar{h}} \right)^s \left( B_{u_k \bar{h}}^{-1}a_{u_k \bar{h}} \right)_s \right|\\
       & = \left| \sum_{0 \leq |s| \leq \ell}\frac{1}{s!}\frac{\left( x' - u_k \right)^s}{{\bar{h}}^{|s|}} \left[ \frac{{\bar{h}}^{|s|}}{\hat{h}_k^{|s|}}\left( \tilde{B}_{u_k \hat{h}_k}^{-1}a_{u_k \hat{h}_k} \right)_s -  \left( B_{u_k \bar{h}}^{-1}a_{u_k \bar{h}} \right)_s \right] \right|\\
       & \leq \sum_{0 \leq |s| \leq \ell}\frac{1}{s!}\frac{\left|\left( x' - u_k \right)^s\right|}{{\bar{h}}^{|s|}} \left| \frac{{\bar{h}}^{|s|}}{\hat{h}_k^{|s|}}\left( \tilde{B}_{u_k \hat{h}_k}^{-1}a_{u_k \hat{h}_k} \right)_s -  \left( B_{u_k \bar{h}}^{-1}a_{u_k \bar{h}} \right)_s \right|\\
       & \leq \sum_{0 \leq |s| \leq \ell}\frac{1}{s!}\frac{1}{(2M\bar{h})^{|s|}} \left| \frac{{\bar{h}}^{|s|}}{\hat{h}_k^{|s|}}\left( \tilde{B}_{u_k \hat{h}_k}^{-1}a_{u_k \hat{h}_k} \right)_s -  \left( B_{u_k \bar{h}}^{-1}a_{u_k \bar{h}} \right)_s \right|,
  \end{split}
\end{equation}
where the first equality follows from (\ref{eq:lpe_modified}) and the fact that $\tilde{B}_{u_k\bar{h}} = B_{u_k\bar{h}}$, the second equality follows from (\ref{eq:lpe_modified}) and $n_{u_k \hat{h}_k}  >0$, and the last inequality follows from $x' \in \Omega_k$.

Recalling the definition of the diagonal matrix $R_{\bar{h}\hat{h}_k}$ in (\ref{eq:jiaquan3}), we obtain the following bound for any $0 \leq |s| \leq \ell$:
\begin{equation}\label{eq:shabinv2}
  \begin{split}
  &\left| \frac{{\bar{h}}^{|s|}}{\hat{h}_k^{|s|}}\left( \tilde{B}_{u_k \hat{h}_k}^{-1}a_{u_k \hat{h}_k} \right)_s -  \left( B_{u_k \bar{h}}^{-1}a_{u_k \bar{h}} \right)_s \right|\leq \left\| R_{\bar{h}\hat{h}_k}\tilde{B}_{u_k \hat{h}_k}^{-1}a_{u_k \hat{h}_k} - B_{u_k \bar{h}}^{-1}a_{u_k \bar{h}} \right\|_{\infty}\\
       & = \left\| D_{u_k\bar{h}}^{-\frac{1}{2}} \left( D_{u_k\bar{h}}^{-\frac{1}{2}} B_{u_k\bar{h}} D_{u_k\bar{h}}^{-\frac{1}{2}} \right)^{-1}  D_{u_k\bar{h}}^{-\frac{1}{2}}\left(B_{u_k\bar{h}}R_{\bar{h}\hat{h}_k}\tilde{B}_{u_k \hat{h}_k}^{-1}a_{u_k \hat{h}_k}  -  a_{u_k \bar{h}}  \right) \right\|_{\infty}\\
       & \leq \left\| D_{u_k\bar{h}}^{-\frac{1}{2}} \left( D_{u_k\bar{h}}^{-\frac{1}{2}} B_{u_k\bar{h}} D_{u_k\bar{h}}^{-\frac{1}{2}} \right)^{-1}  D_{u_k\bar{h}}^{-\frac{1}{2}}\left(B_{u_k\bar{h}}R_{\bar{h}\hat{h}_k}\tilde{B}_{u_k \hat{h}_k}^{-1}a_{u_k \hat{h}_k}  -  a_{u_k \bar{h}}  \right) \right\|\\
       & \leq \lambda_{\min}^{-1}\left( B_{u_k\bar{h}} \right) \lambda_{\max}^{\frac{1}{2}}\left(D_{u_k\bar{h}} \right) \left\|  D_{u_k\bar{h}}^{-\frac{1}{2}}\left(B_{u_k\bar{h}}R_{\bar{h}\hat{h}_k}\tilde{B}_{u_k \hat{h}_k}^{-1}a_{u_k \hat{h}_k}  -  a_{u_k \bar{h}}  \right)  \right\|\\
       & \leq \lambda_{\min}^{-1}\left( B_{u_k\bar{h}} \right)\lambda_{\max}^{\frac{1}{2}}\left(D_{u_k\bar{h}} \right) \sqrt{N_{\ell,d}} \left\|D_{u_k\bar{h}}^{-\frac{1}{2}}\left(B_{u_k\bar{h}}R_{\bar{h}\hat{h}_k}\tilde{B}_{u_k \hat{h}_k}^{-1}a_{u_k \hat{h}_k}  -  a_{u_k \bar{h}}  \right) \right\|_{\infty}\\
       & \lesssim  \frac{\sqrt{\log n}}{\sqrt{n\bar{h}^d}},
  \end{split}
\end{equation}
where the second and fourth inequalities follow from $\|\cdot \|_{\infty} \leq \|\cdot \| \leq \sqrt{N_{\ell,d}}\|\cdot \|_{\infty} $, and the last inequality is derived using Lemma~\ref{lem:C_n_bar} and equation (\ref{eq:jiaquan2}).

By combining equations (\ref{eq:shabinv1})--(\ref{eq:shabinv2}) with the condition $M \asymp n$ and noting that $M\bar{h} \gtrsim 1$, we conclude that
\begin{equation*}
  \sup_{x' \in \Omega_k} \left| \tilde{f}_{u_k\hat{h}_k}(x') - \tilde{f}_{u_k\bar{h}}(x') \right|^q 1_{\{ \mathcal{C}_{\bar{h}} \cap \mathcal{E}_{u_k\bar{h}} \}} \lesssim \left(\frac{\log n}{n\bar{h}^d}\right)^{\frac{q}{2}}.
\end{equation*}
Therefore, we obtain
\begin{equation}\label{eq:jieduan1}
  \mathbb{E}_{\otimes^n}\sup_{k \in I_x}\sup_{x' \in \Omega_k} \left| \tilde{f}_{u_k\hat{h}_k}(x') - \tilde{f}_{u_k\bar{h}}(x') \right|^q 1_{\{ \mathcal{C}_{\bar{h}} \cap \mathcal{E}_{u_k\bar{h}} \}}  \lesssim \left(\frac{\log n}{n\bar{h}^d}\right)^{\frac{q}{2}}.
\end{equation}

\underline{\textsc{Upper bounding}~(\ref{eqn:line-6}).} The following upper bound holds:
\begin{equation}\label{eq:shabinv6}
  \begin{split}
    & \mathbb{E}_{\otimes^n}\sup_{k \in I_x}\sup_{x' \in \Omega_k} \left| \tilde{f}_{u_k\hat{h}_k}(x') - \tilde{f}_{u_k\bar{h}}(x') \right|^q 1_{\{ \mathcal{C}_{\bar{h}} \cap \mathcal{E}_{u_k\bar{h}}^c \}}   \\
       & \lesssim \mathbb{E}_{\otimes^n} \left(\sup_{k \in I_x}\sup_{x' \in \Omega_k} \left| \tilde{f}_{u_k\hat{h}_k}(x') \right|\right)^q 1_{\{ \mathcal{C}_{\bar{h}} \cap \mathcal{E}_{u_k\bar{h}}^c \}} + \mathbb{E}_{\otimes^n} \left(\sup_{k \in I_x}\sup_{x' \in \Omega_k} \left| \tilde{f}_{u_k\bar{h}}(x') \right|\right)^q 1_{\{ \mathcal{C}_{\bar{h}} \cap \mathcal{E}_{u_k\bar{h}}^c \}}.
  \end{split}
\end{equation}
We first focus on the second term. By the Cauchy-Schwarz inequality, we have
\begin{equation}\label{eq:shabinv3}
  \begin{split}
       & \mathbb{E}_{\otimes^n} \left(\sup_{k \in I_x}\sup_{x' \in \Omega_k} \left| \tilde{f}_{u_k\bar{h}}(x') \right|\right)^q 1_{\{ \mathcal{C}_{\bar{h}} \cap\mathcal{E}_{u_k\bar{h}}^c \}} \\
       & = \mathbb{E}_{\otimes^n} \left[ \mathbb{E}_{\otimes^n} \left(\sup_{k \in I_x}\sup_{x' \in \Omega_k} \left| \tilde{f}_{u_k\bar{h}}(x') \right|\right)^q 1_{\{ \mathcal{C}_{\bar{h}} \cap\mathcal{E}_{u_k\bar{h}}^c \}} \mid \mathfrak{X}_n \right]\\
       & \leq \mathbb{E}_{\otimes^n} \left\{ \left[\mathbb{E}_{\otimes^n} \left(\sup_{k \in I_x}\sup_{x' \in \Omega_k} \left| \tilde{f}_{u_k\bar{h}}(x') \right|\right)^{2q}\mid \mathfrak{X}_n \right]^{\frac{1}{2}} \left[ \mathbb{P}_{\otimes^n}\left( \mathcal{C}_{\bar{h}} \cap\mathcal{E}_{u_k\bar{h}}^c \mid \mathfrak{X}_n \right) \right]^{\frac{1}{2}} \right\}
  \end{split}
\end{equation}
For the conditional probability $\mathbb{P}_{\otimes^n}( \mathcal{C}_{\bar{h}} \cap\mathcal{E}_{u_k\bar{h}}^c \mid \mathfrak{X}_n )$, Lemma~\ref{lem:E_n} implies that it can be bounded by a vanishing polynomial probability with arbitrarily large order. Applying similar techniques as in Section~\ref{sec:exinnv11}, we obtain
\begin{equation}\label{eq:shabinv4}
  \begin{split}
       \mathbb{E}_{\otimes^n}\left[ \left(\sup_{k \in I_x}\sup_{x' \in \Omega_k} \left| \tilde{f}_{u_k\bar{h}}(x') \right|\right)^{2q}\mid \mathfrak{X}_n \right]& \lesssim \frac{1}{(\tau \bar{h}^d)^{2q}} + \frac{\mathbb{E}_{\otimes^n}\left(\| \tilde{\gamma}_{nx}\|_{\infty}^{2q}\mid \mathfrak{X}_n\right)}{(\tau n\bar{h}^d)^{q}(\tau \bar{h}^d)^{2q}}\\
       & \lesssim \frac{1}{(\tau \bar{h}^d)^{2q}} + \frac{\left[\log (rM +1) \right]^{q}}{(\tau n\bar{h}^d)^{q}(\tau \bar{h}^d)^{2q}}\\
       & \lesssim \left( n\sqrt{\log n} \right)^{2q},
  \end{split}
\end{equation}
where the last inequality follows from $\tau = \tau(\bar{h}) = (n\bar{h}^d)^{-1}$ and $M \asymp n$. Combining (\ref{eq:shabinv3})--(\ref{eq:shabinv4}) with Lemma~\ref{lem:E_n}, we obtain
\begin{equation}\label{eq:shabinv5}
  \mathbb{E}_{\otimes^n} \left(\sup_{k \in I_x}\sup_{x' \in \Omega_k} \left| \tilde{f}_{u_k\bar{h}}(x') \right|\right)^q 1_{\{ \mathcal{C}_{\bar{h}} \cap\mathcal{E}_{u_k\bar{h}}^c \}} \lesssim \left( n\sqrt{\log n} \right)^{q} n^{-\frac{C'}{2}},
\end{equation}
which is upper bounded by $n^{-q/2}$ when $C'$ is chosen larger than $3q/2$. By applying similar reasoning, we find that the first term in (\ref{eq:shabinv6}) can also be bounded by $n^{-q/2}$. Therefore, we conclude that
\begin{equation}\label{eq:shabinv7}
  \mathbb{E}_{\otimes^n}\sup_{k \in I_x}\sup_{x' \in \Omega_k} \left| \tilde{f}_{u_k\hat{h}_k}(x') - \tilde{f}_{u_k\bar{h}}(x') \right|^q 1_{\{ \mathcal{C}_{\bar{h}} \cap \mathcal{E}_{u_k\bar{h}}^c \}} \lesssim n^{-q/2}
\end{equation}
when the constant $C$ in (\ref{eq:select:band}) is sufficiently large.

\underline{\textsc{Upper bounding}~(\ref{eqn:line-7}).} By combining the techniques used in Section~\ref{sec:exinnv11} with the results in Lemma~\ref{lem:C_n_bar}, we obtain
\begin{equation}\label{eq:shabinv8}
  \mathbb{E}_{\otimes^n}\sup_{k \in I_x}\sup_{x' \in \Omega_k} \left| \tilde{f}_{u_k\hat{h}_k}(x') - \tilde{f}_{u_k\bar{h}}(x') \right|^q 1_{\{ \mathcal{C}_{\bar{h}}^{c} \}} \lesssim n^{-q/2}.
\end{equation}

\underline{\textsc{Constructing the final bound}.} Combining the upper bounds from (\ref{eq:jieduan1}), (\ref{eq:shabinv7}), and (\ref{eq:shabinv8}), we have
\begin{equation}\label{eq:shabinv9}
  \mathbb{E}_{\otimes^n}\sup_{k \in I_x}\sup_{x' \in \Omega_k} \left| \tilde{f}_{u_k\hat{h}_k}(x') - \tilde{f}_{u_k\bar{h}}(x') \right|^q \lesssim \left(\frac{\log n}{n\bar{h}^d}\right)^{\frac{q}{2}} + \left(  \frac{1}{n} \right)^{\frac{q}{2}} \lesssim \left(\frac{\log n}{n\bar{h}^d}\right)^{\frac{q}{2}},
\end{equation}
which completes the proof for the term (\ref{eqn:line-3}).

\subsubsection{Upper bounding~(\ref{eqn:line-4})}

We can upper bound the term (\ref{eqn:line-4}) using techniques similar to those used for bounding terms (\ref{eqn:line-1})--(\ref{eqn:line-2}). Specifically, we find that
\begin{equation}\label{eq:shabinv10}
\begin{split}
   \mathbb{E}_{\otimes^n}\sup_{k \in I_x}\sup_{x' \in \Omega_k} \left| \tilde{f}_{u_k\bar{h}}(x') - f_{u_k}(x') \right|^q & \lesssim \bar{h}^{q\beta} + \left[\frac{\log (rM +1) }{n\bar{h}^d}\right]^{\frac{q}{2}} \\
     & \lesssim \bar{h}^{q\beta} + \left(\frac{\log n }{n\bar{h}^d}\right)^{\frac{q}{2}}.
\end{split}
\end{equation}

\subsubsection{Completing the proof}

Combining the upper bounds from (\ref{eq:buxian1}), (\ref{eq:buxianrela2}), (\ref{eq:shabinv9}), and (\ref{eq:shabinv10}), we obtain
\begin{equation*}
\begin{split}
   R_{A,q}(\hat{f}_{\mathrm{AP}}, f) & \lesssim r^{q(1 \wedge \beta)} + \frac{1}{M^{q\beta}} + \bar{h}^{q\beta} + \left(\frac{\log n }{n\bar{h}^d}\right)^{\frac{q}{2}} \\
     & \lesssim r^{q(1 \wedge \beta)}  + \frac{1}{n^{q\beta}} + \left(\frac{\log n }{n\bar{h}^d}\right)^{\frac{q}{2}},\\
\end{split}
\end{equation*}
where the second inequality uses $M \asymp n$ and (\ref{eq:oracle_h}). Using an analysis similar to that in Lemma~\ref{lem:C_n_bar}, we can show that
$$
n\bar{h}^d \gtrsim n^{\frac{2\beta}{2\beta+d}}(\log n)^{\frac{d}{2\beta+d}}.
$$
Therefore, we conclude that
\begin{equation*}
\begin{split}
   R_{A,q}(\hat{f}_{\mathrm{AP}}, f) & \lesssim r^{q(1 \wedge \beta)} + \left[\frac{\log n}{n^{\frac{2\beta}{2\beta+d}}(\log n)^{\frac{d}{2\beta+d}}} \right]^{\frac{q}{2}} \\
     & =  r^{q(1 \wedge \beta)} + \left[\frac{(\log n)^{\frac{2\beta}{2\beta+d}}}{n^{\frac{2\beta}{2\beta+d}}} \right]^{\frac{q}{2}} = r^{q(1 \wedge \beta)} + \left(\frac{n}{\log n}\right)^{-\frac{q\beta}{2\beta+d}},
\end{split}
\end{equation*}
which establishes the desired result.

\subsection{Proof of the upper bound for $q = \infty$}

To prove the adaptive upper bound for the adversarial sup-norm risk, we employ a similar technique to that used in Section~\ref{sec:proof_upper_2} and Section~\ref{eq:meichifan1}. Given the overlap in methods, we omit the repetitive details here for brevity.

\section{Proof of Theorem~\ref{theo:lower}}\label{sec:proof_lower}

\subsection{Proof of the lower bound for $1 \leq q < \infty$}

We complete the proof of Lemma~\ref{lem:mirror} in the main text by verifying that the base regression function defined in (\ref{eq:f_00}) lies within the H\"{o}lder class $\mathcal{F}(\beta, C_{\beta}/2)$ for $0 < \beta \leq 1$. Additionally, we construct specific examples of $\mathcal{S}$ as described in Lemma~\ref{lem:bump} and provide a brief proof of Lemma~\ref{eq:dayuechen1} for completeness.

\begin{proof}[Proof of Lemma~\ref{lem:mirror}]

We verify that the function $f_0$ defined in (\ref{eq:f_00}) belongs to $\mathcal{F}(\beta, C_{\beta}/2)$ by showing that for any $x,z\in [0,1]^d$,
\begin{equation}\label{eq:laj1}
  \begin{split}
     \left| f_0(x) - f_0(z) \right| &  \leq \frac{C_{\beta}}{2} \| x - z \|^{\beta}.
  \end{split}
\end{equation}
We will prove this inequality for the case $0 \leq x_1, z_1 \leq 4r$. Similar arguments can be applied to other cases. When $0 \leq x_1,z_1 < 2r$, both $f_0(x)$ and $f_0(z)$ are equal to zero by the definition of $f_0$. Thus, $|f_0(x) - f_0(z)| = 0$, which satisfies (\ref{eq:laj1}). When $0 \leq x_1 < 2r \leq z_1 < 4r$, we have $f_0(x) = 0$, and
\begin{equation*}
  \begin{split}
     \left| f_0(x) - f_0(z) \right| & = \left| f_0(z) \right| = \frac{C_{\beta}}{2}\left( 8r \right)^{\beta
     }\left| \frac{z_1 - 2r}{8r} \right|^{\beta}\\
       & \leq \frac{C_{\beta}}{2}\left| z_1 - x_1\right|^{\beta} \leq \frac{C_{\beta}}{2} \| x - z \|^{\beta},
  \end{split}
\end{equation*}
where the second equality follows from the definition of $f_0$, and the first inequality follows from $x_1 < 2r \leq z_1$. When $2r \leq x_1,z_1 \leq 4r$,
\begin{equation*}
\begin{split}
   \left| f_0(x) - f_0(z) \right| & = \frac{C_{\beta}}{2}\left( 8r \right)^{\beta
     }\left| \left( \frac{x_1 - 2r}{8r} \right)^{\beta} - \left( \frac{z_1 - 2r}{8r} \right)^{\beta} \right| \\
     & \leq \frac{C_{\beta}}{2}\left( 8r \right)^{\beta
     }\left|\frac{ x_1 - z_1}{8r}\right|^{\beta}  \leq \frac{C_{\beta}}{2} \| x - z \|^{\beta},
\end{split}
\end{equation*}
where the first inequality follows from the fact that $x^{\beta}$ is a $(\beta,1)$-H\"{o}lder smooth function. Thus we have proved $f_0 \in \mathcal{F}(\beta, C_{\beta}/2)$.

\end{proof}

\begin{proof}[Proof of Lemma~\ref{lem:bump}]

The proof of this lemma relies on constructing specific local packing sets within the H\"{o}lder class $\mathcal{F}(\beta,C_{\beta})$. These constructions date back to the original work of \cite{Kolmogorov1959} and have been extensively explored in subsequent studies, including \cite{Ibragimov1982, Stone1982global, Korostelev1993, Yang1999Information}, and many others.

Choose a function $\phi :\mathbb{R}^d \to \mathbb{R}$ that satisfies the following conditions:
  \begin{description}
    \item[(i)] The support of $\phi$ is contained within $[-1/2,1/2]^d$.
    \item[(ii)] $ 0 < \sup_{x \in [-1/2,1/2]^d}|\phi(x)| < C$, where $C$ is a positive constant.
    \item[(iii)] $\phi\in \mathcal{F}(\beta, C_{\beta}/2)$.
  \end{description}
  A specific example of such a function $\phi$ can be found on pages 92–93 of \cite{Tsybakov2009book}.

Next, we partition the domain $\Omega = [0,1]^d$ into $L_n^d$ hypercubes $C_l, l = 1, \ldots, L_n^d$, each with a side length of $1/L_n$ and centers ${a_l}$, where $l = (l_1,\ldots,l_d)^{\top} \in \{1,\ldots,L_n\}^d$, $a_l = \frac{l - (1/2,\ldots,1/2)^{\top}}{L_n}$, and $C_l = \prod_{i=1}^d [(l_i-1)/L_n,l_i/L_n)$. For each $l = 1, \ldots, L_n^d$, define
\begin{equation}\label{eq:touxiang1}
\phi_l(x) \triangleq \frac{\phi\left[L_n(x - a_l)\right]}{L_n^{\beta}},
\end{equation}
where $L_n$ will be determined later.

Let $\bar{W} \triangleq \{w = (w_1, \ldots, w_{L_n^d}) : w_i \in \{1, -1\} \} = \{1, -1\}^{L_n^d}$ denote the set of vectors composed of components equal to either $1$ or $-1$. For each $w \in \bar{W}$, define the function
\begin{equation*}
f_{w} = f_0 + \phi_{w} = f_0 + \sum_{l \in \{1, \ldots, L_n\}^d} w_{l} \phi_{l},
\end{equation*}
where $f_0 \in \mathcal{F}(\beta, C_{\beta}/2)$ is the base regression function defined in Section~\ref{sec:base}, and $\phi_{l}(x)$ is defined by equation (\ref{eq:touxiang1}). We define the subset of functions as $\bar{\mathcal{S}} \triangleq \{f_{w} : w \in \bar{W} \}$.

We first prove that $\bar{\mathcal{S}} \subseteq \mathcal{F}(\beta, C_{\beta})$. For any multi-index $s$ with $|s| = \lfloor \beta \rfloor$ and for any $x, z \in [0,1]^d$, we have
\begin{equation*}
\begin{split} \left| D^s f_{w}(x) - D^s f_{w}(z) \right| & \leq \left| D^s f_0(x) - D^s f_0(z)\right| + \left| D^s \phi_{w}(x) - D^s \phi_{w}(z)\right| \\
& \leq \frac{C_{\beta}}{2} \left\| x - z \right\|^{\beta - \lfloor \beta \rfloor} + \left| D^s \phi_{w}(x) - D^s \phi_{w}(z)\right| \\ & \leq C_{\beta} \left\| x - z \right\|^{\beta - \lfloor \beta \rfloor},
\end{split}
\end{equation*}
where the second inequality follows from the definition of $f_0$, and the final inequality is derived using the approach outlined on pages 40–41 of \cite{Gyorfi2002book}. Therefore, we conclude that $\bar{\mathcal{S}} \subseteq \mathcal{F}(\beta, C_{\beta})$.

Then, we construct a subset $\mathcal{S}$ of $\bar{\mathcal{S}}$ which has nearly the same number of elements as $\bar{\mathcal{S}}$ but such that each pair of functions in $\mathcal{S}$ is far apart. The technique for constructing $\mathcal{S}$ is based on the Varshamov-Gilbert lemma (see, e.g., Lemma 2.9 of \cite{Tsybakov2009book}). It claims that when $L_n^d\geq 8$, there exists a subset $W=\{w^{(1)},\ldots,w^{(J_n)} \}$ of $\bar{W}$ such that $J_n \geq 2^{L_n^d/8}$, and
\begin{equation}\label{eq:vg_lemma}
  \rho\left(w^{(j)},w^{(k)}\right)\geq \frac{L_n^d}{8}, \quad \forall\; 0\leq j < k \leq J_n,
\end{equation}
where $\rho(\cdot,\cdot)$ denotes the Hamming distance. Define $\mathcal{S}\triangleq \{f_j \triangleq f_{w^{(j)}}:w^{(j)} \in W  \}$. We have that $\mathrm{Card}(\mathcal{S}) = J_n \geq 2^{L_n^d/8}$.

We now verify the conditions~(i)--(iv) stated in Lemma~\ref{lem:bump}. The main task is to explicitly provide the forms of the quantities $c_1$, $c_2$, $J_n$, and $C$. We begin by verifying the conditions (ii)--(iii). For any $f_{j}\in \mathcal{S}$, we have the following expression for the $L_2$-norm:
\begin{equation}\label{eq:haizai1}
  \begin{split}
     \left\| f_{ j} - f_0 \right\|^2_2 & = \left\| \sum_{ l\in\{1,\ldots,L_n \}^d}w_{ l}^{(j)}\phi_{ l}\right\|^2_2 = \sum_{l\in\{1,\ldots,L_n \}^d} \int_{C_{l}}\left|\phi_{l}\right(x)|^2 dx \\
       & = \frac{L_n^d}{L_n^{2\beta+d}}\int\left|\phi( x)\right|^2 d x = \frac{\int\left|\phi( x)\right|^2 d x}{L_n^{2\beta}} \triangleq  \epsilon_n^2.
  \end{split}
\end{equation}
Next, for the $L_q$-norm with $q \geq 1$, we have
\begin{equation*}\label{eq:daishu1}
  \begin{split}
     \left\| f_{ j} - f_0 \right\|^q_q & = \left\| \sum_{ l\in\{1,\ldots,L_n \}^d}w_{ l}^{(j)}\phi_{ l}\right\|^q_q = \sum_{l\in\{1,\ldots,L_n \}^d} \int_{C_{l}}\left|\phi_{l}\right(x)|^q dx \\
       & = \frac{L_n^d}{L_n^{q\beta+d}}\int\left|\phi( x)\right|^q d x = \frac{\int\left|\phi( x)\right|^q d x}{L_n^{q\beta}}\\
       & = \frac{\int\left|\phi( x)\right|^q d x}{\left( \int \left| \phi(x) \right|^2 dx \right)^{\frac{q}{2}}}\epsilon_n^q \triangleq c_2^q \epsilon_n^q.
  \end{split}
\end{equation*}
For the condition (i), we consider two functions $f_{j},f_{k} \in \mathcal{S}$. We have the following expression for the $L_q$-norm of their difference:
\begin{equation*}\label{eq:eps_n_packing}
\begin{split}
\left\| f_{j} - f_{k} \right\|_q^q &=  \left\| \sum_{l\in\{1,\ldots,L_n \}^d}\left(w_{l}^{(j)}-w_{l}^{(k)}\right)\phi_{l}(x) \right\|_q^q\\
   &= \sum_{l\in\{1,\ldots,L_n \}^d} \left[\left|w_{l}^{(j)}-w_{l}^{(k)}\right|^q\int_{C_{l}}\left|\phi_{l}(x)\right|^q dx\right]\\
   &=\frac{2^q\int\left|\phi(x)\right|^q d x}{L_n^{q\beta+d}}\rho\left( w^{(j)}, w^{(k)}\right)\geq \frac{2^{q}\int\left|\phi( x)\right|^q d x}{8L_n^{q\beta}}\\
    & = \frac{2^{q-3}\int\left|\phi( x)\right|^q d x}{\left( \int \left| \phi(x) \right|^2 dx \right)^{\frac{q}{2}}}\epsilon_n^q\triangleq c_1^q\epsilon_n^q,
\end{split}
\end{equation*}
where the second equality follows from the orthogonality of the functions $\phi_{ l}$, the third equality comes from the definition of $\phi_{ l}$, the last inequality is obtained using the bound from (\ref{eq:vg_lemma}), and the last step is a direct consequence of the definition of $\epsilon_n$ in (\ref{eq:haizai1}). Therefore, the conditions (i)--(iii) are verified, with explicit expressions for $c_1$, $c_2$, and $\epsilon_n$.

What remains is to choose an appropriate $L_n$ such that the condition (iv) holds. To this end, we define
\begin{equation}\label{eq:qusinv1}
  L_n \triangleq \left\{\left( 16\int\left|\phi( x)\right|^2 d x \right)^{\frac{1}{2\beta+d}} n^{\frac{1}{2\beta+d}} \right\}\vee\left( 32\log 2 \right)^{\frac{1}{d}} = C_1n^{\frac{1}{2\beta+d}} \vee C_2.
\end{equation}
From the relationship between $\epsilon_n$ and $L_n$ in (\ref{eq:haizai1}), we can express $\epsilon_n$ as
$$
\epsilon_n = C_3 n^{-\frac{\beta}{2\beta+d}} \wedge C_4.
$$
Using the definition of $\epsilon_n$, we then obtain the following inequality:
\begin{equation*}
\begin{split}
   2n\epsilon_n^2 & = \frac{2n\int\left|\phi( x)\right|^2 d x}{L_n^{2\beta}} \leq  \frac{L_n^d}{8} \leq \log J_n,
\end{split}
\end{equation*}
where the equality follows from (\ref{eq:haizai1}), and the first inequality follows from (\ref{eq:qusinv1}). Additionally, we observe that
\begin{equation*}
  4\log 2 \leq \frac{L_n^d}{8} \leq \log J_n.
\end{equation*}
Thus, we have
\begin{equation*}
  \log J_n \geq n\epsilon_n^2 + 2\log 2,
\end{equation*}
which verifies the condition (iv).

\end{proof}

\begin{proof}[Proof of Lemma~\ref{eq:dayuechen1}]

We begin by summarizing the key properties of $\mathcal{S}$ presented in Lemma~\ref{lem:bump}, which will be used in the subsequent analysis. According to Lemma~\ref{lem:bump}, $\mathcal{S}$ is defined as $\{f_1,\ldots,f_{J_{n}} \}$, where for $1 \leq j \neq k \leq J_{n}$, the following properties hold:
\begin{itemize}
  \item $\left\| f_j - f_k \right\|_q >  c_1\epsilon_{n}$,
  \item $\left\| f_j - f_0 \right\|_2 \leq \epsilon_{n}$,
  \item $\log J_{n} \geq n \epsilon_{n}^2 + 2\log 2$.
\end{itemize}
Here, $\epsilon_{n}$ is defined as $C(n^{-\frac{\beta}{2\beta+d}} \wedge 1)$. Our proof leverages a general reduction scheme and the local metric entropy method as discussed in \cite{Yang1999Information} and \cite{Wang2014Adaptive} (see also Chapter 2 of \cite{Tsybakov2009book} and Chapter~15 of \cite{Wainwright2019high} for related discussions). For clarity, we denote the joint distribution of $n$ i.i.d. samples with the regression function $f_j$ as $\mathbb{P}_{\otimes^n|j}$.

Let $\zeta$ denote a random variable with uniform distribution on the set $\{1,\ldots, J_{n}\}$. For a realization $\zeta=j$, let us generate a sample $Z_n = \{ (X_1, Y_1), \ldots , (X_n, Y_n) \} \sim \mathbb{P}_{\otimes^n|j}$. Let $\mathbb{Q}_{\otimes^n}$ denote the joint distribution of the random pair $(Z_n, \zeta)$. Obviously, the marginal distribution of $Z_n$ in $\mathbb{Q}_{\otimes^n}$ is given by $\bar{\mathbb{Q}}_{\otimes^n}=\frac{1}{J_{n}}\sum_{j=1}^{J_{n}}\mathbb{P}_{\otimes^n|j}$. Let the uniform distribution on $\{1,\ldots, J_{n}\}$ denoted by $\mathbb{Q}_{\zeta}$ Therefore, the maximum probability involved in Lemma~\ref{eq:dayuechen1} can be lower bounded by
\begin{equation}\label{eq:dada1}
  \begin{split}
     \max_{f_j \in \mathcal{S}}\mathbb{P}_{\otimes^n} \left( \|\hat{f}- f_{j} \|_q \geq \frac{c_1 \epsilon_{n}}{2}  \right) & = \max_{j \in \{1,\ldots, J_{n}\}}\mathbb{P}_{\otimes^n|j} \left( \|\hat{f}- f_{j} \|_q \geq \frac{c_1 \epsilon_{n}}{2}  \right)\\
       & \geq \frac{1}{J_{n}}\sum_{j=1}^{J_{n}}\mathbb{P}_{\otimes^n|j} \left( \|\hat{f}- f_{j} \|_q \geq \frac{c_1 \epsilon_{n}}{2}  \right)\\
       & = \mathbb{Q}_{\otimes^n} \left( \|\hat{f}- f_{\zeta} \|_q \geq \frac{c_1 \epsilon_{n}}{2}  \right).
  \end{split}
\end{equation}

Let us construct a testing procedure $\psi(Z_n)=\arg\min_{j\in \{1,\ldots,J_{n}\}}\|\hat{f}- f_{j} \|_q$. If the true regression function is $f_j$, then when the event $\|\hat{f}- f_{j} \|_q < \frac{c_1 \epsilon_{n}}{2}$ occurs, we must have $\psi(Z_n) = j$ since the functions in $\{f_1,\ldots,f_{J_{n}} \}$ are $c_1 \epsilon_{n}$ apart in $L_q$-distance. Therefore, continuing with (\ref{eq:dada1}), we see
\begin{equation*}
  \begin{split}
      \max_{f_j \in \mathcal{S}}\mathbb{P}_{\otimes^n} \left( \|\hat{f}- f_{j} \|_q \geq \frac{c_1 \epsilon_{n}}{2}  \right) & \geq \frac{1}{J_{n}}\sum_{j=1}^{J_{n}}\mathbb{P}_{\otimes^n|j} \left( \|\hat{f}- f_{j} \|_q \geq \frac{c_1 \epsilon_{n}}{2}  \right)\\
        &\geq \frac{1}{J_{n}}\sum_{j=1}^{J_{n}}\mathbb{P}_{\otimes^n|j} \left[ \psi(Z_n) \neq j  \right]\\
        & = \mathbb{Q}_{\otimes^n} \left[ \psi(Z_n) \neq \zeta \right].
  \end{split}
\end{equation*}
Based on Fano's inequality \citep{Fano1961}, we have the following lower bound:
\begin{equation*}
  \begin{split}
    & \inf_{\hat{f}}\max_{f_j \in \mathcal{S}}\mathbb{P}_{\otimes^n} \left( \|\hat{f}- f_{j} \|_q \geq \frac{c_1 \epsilon_{n}}{2}  \right) \geq \inf_{\hat{f}}\mathbb{Q}_{\otimes^n} \left[ \psi(Z_n) \neq \zeta \right] \\
     & \geq \inf_{\psi}\mathbb{Q}_{\otimes^n} \left[ \psi(Z_n) \neq \zeta \right] \geq  1-\frac{I(Z_n,\zeta)+\log 2}{\log J_{n}},\\
  \end{split}
\end{equation*}
where $I(Z_n,\zeta)$ denotes the mutual information between the random variables $Z_n, \zeta$, which is defined by the Kullback–Leibler divergence between $\mathbb{Q}_{\otimes^n}$ and $\bar{\mathbb{Q}}_{\otimes^n}\mathbb{Q}_{\zeta}$. This mutual information is upper bounded by
\begin{equation*}
\begin{split}
   I(Z_n,\zeta) & = \mathrm{KL}\left( \mathbb{Q}_{\otimes^n} || \bar{\mathbb{Q}}_{\otimes^n}\mathbb{Q}_{\zeta} \right)=\frac{1}{J_{n}}\sum_{j=1}^{J_{n}}\mathrm{KL}(\mathbb{P}_{\otimes^n|j}\| \bar{\mathbb{Q}}_{\otimes^n}) \\
     & \leq \frac{1}{J_{n}}\sum_{j=1}^{J_{n}}\mathrm{KL}(\mathbb{P}_{\otimes^n|j}\| \mathbb{P}_{\otimes^n|0}) \leq \max_{j=1,\ldots,J_{n}}\mathrm{KL}(\mathbb{P}_{\otimes^n|j}\| \mathbb{P}_{\otimes^n|0}),
\end{split}
\end{equation*}
where the first inequality follows from the fact that the Bayes mixture $\bar{\mathbb{Q}}_{\otimes^n}$ minimizes the average Kullback–Leibler divergence.

Let $\mathbb{P}_{X^n} $ denote the joint distribution of $(X_1,\ldots,X_n)$, and let $\mathbb{P}_{Y^n|f_j(X^n)}$ be the conditional distribution of $(Y_1,\ldots,Y_n)$ given $(X_1,\ldots,X_n)$ under the regression function $f_j$. In the Gaussian regression case, we have
\begin{equation*}
  \begin{split}
     \mathrm{KL}(\mathbb{P}_{\otimes^n|j}\| \mathbb{P}_{\otimes^n|0}) & =\frac{1}{2} \mathbb{E}_{X^n} \sum_{i=1}^{n}\left[ f_j(X_i) - f_0(X_i) \right]^2  = \frac{n}{2} \left\| f_j - f_0 \right\|_2^2 \leq \frac{n\epsilon_{n}^2}{2}.
  \end{split}
\end{equation*}
Therefore, we obtain the bound $I(Z_n,\zeta) \leq n\epsilon_{n}^2/2$. This leads to the following lower bound:
\begin{equation*}
  \begin{split}
     \inf_{\hat{f}}\max_{f_j \in \mathcal{S}}\mathbb{P}_{\otimes^n} \left( \|\hat{f}- f_{j} \|_q \geq \frac{c_1 \epsilon_{n}}{2}  \right) & \geq  1-\frac{I(Z_n,\zeta)+\log 2}{\log J_{n}}   \geq \frac{1}{2},
  \end{split}
\end{equation*}
which completes the proof.

\end{proof}

\subsection{Proof of the lower bound for $q = \infty$}

The proof of the lower bound for the adversarial sup-norm risk can be derived directly from the results in \cite{Peng2024}. Specifically, given any $A \in \mathcal{T}(r)$, fixing an adversarial set $A'(x) \subseteq A(x)$ as the additive perturbation set considered in \cite{Peng2024}, the results presented in Section 4.1 of \cite{Peng2024} then imply the lower bound stated in Lemma~\ref{theo:lower}.

\newpage
\bibliographystyle{apalike}
\bibliography{bibliography}

\end{document}